\newtheorem{example}{Example}
\newtheorem{alg}{Algorithm}
\newtheorem{remark}{Remark}
\DeclareMathOperator*{\argmin}{arg\,min}
\newcommand{\diver}{\text{div}}
\newcommand{\dd}{\boldsymbol{d}}
\newcommand{\pp}{\boldsymbol{p}}
\newcommand{\phI}[0]{ \boldsymbol{\phi} }
\newcommand{\newBlue}[1]{ { #1}}
    \newtheorem{lemma}{Lemma} 
    \newtheorem{theorem}{Theorem} 
    \newtheorem{proposition}{Proposition} 
    \newtheorem{definition}{Definition}  
   \newenvironment{proof}[1][Proof.]{\begin{trivlist}  
       \item[\hskip \labelsep {\bfseries #1}]}{\end{trivlist}}
\date{}
\title{A Total Fractional-Order Variation Model for Image Restoration with Non-homogeneous Boundary Conditions and its  Numerical Solution\thanks{This work was
supported by the UK EPSRC grant (number  EP/K036939/1) and the National Natural Science Foundation of China (NSFC
Project number 11301447).}}
\author{Jianping Zhang\thanks{Department of
              Mathematical Sciences, The University of Liverpool, United Kingdom and School of Mathematics and Computational Science, Xiangtan University, Xiangtan, Hunan 411105,
P. R. China.} and Ke Chen\thanks{Centre for Mathematical Imaging Techniques and Department of
              Mathematical Sciences, The University of Liverpool, United Kingdom. Email  k.chen@liv.ac.uk, Web
              {\bf www.liv.ac.uk/cmit}}  }
\begin{document}
\maketitle

\begin{abstract}
To overcome the weakness of a total variation based model for image restoration,
various high order (typically second order) regularization models have been proposed and studied recently. In this paper we analyze and test a
fractional-order derivative based total $\alpha$-order variation model,
which can outperform the currently popular high order regularization models.
There exist several previous works using total $\alpha$-order variations for image restoration; however first no analysis  is done yet and second
all tested formulations, differing from each other, utilize the zero Dirichlet boundary conditions
which are not realistic (while non-zero boundary conditions violate
definitions of fractional-order derivatives).

This paper first reviews some results of fractional-order derivatives
and then analyzes the theoretical properties
 of the proposed total $\alpha$-order variational  model rigourously.
 It then develops four algorithms for solving the variational problem, one based on the variational Split-Bregman idea and
 three based on direct solution of the discretise-optimization problem.
{Numerical experiments show that, in terms of  restoration  quality and solution efficiency, the proposed model can  produce
 highly competitive results, for smooth images, to two established high order models: the mean curvature and the total generalized variation}.

{\bf Keywords}.
Fractional-order derivatives; Total $\alpha$-order variation; PDE; Image Denoising;
 Image inverse problems; Optimization methods.
\ \
AMS.
62H35, 65N22, 65N55, 74G65, 74G75
\end{abstract}

\pagestyle{myheadings}
\thispagestyle{plain}
\markboth{Total $\alpha$-Order Variation Image Denoising}{Jianping Zhang and Ke Chen}

\section{Introduction}
This paper presents a fractional-order derivative based regularizer for variational image restoration. It may be used for other imaging models such as image registration.
Denote an observed image by $z=z(x)$, $x\in \Omega \subset \mathbb{R}^d$
where $\Omega$ is the bounded domain of the image with $d$ space dimension and has a Lipschitz boundary.
Here we consider $d=2$ and mainly the image denoising problem with an additive noise i.e. assume
$z=u + \eta_0$ with $\eta_0$ representing some unknown Gaussian noise of mean zero and deviation $\sigma$,
but most results are applicable to $d>2$ and other noise models.

\subsection{Image inverse problem}
Restoring the unknown $u$ (without any restrictions) from $z$   is an inverse problem.
According to the maximum likelihood principle \cite{SGeman1984}, most image processing problems involve solving the least-square problem
\begin{equation}\label{eq1.1}
\min_u\int_\Omega|P(u)-z|^2dx,
\end{equation}
measuring the fidelity to $z$.
For example, $P(u)=u$ for image denoising, $P(u)$ takes the template image $T(x+u(x))$ (and $z=R(x)$ for a reference image) for image registration, and $P(u)=P_{\Omega_1}(u(x))$ for image inpainting with $\Omega_1\subset
\Omega$ the subdomain with missing data.

 The problem (\ref{eq1.1}) is in general  ill-posed due to non-uniqueness, therefore how to effectively solve it becomes a fundamental task in image sciences.
The most popular idea is to regularize it so that the resulting well-posed problem admits an unique solution. The classical   regularization technique  by Tikhonov et. al \cite{ANTikhonov1977} is to add a smoothing regularization term into the energy functional to derive the following minimization problem
\begin{equation}
\min_u\int_\Omega|P(u)-z|^2dx+\lambda\int_\Omega|\nabla u|^2dx,
\end{equation}\label{eq1.2}
where $\lambda$ is a positive constant. This model cannot preserve image edges, though it is simple to use. The total variation (TV) model by Rudin-Osher-Fatemi \cite{LIRudin1992} or the ROF model
 \begin{equation}
\min_u\int_\Omega|\nabla u|dx,\qquad
\int_\Omega|P(u)-z|^2dx=\sigma^2,\ \ P(u)=u
\end{equation}\label{eq1.3}
is widely used, where $\sigma$ is an estimate of the error $\eta_0$ between the noisy image $z$ and the true data $u$. The ROF model preserves the image edges by seeking solutions of piecewise constant functions in the space of bounded variation functions
(BV). A variety of methods based on the TV regularization  have been developed to deal with the imaging problems such as image restoration \cite{RAcar1994,VAgarwal2007,JFAujol2009,JPZhang2012}, image registration \cite{LHomke2007,CFrohn-Schauf2008,TPock2007}, image decomposition \cite{SOsher2003,JBGarnett2007,VDuval2009}, image inpainting \cite{WGuo2007,PGetreuer2012,SGhate2012,TFChan2005} and image segmentation \cite{XBresson2007,MUnger2008}.
Restoring smooth images in some applications where edges are not the main features presents difficulties for the ROF model  as it can yield the so-called blocky (staircase) effects. Another disadvantage of the model is to the loss of image contrasts \cite{MLysaker2003}. It should be remarked that
the recently popular method by the iterative regularization technique
\cite{osher2005iterative} can reduce the staircasing effect and improve on the image contrast to some extent; besides it provides a fast implementation.

\subsection{High-order regularization}
To remedy the above mentioned two drawbacks (stairicasing and contrast),
 two types of alternative regularizer to the TV have been proposed in the literature. The first type introduces higher order regularization into image variational models \cite{TFChan2000,JShen2003,LAmbrosio2003,MLysaker2003,GSteidl2005,GDal-Maso2009,KBredies2010,WZhu2012}. The mean curvature-based variation denoising model was studied in \cite{MLysaker2003,MLysaker2004,Brito10,WZhu2012} where the regularized solution $u$ is obtained by solving the fourth-order Euler-Lagrangian equation.  Bredies et al. \cite{KBredies2010}   proposed the total generalized variation regularizer involving a linear combination of higher-order derivatives and the TV of $u$ to model the image denoising while
Chang et al. \cite{QSChang2009} considered a nonlinear combination of
regularizer based on
first and second order derivatives.
  For image inpainting, a high order regularization based on Euler's elastica of $u$ is used  in \cite{JShen2003}. Similarly the Euler's elastica energy \cite{JModersitzki2004} and mean curvature \cite{BFischer2003,Chum11} are also proposed to transform the template image $T(x+u)$ to map the
   reference image $R(x)$ in image registration; see also \cite{BFischer2002,HKostler2008}. The above mentioned high order regularization methods are effective but due to high nonlinearity efficient numerical solution is a major issue.

The second type introduces  fractional-order derivatives,
which are widely studied in other research subjects beyond image processing
\cite{Agrawal2002,AAlmeida2009,RAlmeida2011,AAtangana2013,ZYZhu2003}, into regularization of images.
For example, Bai and Feng   \cite{JBai2007} introduced first fractional-order
derivative into anisotropic diffusion equations for noise removal
\begin{equation}
\frac{\partial u}{\partial t}=-D^{\alpha *}_ x(c(|D^{\alpha}u|)D^{\alpha}u)-D^{\alpha *}_ y(c(|D^{\alpha}u|)D^{\alpha}u),
\end{equation}
where $c(\cdot)$ denotes the divergence parameter and $D_x^{\alpha *}$ denotes the adjoint operator of $D_x^{\alpha}$, which may be viewed as a generalization of the Perona-Malik model.
Although the  above equation
can be related to the Euler-Lagrange equations of an energy functional with the fractional derivative of the image intensity, generalizing
 commonly used PDE models, the energy minimization models are not studied as such. The discrete Fourier transform
is used to implement the numerical algorithm assuming a periodic  input image at its borders \cite{JBai2007}.
 See also \cite{PGuidotti2009a,PGuidotti2009b,MJanev2011,PDRomero2008} for more motivations and studies based on the above diffusion equation.
Chen et al.   \cite{DLChen2013a,DLChen2013b,DLChen2013c} considered the fractional-order TV-$L^2$ image denoising model
\begin{equation}\label{TV_alpha}
\min\limits_{u } \Big\{E(u):=
\int_\Omega \sqrt{(D^\alpha_xu)^2 + (D^\alpha_yu)^2}d\Omega
+\frac{\lambda}{2} \|u-f\|^2_2\Big\}
\end{equation}
and numerically obtained improved denoising results over
the Perona-Malik and ROF models; however no analysis was given. There, they converted this primal formulation into a dual problem for the new dual variable ${\mathbf p}=(p_1,p_2)$ by $u=f-\mbox{div}^\alpha{\mathbf p}/\lambda$
and
used a dual algorithm using the gradient descent idea
similar to the Chambolle method \cite{AChambolle2004} for the ROF.
In \cite{JZhang2012}, the authors proposed a {\it discrete} optimization framework for image denoising problem where the fractional order derivative is used to model the regularization term,
\begin{equation}
\min_u\Big\{ \sum_{i,j=1}^N|(\nabla^\alpha u)_{i,j}|+1/2\sum_{j=0}^L2^{-2js_j}|[\lambda(f-u)_j]|^2,\;1\leq\alpha\leq 2,0\leq s_j\leq 1\Big\},
\end{equation}
which  is solved by an alternating projection algorithm.
See also \cite{Ray13}.

These works have reflected good performance of the fractional order derivative in achieving a satisfactory compromise such as no stair-casing and in preserving important fine-scale features such as edges and textures.
These encouraging results motivated us to investigate this new model more closely.

There have been several other
works involving discrete forms of
 an $\alpha$-order derivative proposed to tackle
image registration problem \cite{RVerdu2009,AMelbourne2012} and image inpainting problem \cite{YZhang2012}. Comparing with the first type of high order models \cite{KBredies2010,BFischer2003,Chum11},
a fractional order model (type two) is less nonlinear and hence is more amenable to developing fast iterative solvers.
Clearly there is strong evidence to suggest that fractional order derivatives
may be effective regularizer for imaging applications.
There is an urgent need to establish a rigorous theory for   the total $\alpha$-order variation based variational model so that further
 applications to image inverse problems can be considered in a systematic way.

\subsection{Our contributions}
  This work is substantially different from previous studies. We mainly focus on the continuous
  total $\alpha$ variation-based model, instead of discrete formulation, and its analysis and associated numerical algorithms.

  Our contributions are four-fold:
  \begin{itemize}
  \item We analyze  properties of the total $\alpha$-order variation laying foundations for applications to image inverse problems as a regulariser;
  \item We give a new method for treating non-zero Dirichlet boundary conditions which represents a generalization of similar results that existed only in 1D to 2D;
  \item We establish  the convexity, the solvability  and a solution theory for the total $\alpha$-order variation model to make it more advantageous to work with than high order and  non-convex counterparts (such as a mean curvature based model) which are not gradient based and do  not have much known theory on their solutions;
  \item  We propose and test four solution algorithms (respectively Split-Bregman based, forward-backward algorithm, Nesterov accelerated method and  fast iterative shrinkage-thresholding algorithm -- FISTA) to solve the underlying
      total $\alpha$-order variation  model. We also compare with related models.
  \end{itemize}
 Our work is hoped to motivate further studies and facilitate  future applications of $\alpha$-order variation based regularizer to other imaging problems in the community.

 The rest of the paper is organized as follows. Section 2 reviews the definitions and basic properties of the fractional order derivative.  Section 3  first defines  the total $\alpha$-order variation   and the space
of functions of fractional-order bounded variations. In this space, it then analyzes the the existence and the uniqueness of the solution  of the total $\alpha$-order variation based model for denoising.
In Section 4, a boundary condition regularization method for treating nonzero Dirichlet boundary conditions is proposed to effectively employ and compute the fractional order derivatives of an image.
Section 5 first discusses the discretization of the fractional order derivatives
by a finite difference method and presents a Split-Bregman scheme for effective solution.
 Section 6 takes the alternative discretise-optimize solution approach and
  develops three optimization-based algorithms (Forward-backward algorithm, Nesterov accelerated method and FISTA)  to solve the image denoising model.
Experimental results are shown in Section 7, and the paper is concluded with a summary in Section 8.


\section{Review of fractional-order derivatives}
This section reviews definitions and simple properties of a fractional order derivative which   has a long history and may be considered as a generalization of the integer order derivatives.
Three popular definitions  to be reviewed are the Riemann-Liouville (R-L),  the Gr{\"u}nwald-Letnikov (G-L) and the Caputo definitions \cite{KSmiller1993,Oldham2006,IPodlubny1999}.

 In this paper, a fraction  $\alpha\in {\mathbb R}^+$ is assumed to lie in
 between two integers $n-1, n$ i.e. $0\leq \ell=n-1<\alpha<n$ and
  a fractional $\alpha$-order differentiation at point $x\in {\mathbb R}$ is denoted by the
  differential operator $D_{[a,x]}^\alpha$, where $a$ and $x$ are the bounds of the integral over a 1D computational domain.
   Undoubtedly, the gamma function is very important for the study of fractional derivative,
 which is defined by the integral \cite{IPodlubny1999}
\[\Gamma(z)=\int_0^\infty e^{-t}t^{z-1}\;dt.\]
One of the basic properties  is that  $\Gamma(z+1)=z\Gamma(z)$ and hence
$\Gamma(n)=n!$.
 Before introducing formal definitions, we review the following informative but classical example:
 \begin{example}
The Abel's integral equation, with,
\begin{equation}\label{example1-1}
\frac{1}{\Gamma(\alpha)}\int_0^x\frac{\psi(\tau)}{(x-\tau)^{1-\alpha}}d\tau=f(x),
\quad x>0
\end{equation}
has the solution  given by the well-known formula
\begin{equation}\label{example1-2}
\psi(x)=\frac{1}{\Gamma(1-\alpha)}\frac{d}{dx}\int_0^x\frac{f(\tau)}{(x-\tau)^\alpha}
d\tau,\quad x>0.
\end{equation}
\end{example}
This example helps to understand the formal definitions of fractional derivatives.
In fact for $0<\alpha<1$, equation (\ref{example1-1}) taking on the form
$I^{\alpha}_{[0,x]}\psi(x):=D^{-\alpha}_{[0,x]}\psi(x)=f(x)$
is called the fractional $\alpha$-order left R-L {\bf integral} of $\psi(x)$,
 and equation (\ref{example1-2}) taking on the form $D^{\alpha}_{[0,x]}f(x)=\psi(x)$ is defined as the fractional $\alpha$-order left R-L {\bf derivative} of $f(x)$.
 As operators, under suitable conditions \cite{samko1993fractional}, we have $D^{-\alpha}_{[0,x]}D^{\alpha}_{[0,x]}=I$ where $I$ denotes the identity operator.

The first definition of a general order $\alpha$ derivative  is the left sided R-L derivative
\begin{equation}
D^\alpha_{[a,x]}f(x)=\frac{1}{\Gamma(n-\alpha)}
\left(\frac{d}{dx}\right)^n
\int^x_a\frac{f(\tau)d\tau}{(x-\tau)^{\alpha-n+1}}.
\end{equation}
Subsequently the right-sided R-L
and the Riesz-R-L (central) fractional derivative are respectively given by
\[ \qquad
D^\alpha_{[x,b]}f(x)=\frac{(-1)^n}{\Gamma(n-\alpha)}\left(\frac{d}{dx}\right)^n
\int^b_x\frac{f(\tau)d\tau}{(\tau-x)^{\alpha-n+1}}\]
and\[
 D^\alpha_{[a,b]}f(x)=\frac{1}{2}\left(D^\alpha_{[a,x]}f(x)+(-1)^nD^\alpha_{[x,b]}f(x)\right).\]
The second  definition is the G-L left-sided derivative denoted by
\begin{equation}
{}^GD^\alpha_{[a,x]}f(x)=\lim\limits_{h\rightarrow 0}\frac{1}{h^\alpha}\sum_{j=0}^{[\frac{x-a}{h}]}(-1)^j\Bigg(\begin{array}{c}\alpha \\ j \\ \end{array}\Bigg)
f(x-jh),\;\;\;\Bigg(\begin{array}{c}\alpha \\ j \\ \end{array}\Bigg)=\frac{\alpha(\alpha-1)\dots(\alpha-j+1)}{j!},
\end{equation}
which resembles the definition for an integer order derivative, where $[\vartheta]$ is the integer such that
$\vartheta-1<[\vartheta]\leq \vartheta$.
The third definition is the Caputo order $\alpha$ derivative defined by
\begin{equation}\label{Caputo}
{}^{C}D^\alpha_{[a,x]}f(x)=\frac{1}{\Gamma(n-\alpha)}
  \int^x_a\frac{f^{(n)}(\tau)d\tau}{(x-\tau)^{\alpha-n+1}}.
\end{equation}
where $f^{(n)}$ denotes the $n^{th}$-order derivative of function $f(x)$.
The right sided derivative and the Riesz-Caputo fractional derivative are similarly defined by
\[ 
{}^{C}D^\alpha_{[x,b]}f(x)=\frac{(-1)^n}{\Gamma(n-\alpha)}
\int^b_x\frac{f^{(n)}(\tau)d\tau}{(\tau-x)^{\alpha-n+1}},\quad
{}^{C}D^\alpha_{[a,b]}f(x)=\frac{1}{2}\left({}^{C}D^\alpha_{[a,x]}f(x)
+(-1)^n{}^{C}D^\alpha_{[x,b]}f(x)\right).
\] 
When $\alpha=n-1$ is an integer, the above left-sided R-L definition reduces to the usual
 definition for a derivative.
{
One notes that when a function is $n-1$ times continuously differentiable and
its $n$th derivative is integrable,
the fractional derivatives by the above definitions are equivalent subject to homogeneous boundary conditions \cite{IPodlubny1999}. However we do not require such equivalence for our image function $u$; refer to Remark \ref{rem2} later.}

Fractional derivatives have many interesting properties ---  below we review
a few that are useful to this work.

 {\bf Linearity}. For a fractional derivative $D^\alpha_{[a,x]}$ by any of the above three definitions, then one has
\[D^\alpha_{[a,x]}(p\; f(x)+q\; g(x))=p\; D^\alpha_{[a,x]}f(x)+q\; D^\alpha_{[a,x]}g(x),\]
 for any fractional differentiable functions $f(x), g(x)$ and $p,q\in \mathbb{R}$.
This property will be shortly used to prove convexity and to derive the first order optimal conditions.

{\bf Zero fractional derivatives}.  An integer derivative of an image $u$ at pixels of flat regions may be close to zero but the left R-L derivative of a constant intensity function is not zero. One advantage of minimizing a R-L derivative instead of the total variation (image gradients) could be a non-constant solution.
It would be interesting to know the kind of functions that have zero $\alpha$-order derivatives.
\begin{lemma}[Singularity]\label{lemma_sigularity}
Assume that $D^\alpha_{[a,x]}$ is one of the above three fractional-order derivative operators. For any non-integer $\alpha>0$ and $x>a$, there exists a non-constant value function $f(\tau)$ in $(a,x]$ such that
\(D^\alpha_{[a,x]} f(x)=0.\)
\end{lemma}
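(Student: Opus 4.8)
The plan is to produce, separately for each of the three operators, an explicit non-constant function on $(a,x]$ whose $\alpha$-order derivative at the upper endpoint $x$ is zero. The single tool needed is the fractional power rule
\[
D^\alpha_{[a,x]}(x-a)^\beta=\frac{\Gamma(\beta+1)}{\Gamma(\beta-\alpha+1)}\,(x-a)^{\beta-\alpha},\qquad \beta>-1 ,
\]
which one obtains by substituting $(\tau-a)^\beta$ into the R--L definition, recognising the $\tau$-integral as a Beta integral, and differentiating $n$ times; the same identity holds for the G--L operator as well (see below). Recall from the setup that $n-1<\alpha<n$ with $\alpha$ non-integer and $n\ge 1$.

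For the left R--L derivative I would take $f(\tau)=(\tau-a)^{\alpha-1}$. Since $\alpha>0$ we have $\beta:=\alpha-1>-1$, so the power rule applies and the prefactor is $\Gamma(\alpha)/\Gamma(0)=0$ because $\Gamma$ has a pole at the origin while $\Gamma(\alpha)$ is finite; hence $D^\alpha_{[a,x]}(\tau-a)^{\alpha-1}\equiv 0$, and this $f$ is non-constant precisely because $\alpha\neq 1$. In fact each of $(\tau-a)^{\alpha-j}$, $j=1,\dots,n$, lies in the kernel, giving an $n$-parameter family. For the G--L operator the same closed form can be verified directly from the defining series using the identity $\sum_{j=0}^{N}(-1)^j\binom{\alpha}{j}=(-1)^N\binom{\alpha-1}{N}$ together with $\binom{\alpha-1}{N}\sim N^{-\alpha}/\Gamma(1-\alpha)$, or simply quoted from \cite{IPodlubny1999}; so $f(\tau)=(\tau-a)^{\alpha-1}$ serves there too.

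The R--L kernel elements have non-vanishing Caputo derivative, so for ${}^{C}D^\alpha_{[a,x]}$ I would argue differently. If $\alpha>1$ (so $n\ge 2$), any non-constant polynomial of degree at most $n-1$, e.g.\ $f(\tau)=\tau-a$, has $f^{(n)}\equiv 0$ and therefore ${}^{C}D^\alpha_{[a,x]}f\equiv 0$ by (\ref{Caputo}). If $0<\alpha<1$ (so $n=1$), then ${}^{C}D^\alpha_{[a,x]}f(x)=\frac{1}{\Gamma(1-\alpha)}\int_a^x (x-\tau)^{-\alpha}f'(\tau)\,d\tau$ with a strictly positive weight, so no non-constant $f$ can make it vanish identically; instead I fix $x$ and force a sign change: the map $c\mapsto\int_a^x (x-\tau)^{-\alpha}(\tau-c)\,d\tau$ is affine (hence continuous) in $c$, strictly positive at $c=a$ and strictly negative at $c=x$, so it vanishes at some $c^\ast\in(a,x)$, and then $f(\tau)=\tfrac12(\tau-c^\ast)^2$ is non-constant with ${}^{C}D^\alpha_{[a,x]}f(x)=0$. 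This device in fact covers every $n\ge 1$ at once: choose any $g\in C[a,x]$ with $g\not\equiv 0$ and $\int_a^x (x-\tau)^{n-\alpha-1}g(\tau)\,d\tau=0$ (possible since the weight is positive and integrable), and let $f$ be an $n$-fold antiderivative of $g$.

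The one place that needs care is the G--L case: the equivalence of the three definitions recalled in the text assumes $C^{n-1}$ regularity, which $(\tau-a)^{\alpha-1}$ fails at $\tau=a$ whenever $\alpha<n$, so convergence of the G--L series for this singular power has to be checked directly (or taken from the power-function formula in the references) rather than deduced from the equivalence. I also read the statement pointwise in $x$ — as its wording, which fixes $x$ first and then asks for $f$ on $(a,x]$ with $D^\alpha_{[a,x]}f(x)=0$, suggests — and this is exactly what lets the Caputo case with $\alpha<1$ go through, since there no non-constant function has an identically vanishing Caputo derivative of order below one.
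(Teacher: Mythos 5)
Your proposal is correct, and it follows the same overall strategy as the paper (exhibit explicit kernel elements for each operator), but the details differ in ways worth recording. For the R--L and G--L cases you use the left-sided kernel powers $(\tau-a)^{\alpha-1}$ uniformly for all non-integer $\alpha>0$ via the power rule $\Gamma(\alpha)/\Gamma(0)=0$; this matches the remark the paper appends after its proof ($f(x)=x^{\alpha-k}$, $k=1,\dots,1+\ell$) rather than its enumerated constructions, which instead plug $x$-dependent functions such as $(x-2\tau)(x-\tau)^\alpha$ and $(x-\tau)^{\alpha-1}$ into the integral at the fixed evaluation point. You are also more careful than the paper on the G--L case: the paper invokes ``equivalency'' outright, whereas you correctly note that the equivalence hypotheses fail for the singular power and that the G--L series must be checked directly or quoted. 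The most substantive difference is the Caputo case with $0<\alpha<1$: the paper's choice $f(\tau)=(x-\tau)^{n-1}$ degenerates to the constant function $1$ when $n=1$, so it does not actually witness the lemma's requirement of a \emph{non-constant} $f$; your intermediate-value construction $f(\tau)=\tfrac12(\tau-c^\ast)^2$ with $\int_a^x(x-\tau)^{-\alpha}(\tau-c^\ast)\,d\tau=0$ (and its $n$-fold-antiderivative generalization) fills this genuine gap and is, in that respect, an improvement on the paper's argument.
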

\begin{proof}\label{Riemann-LiouvilleDefinition}
We give explicit constructions.
Here we only consider the R-L and Caputo derivatives; for G-L derivative, we can derive a similar conclusion through their equivalency.
\begin{enumerate}
\item Assume that $0<\alpha<1$, for some $x>0$, if $f(\tau)$ is taken as
\[f(\tau)=(x-2\tau)(x-\tau)^\alpha\]
for any $\tau\in(0,x]$ in Abel's inverse transform (\ref{example1-2}), then $D^\alpha_{[0,x]} f(x)=\psi(x)=0$;
\item Assume that $\alpha>1$, if $f(\tau)$ is taken as
\[f(\tau)=(x-\tau)^{\alpha-1}\]
for any $\tau\in(a,x]$ in $\alpha$-order R-L derivative, then $D^\alpha_{[a,x]} f(x)=0$;

\item Assume that $\alpha>0$ in Caputo derivative definition, if $f(\tau)$ is taken as
\[f(\tau)=(x-\tau)^{n-1}\]
for any $\tau\in(a,x]$ in equation (\ref{Caputo}), then ${}^CD^\alpha_{[a,x]} f(x)=0.$
\end{enumerate}
Actually for any $\alpha>0$, the left R-L $D_{[0,x]}^\alpha f(x)=0$ if $f(x)=x^{\alpha-k}$ for all $k=1,2,\dots,1+\ell$ (note $\ell=[\alpha]=n-1$);
 refer to \cite{hilfer2000applications}.
\end{proof}
\begin{remark}
For our later applications \S\ref{sec_num},
 we take $1<\alpha <2$. Hence we have
the left R-L $D_{[0,x]}^\alpha f(x)=0$ if $f(x)=x^{\alpha-1}$ or $x^{\alpha-2}$
i.e. $f(x)=x^{0.6}$ or $x^{-0.4}$ when $\alpha=1.6$.
For the Caputo derivative, ${}^CD_{[0,x]}^\alpha f(x)=0$ if $f(x)=1$ or $x$.
\end{remark}


%

{\bf Boundary conditions}. For the left R-L derivative $D^\alpha_{[a,x]}f(x)$
of $f(x)$, one assumes that $f(a)=0$ or $f(b)=0$ for the right R-L derivative; otherwise there is a singularity at the end point. So the Riesz R-L
derivative would require $f(a)=f(b)=0$. One solution for nonzero Dirichlet
boundary conditions for $f$ would be to extract off a linear approximation $g(x)$ (that
coincides with $f$ at $x=a,b$) and to consider $D^\alpha_{[a,x]}(f(x)-g(x))$;
however there was no such a method for the 2D case.
In Jumarie's work \cite{GJumarie2006}, a simple alternative is to modify the
R-L derivative to the following
\[D^\alpha_{[a,x]}f(x)=\frac{1}{\Gamma(n-\alpha)}
\left(\frac{d}{dx}\right)^n\int^x_a\frac{f(\tau)-f(a)}
{(x-\tau)^{\alpha-n+1}}d\tau,
\]
also
ensuring that the new fractional derivative of a constant is equal to zero and
removing the singularity at $x=a$ \cite{AAtangana2013}. In Section
\ref{sec_bndy}, we present one method for treating nonzero
Dirichlet boundary conditions in 2D.

\section{The total $\alpha$-order variation and its related model}
This section first studies the properties of the total $\alpha$-order variation, second analyzes a total $\alpha$-order variation based denoising model
and finally presents a numerical algorithm.
For the classical total variation based model, its solution lies in
a suitable space called
 the function space $\text{BV}(\Omega)$ of bounded variation
 \cite{TFChan2005b,OScherzer2009,KBredies2010}. From tests,
 the total fractional-order variation model can preserve both edges and
 smoothness of an image; we anticipate from the former that its solution should lie in a
 space
 similar to the BV space and from the latter that the smoothness is due to the non-local nature of the new regulariser.

 It turns out that for total $\alpha$-order variation using $\alpha$-order
 derivatives,
 a suitable space is
 the space $\text{BV}^\alpha(\Omega)$ of functions of $\alpha$-bounded variation on $\Omega$ which will be defined and studied next.
 The work of this section is motivated by analysis of the total variation (TV) \cite{RAcar1994,GAubert2006, AChambolle2004} and of the total generalized variation (TGV) \cite{KBredies2010}.

In variational regularization methods, integration by parts involves
the space of test functions in addition to the main solution space.
 Before discussing the total $\alpha$-order variation, we  give the following definition:
\begin{definition}[A space of test functions]
 Let $\mathcal{C}^\ell(\Omega,\mathbb{R}^d)$ denote the space of $\ell$-order continuously differentiable functions. Furthermore for any $\mathcal{C}^\ell(\Omega,\mathbb{R}^d)\ni v: \Omega\mapsto \mathbb{R}^d$, if the $(\ell+1){th}$ order derivative $v^{(\ell+1)}$ is integrable and $\frac{\partial^iv(x)}{\partial n^i}|_{\partial\Omega}=0$ for all $i=0,1,\dots,\ell$,   $v$ is   compactly supported continuous-integrable function in $\Omega$. Therefore the $\ell$-compactly supported continuous-integrable function space is denoted by
$\mathscr{C}_0^\ell(\Omega,\mathbb{R}^d)$.
\end{definition}

\begin{definition}[Total $\alpha$-order variation]\label{definition3.2}
Let $K$ denote  the space of
special test functions
\[
\begin{split}
K:=\Big\{\phI\in \mathscr{C}_0^\ell(\Omega,\mathbb{R}^d)\;\Big|\;|\phI(x)|\leq 1 \;\text{for all}\;x\in\Omega\Big\}
\end{split}
\]
where $|\phI|=\sqrt{\sum_{i=1}^d \phi_i^2}$.
Then the total $\alpha$-order variation of $u$ is defined by
\[
\text{TV}^\alpha(u):=\sup\limits_{\phI\in K}\int_\Omega\Big(-u\;\diver^\alpha\; \phI\Big)dx,
\]
 where $\diver^\alpha \phI=\sum_{i=1}^d \frac{\partial^\alpha \phi_i}{\partial x_i^\alpha} $ and $\frac{\partial^\alpha \phi_i}{\partial x_i^\alpha}$ denotes a fractional $\alpha$-order  {derivative} $D^\alpha_{[a,b]}\phi_i$ of $\phi_i$ along $x_i$ direction.
\end{definition}

{We note that $\text{TV}^\alpha(u)$ is the same for any definition of
$\frac{\partial^\alpha \phi_i}{\partial x_i^\alpha}$ because
$\phI$ satisfies the equivalence conditions. However for our applications in the paper, $\frac{\partial^\alpha u}{\partial x_i^\alpha}$ is generally not the same for different fractional derivatives (not even in the distributional sense). }

Based on the $\alpha$-BV semi-norm, the $\alpha$-BV norm is defined by
\begin{equation}
\|u\|_{\text{BV}^\alpha}=\|u\|_{L^1}+\text{TV}^\alpha(u),
\end{equation}
and further the space of functions of $\alpha$-bounded variation on $\Omega$ can be defined by
\begin{equation}\label{BV_alpha}
\text{BV}^\alpha(\Omega):=\Big\{u\in L^1(\Omega)\ \ \big| \ \ \text{TV}^\alpha(u)<+\infty\Big\}.
\end{equation}

\begin{lemma}[Lower semi-continuity]\label{lemma3-2}
Let $\{u^k(x)\}$ be a sequence from $\text{BV}^\alpha(\Omega)$ which converge in $L^1(\Omega)$ to a function $u(x)$. Then
\(\text{TV}^\alpha(u)\leq \liminf\limits_{k\rightarrow\infty}\text{TV}^\alpha(u^k).\)
\end{lemma}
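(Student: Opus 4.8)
The plan is to exploit the standard fact that a supremum of continuous linear functionals is lower semi-continuous, which is exactly the structure of $\text{TV}^\alpha$ from Definition \ref{definition3.2}. First I would fix an arbitrary test function $\phI\in K$ and consider the single linear functional $L_\phI(u):=\int_\Omega \bigl(-u\,\diver^\alpha\phI\bigr)\,dx$. The key observation is that $\diver^\alpha\phI=\sum_{i=1}^d D^\alpha_{[a,b]}\phi_i$ is a fixed bounded function on $\Omega$: since $\phI\in\mathscr{C}_0^\ell(\Omega,\mathbb{R}^d)$, each $\phi_i$ is $\ell$ times continuously differentiable with integrable $(\ell+1)$th derivative and vanishing normal derivatives on $\partial\Omega$ up to order $\ell$, so the fractional derivative $D^\alpha_{[a,b]}\phi_i$ is well-defined and, being continuous on the closure (no boundary singularity thanks to the compact-support conditions), is bounded; call $M_\phI:=\|\diver^\alpha\phI\|_{L^\infty(\Omega)}<\infty$.

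Next I would show $L_\phI$ is continuous on $L^1(\Omega)$: for any $v,w\in L^1(\Omega)$,
\[
|L_\phI(v)-L_\phI(w)|=\Bigl|\int_\Omega (w-v)\,\diver^\alpha\phI\,dx\Bigr|\le M_\phI\,\|v-w\|_{L^1(\Omega)}.
\]
Hence, given the sequence $u^k\to u$ in $L^1(\Omega)$, we get $L_\phI(u^k)\to L_\phI(u)$, so in particular
\[
L_\phI(u)=\lim_{k\to\infty}L_\phI(u^k)\le\liminf_{k\to\infty}\sup_{\boldsymbol{\psi}\in K}L_{\boldsymbol{\psi}}(u^k)=\liminf_{k\to\infty}\text{TV}^\alpha(u^k),
\]
where the middle inequality holds because $\phI\in K$ so $L_\phI(u^k)\le\text{TV}^\alpha(u^k)$ for every $k$. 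The right-hand side is a fixed number independent of $\phI$. Taking the supremum over all $\phI\in K$ on the left then yields $\text{TV}^\alpha(u)\le\liminf_{k\to\infty}\text{TV}^\alpha(u^k)$, which is the claim. (If $\liminf_{k\to\infty}\text{TV}^\alpha(u^k)=+\infty$ there is nothing to prove, so we may assume it is finite, and the above goes through verbatim.)

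The only genuinely delicate point is the uniform boundedness of $\diver^\alpha\phI$ on $\Omega$ — i.e., that the fractional derivative of a test function in $\mathscr{C}_0^\ell$ has no endpoint blow-up. This is where the vanishing-normal-derivative conditions $\frac{\partial^i v}{\partial n^i}|_{\partial\Omega}=0$ for $i=0,\dots,\ell$ in the definition of $\mathscr{C}_0^\ell$ are essential: they are precisely what makes the R--L (equivalently, by the equivalence noted after \eqref{Caputo}, the Caputo or G--L) fractional derivative of $\phi_i$ continuous up to the boundary, killing the $(x-a)^{-(\alpha-n+1)}$-type singularity. I would either cite the regularity statement from Section 2 (the equivalence of the three definitions under homogeneous boundary conditions, plus the Caputo formula \eqref{Caputo} which manifestly involves only $f^{(n)}$ and an integrable kernel when $f^{(n)}\in L^1$) or give a one-line estimate of the integral in \eqref{Caputo}. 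Everything else is the routine "sup of continuous functionals" argument and needs no further work.
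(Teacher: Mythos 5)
Your proposal is correct and follows essentially the same route as the paper's own proof: fix $\phI\in K$, use boundedness of $\diver^\alpha\phI$ to pass to the limit in $\int_\Omega(-u^k\diver^\alpha\phI)\,dx$ under $L^1$ convergence, bound each term by $\text{TV}^\alpha(u^k)$, and then take the supremum over $\phI$. The only difference is that you spell out the justification for the boundedness of $\diver^\alpha\phI$ (via the vanishing boundary conditions in $\mathscr{C}_0^\ell$), which the paper simply asserts.
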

\begin{proof}
Since $ \ u_k\in BV^\alpha(\Omega)$,
for any $\phI(x)\in\mathscr{C}_0^\ell(\Omega,\mathbb{R}^d)$ such that $|\phI(x)|\leq 1$ on $\Omega$,
 then $\diver^\alpha\; \phI$ is bounded, hence 
    \[\begin{split}
    \int_\Omega\Big(-u\;\diver^\alpha\; \phI\Big)dx&=\liminf\limits_{k\rightarrow +\infty}\int_\Omega\Big(-u_k\;\diver^\alpha\; \phI\Big)dx
    \leq \liminf\limits_{k\rightarrow +\infty}\text{TV}^\alpha(u_k)
    \end{split}\]
 from 
 $u_k\rightarrow u$ in $L^1(\Omega)$. Taking $\sup_{\phI(x)}$ in the  above inequality, we have lower semi-continuity from $\text{TV}^\alpha(u)\leq \liminf\limits_{k\rightarrow +\infty}\text{TV}^\alpha(u_k)$ (\emph{see} \cite{RAcar1994,evans1991measure} for TV case).
\end{proof}

\begin{lemma}\label{lemma3-3}
The space $\text{BV}^\alpha(\Omega)$   is a Banach space.
\end{lemma}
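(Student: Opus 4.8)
The plan is to follow the classical template used for $\text{BV}(\Omega)$ (see \cite{RAcar1994,GAubert2006}): we already have a norm $\|\cdot\|_{\text{BV}^\alpha}=\|\cdot\|_{L^1}+\text{TV}^\alpha(\cdot)$ on the vector space $\text{BV}^\alpha(\Omega)$ — the norm axioms follow from linearity of $\diver^\alpha$ (recorded in the ``Linearity'' property above) and from the fact that $\text{TV}^\alpha$ is a supremum of linear functionals, so it is automatically subadditive and positively homogeneous, with $\|u\|_{\text{BV}^\alpha}=0$ forcing $u=0$ in $L^1$. Hence only completeness requires work. First I would take a Cauchy sequence $\{u^k\}$ in $\text{BV}^\alpha(\Omega)$. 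Since $\|u^j-u^k\|_{L^1}\le\|u^j-u^k\|_{\text{BV}^\alpha}$, the sequence is Cauchy in $L^1(\Omega)$, which is complete, so $u^k\to u$ in $L^1$ for some $u\in L^1(\Omega)$.

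Next I would show $u\in\text{BV}^\alpha(\Omega)$ and $u^k\to u$ in the $\text{BV}^\alpha$-norm. That $\text{TV}^\alpha(u)<\infty$ (so $u\in\text{BV}^\alpha(\Omega)$) comes immediately from the lower semicontinuity of Lemma \ref{lemma3-2}: $\text{TV}^\alpha(u)\le\liminf_k\text{TV}^\alpha(u^k)$, and the right-hand side is finite because a Cauchy sequence is bounded in norm. For norm convergence, fix $\varepsilon>0$ and choose $N$ with $\|u^j-u^k\|_{\text{BV}^\alpha}<\varepsilon$ for $j,k\ge N$; in particular $\text{TV}^\alpha(u^j-u^k)<\varepsilon$. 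Now fix $k\ge N$ and apply Lemma \ref{lemma3-2} to the sequence $\{u^j-u^k\}_{j}$, which converges in $L^1$ to $u-u^k$ as $j\to\infty$: this gives $\text{TV}^\alpha(u-u^k)\le\liminf_{j\to\infty}\text{TV}^\alpha(u^j-u^k)\le\varepsilon$. Combined with $\|u-u^k\|_{L^1}=\lim_j\|u^j-u^k\|_{L^1}\le\varepsilon$, we get $\|u-u^k\|_{\text{BV}^\alpha}\le 2\varepsilon$ for all $k\ge N$, i.e.\ $u^k\to u$ in $\text{BV}^\alpha(\Omega)$.

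The only genuinely delicate point — the ``main obstacle'' — is making sure Lemma \ref{lemma3-2} is applicable to the difference sequence $\{u^j-u^k\}_j$, i.e.\ that $\text{BV}^\alpha(\Omega)$ is closed under subtraction and that $\text{TV}^\alpha$ interacts with the supremum over $K$ the way the TV case does. This is where linearity of $\diver^\alpha$ and the symmetry of the test-function class $K$ under $\phI\mapsto-\phI$ are used: $\text{TV}^\alpha(v)=\sup_{\phI\in K}\int_\Omega(-v\,\diver^\alpha\phI)\,dx$ is finite and subadditive, so $v,w\in\text{BV}^\alpha$ implies $v-w\in\text{BV}^\alpha$, and Lemma \ref{lemma3-2} indeed applies verbatim to $\{u^j-u^k\}_j$. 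Once this is in place, the rest is the standard diagonal-type argument above; no new estimate on fractional derivatives themselves is needed, since everything is routed through the weak/distributional pairing $\int_\Omega(-u\,\diver^\alpha\phI)\,dx$ and the already-established $L^1$-lower semicontinuity.
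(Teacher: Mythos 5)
Your proposal is correct and follows essentially the same route as the paper: completeness of $L^1(\Omega)$ to extract the limit, Lemma \ref{lemma3-2} (lower semi-continuity) applied first to the sequence itself to show $u\in\text{BV}^\alpha(\Omega)$ and then to the difference sequence $\{u^j-u^k\}_j$ to upgrade to convergence in the $\text{BV}^\alpha$-norm. Your additional remarks on the norm axioms and on why $\text{TV}^\alpha$ of a difference is controlled are just a more explicit rendering of steps the paper states as immediate.
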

\begin{proof}
First we can see that $\text{BV}^\alpha(\Omega)$ is a normed space following immediately from the definitions of $\|u\|_{L^1(\Omega)}$ and total $\alpha$-order variation $\text{TV}^\alpha(u)$, so it only remains to prove completeness. Suppose $\{u^k\}$ is a Cauchy sequence in $\text{BV}^\alpha(\Omega)$; then, by the definition of the norm, it must also be a Cauchy sequence in $L^1(\Omega)$. According to the completeness of $L^1(\Omega)$, there exists a function $u$ in $L^1(\Omega)$ such that $u^k\rightarrow u$ in $L^1(\Omega)$.

Since $\{u^k\}$ is a Cauchy sequence in $\text{BV}^\alpha(\Omega)$, $\|u^k\|_{\text{BV}^\alpha}$ is bounded. Thus $\text{TV}^\alpha(u^k)$ is bounded as $k\rightarrow\infty$, by the lower semi-continuity
 of $\text{TV}^\alpha(u)$ in $\text{BV}^\alpha(\Omega)$ space
  (\emph{see} Lemma \ref{lemma3-2}), one shows that $u\in\text{BV}^\alpha(\Omega)$.

We shall show that $u^k\rightarrow u$ in $\text{BV}^\alpha(\Omega)$. We know that for any $\epsilon>0$ there exists a positive integer $N$ such that $\|u^k-u^j\|_{BV^\alpha(\Omega)}<\epsilon$ for any $j,k>N$, hence one has $\text{TV}^\alpha(u^k-u^j)<\epsilon$. 
Since $u^k\rightarrow u$ in $L^1(\Omega)$, thus $u^j-u^k\rightarrow u^j-u$ in $L^1(\Omega)$. Hence by Lemma \ref{lemma3-2},
\[\text{TV}^\alpha(u^j-u)\leq \liminf\limits_{k\rightarrow\infty}\text{TV}^\alpha(u^j-u^k)\leq\epsilon,\]
which shows that $u^k\rightarrow u$ in $\text{BV}^\alpha(\Omega)$, therefore $\text{BV}^\alpha(\Omega)$ is a Banach space.
\end{proof}

\begin{remark}\label{rem2}
In the literature \cite{IPodlubny1999}, the equivalence of different fractional derivatives requires stringent continuity conditions e.g.
 one has ${}^{C}D^\alpha_{[a,b]}\eta(x)=D^\alpha_{[a,b]}\eta(x)$ in
the test space $\mathscr{C}_0^{n-1}([a,b],\mathbb{R})$.
However for imaging applications (the objective function $u$), we do not require such equivalence.
\end{remark}

To distinguish the two definitions, we shall continue using the superscript $C$ for $C$ derivatives
based quantities such as ${}^C\diver^\alpha$ and ${}^C\nabla^\alpha$ while no superscript means that a quantity is
based on the R-L derivative.

For any positive integer $p\in\mathbf{N}^+$, let $W^\alpha_{p}(\Omega)=\big\{u\in L^p(\Omega)\;\big|\; \|u\|_{W^\alpha_{p}(\Omega)}<+\infty\big\}$ be a function space embedding with the norm
\[\|u\|_{W^\alpha_{p}(\Omega)}=\left(\int_\Omega |u|^pdx+\int_\Omega |\nabla^\alpha u|^pdx\right)^{1/p},\;\text{where}\;\nabla^\alpha u=(\frac{\partial^\alpha u}{\partial x_1},\dots,\frac{\partial^\alpha u}{\partial x_d})^T.\]

 For any $\xi(x)\in W^\alpha_1([a,b])$ and $\eta(x)\in\mathscr{C}_0^{n-1}([a,b],\mathbb{R})$
\begin{equation}\label{part_integral}
\begin{split}
&\int_a^b\xi(x)\cdot {}^{C}D^\alpha_{[a,b]}\eta(x)dx\\
=
&(-1)^n\int_a^b\eta(x)\cdot D^\alpha_{[a,b]}\xi(x)dx\  + \sum_{j=0}^{n-1}(-1)^jD^{\alpha-n+j}_{[a,b]}\xi(x)\frac{\partial^{n-j-1}\eta(x)}{\partial x^{n-j-1}}\Big|_{x=a}^{x=b}\\
=&(-1)^n\int_a^b\eta(x)\cdot D^\alpha_{[a,b]}\xi(x)dx
\end{split}
\end{equation}
 gives the $\alpha$-order integration by parts formula  (\emph{see} \cite{OPAgrawal2007}).

 Furthermore, applying (\ref{part_integral}) twice, we have shown the relationship
\begin{equation}\label{fractional_dual}
\int_\Omega u(x) \Big((-1)^n{}^C\diver^\alpha\Big)\phI(x)dx=\int_\Omega\phI(x)\cdot \nabla^\alpha u(x)dx,
\end{equation}
where $u(x)\in W_1^\alpha(\Omega)$ and $\phI(x)\in\mathscr{C}_0^\ell(\Omega,\mathbb{R}^d)$; clearly the operator $(-1)^n{}^C\diver^\alpha$ is the adjoint
of operator $\nabla^\alpha$. Note that, for $\phI(x)\in\mathscr{C}_0^\ell(\Omega,\mathbb{R}^d)$, we have ${}^C\diver^\alpha\phI = \diver^\alpha\phI$ which may not
be true if $\phI(x)$ is in a different space.

\begin{proposition}\label{proposition_regularization_term}
Assume that $u\in W_1^\alpha(\Omega)$, then
$\text{TV}^\alpha(u)=\int_\Omega|\nabla^\alpha u|dx$.
\end{proposition}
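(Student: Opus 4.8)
The plan is to prove the two inequalities $\text{TV}^\alpha(u)\le\int_\Omega|\nabla^\alpha u|\,dx$ and $\text{TV}^\alpha(u)\ge\int_\Omega|\nabla^\alpha u|\,dx$ separately, the former being the easy direction. For the first inequality, I would start from the definition $\text{TV}^\alpha(u)=\sup_{\phI\in K}\int_\Omega(-u\,\diver^\alpha\phI)\,dx$. Fixing an arbitrary $\phI\in K$ and using the adjoint relation \eqref{fractional_dual} together with the fact that for $\phI\in\mathscr{C}_0^\ell(\Omega,\mathbb{R}^d)$ we have ${}^C\diver^\alpha\phI=\diver^\alpha\phI$, I get, for the case when $n$ is even,
\[
\int_\Omega(-u\,\diver^\alpha\phI)\,dx=-\int_\Omega\phI\cdot\nabla^\alpha u\,dx\le\int_\Omega|\phI|\,|\nabla^\alpha u|\,dx\le\int_\Omega|\nabla^\alpha u|\,dx,
\]
where the last step uses $|\phI(x)|\le1$ pointwise. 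Care with the sign $(-1)^n$ is needed: I would absorb it by replacing $\phI$ with $-\phI$ (which also lies in $K$) when $n$ is odd, so the bound holds regardless of parity. Taking the supremum over $\phI\in K$ gives $\text{TV}^\alpha(u)\le\int_\Omega|\nabla^\alpha u|\,dx$.

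For the reverse inequality, the standard strategy is to exhibit a (near-)maximizing sequence of test functions. The natural candidate is $\phI=-\nabla^\alpha u/|\nabla^\alpha u|$ (with the sign chosen to match the parity of $n$), for which the Cauchy–Schwarz step above becomes an equality and the integrand becomes exactly $|\nabla^\alpha u|$. The obstruction is that this $\phI$ is neither smooth nor compactly supported in the sense required by $\mathscr{C}_0^\ell(\Omega,\mathbb{R}^d)$, and it is undefined where $\nabla^\alpha u$ vanishes. So the plan is to approximate: first replace $|\nabla^\alpha u|$ by $\sqrt{|\nabla^\alpha u|^2+\varepsilon}$ in the denominator to avoid division by zero, then mollify the resulting vector field and multiply by a cutoff supported in a slightly shrunken domain $\Omega'\Subset\Omega$ so that the boundary-vanishing conditions defining $\mathscr{C}_0^\ell$ are met; one checks the sup-norm constraint $|\phI|\le1$ survives the mollification. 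Feeding these test functions into the definition and letting the mollification parameter, then $\varepsilon$, then $\Omega'\uparrow\Omega$ tend to their limits, dominated convergence yields $\int_\Omega|\nabla^\alpha u|\,dx\le\text{TV}^\alpha(u)$.

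The main obstacle I anticipate is the density/approximation argument in the reverse direction: one must verify that the mollified, cutoff version of $-\nabla^\alpha u/\sqrt{|\nabla^\alpha u|^2+\varepsilon}$ genuinely lies in the test space $\mathscr{C}_0^\ell(\Omega,\mathbb{R}^d)$ (all normal derivatives up to order $\ell$ vanishing on $\partial\Omega$, $(\ell+1)$st derivative integrable) while still producing an integral arbitrarily close to $\int_\Omega|\nabla^\alpha u|\,dx$. A subtle point specific to the fractional setting is that $\diver^\alpha$ applied to the mollified field must be controlled; since the field is smooth and compactly supported this is harmless, but one should record that the relevant fractional derivatives of the test functions are well defined and that the identity \eqref{fractional_dual} applies verbatim. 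I would also note, as the excerpt already remarks, that since $\phI\in\mathscr{C}_0^\ell$ satisfies the equivalence conditions, it is immaterial which definition of the fractional derivative is used for $\diver^\alpha\phI$, so the proof is independent of that choice. Everything else — linearity of $D^\alpha_{[a,b]}$, the Cauchy–Schwarz inequality, dominated convergence — is routine.
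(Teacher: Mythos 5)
Your proposal is correct and follows essentially the same route as the paper: the upper bound via the adjoint identity \eqref{fractional_dual} and $|\phI|\le 1$, and the lower bound by taking $\phI_0=\pm\nabla^\alpha u/|\nabla^\alpha u|$ and approximating it by cutoff-and-mollified test functions in $K$. Your version is in fact slightly more careful than the paper's (explicit two-inequality structure, the $\sqrt{|\nabla^\alpha u|^2+\varepsilon}$ regularization, and attention to the $(-1)^n$ sign), but the underlying argument is the same.
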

\begin{proof}
 For any $\alpha>0$, using the dual relationship (\ref{fractional_dual}), one can obtain that
 \begin{equation*}
 \begin{split}
 \int_\Omega u(x)\diver^\alpha\phI(x)dx\ =(-1)^n\int_\Omega\phI(x)\cdot \nabla^\alpha u(x)dx
 \end{split}
 \end{equation*}
and in addition $|\phI|\leq 1$ in $K$  implies that \[\phI_0(x)=\left\{\begin{array}{cc}
                                           (-1)^n\nabla^\alpha u/|\nabla^\alpha u|, & |\nabla^\alpha u(x)|\neq 0 ;\\
                                           0, & \text{otherwise}
                                         \end{array}\right.
\] can
 maximize the functional $\int_\Omega\phI(x)\cdot \nabla^\alpha u(x)dx= \int_\Omega|\nabla^\alpha u|dx$. By multiplying $\phI_0$ by a suitable characteristic $\ell$-compactly supported continuous function $\eta_\epsilon$ in $\Omega$ (e.g., $\eta_\epsilon\in\mathscr{C}_0^\ell(\Omega,\mathbb{R}^d)$) and then mollifying (\emph{see} \cite{RAcar1994} for TV and \cite{GAubert2006,giusti1984minimal}), the new $\int_\Omega\phI_\epsilon(x)\cdot \nabla^\alpha u(x)dx$ with $\phI_\epsilon\in K$ is arbitrarily close to $\int_\Omega|\nabla^\alpha u|dx$ as $\epsilon\rightarrow 0$ \cite{evans1991measure}, hence one shows that
 $\text{TV}^\alpha(u)=\int_\Omega|\nabla^\alpha u|dx$ by taking $\sup\limits_{\phI_\epsilon\in K}\int_\Omega u(x)\cdot \diver^\alpha\phI_\epsilon(x)dx\ =\sup\limits_{\phI_\epsilon\in K}(-1)^n\int_\Omega\phI_\epsilon(x)\cdot \nabla^\alpha u(x)dx$. 
\end{proof}


\begin{remark}\label{rem1}
Since $u\in W_1^\alpha(\Omega)$ leads to $\text{TV}^\alpha(u)=\int_\Omega|\nabla^\alpha u|dx$, in fact, it is easy to show that the lower semi-continuity
$\int_\Omega|\nabla^\alpha u|dx\leq \liminf\limits_{k\rightarrow\infty}\int_\Omega |\nabla^\alpha u^k|dx$ holds in the space $W_1^\alpha(\Omega)$ similar to the TV case \cite{evans1991measure}).
\end{remark}

\begin{lemma}
The space $W^\alpha_{p}(\Omega)$   is a Banach space.
\end{lemma}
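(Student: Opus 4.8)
The plan is to follow exactly the template used in the proof of Lemma~\ref{lemma3-3}: show that $W^\alpha_p(\Omega)$ is a normed vector space, then establish completeness by extracting limits in the underlying $L^p$ space and using a lower semi-continuity property to recover membership in $W^\alpha_p(\Omega)$ and convergence in its norm. First I would verify that $\|\cdot\|_{W^\alpha_p(\Omega)}$ is genuinely a norm: homogeneity and the triangle inequality follow from Minkowski's inequality applied to the two terms $\left(\int_\Omega|u|^p\,dx\right)^{1/p}$ and $\left(\int_\Omega|\nabla^\alpha u|^p\,dx\right)^{1/p}$, together with the \textbf{linearity} of the fractional derivative operator recorded in Section~2 (so that $\nabla^\alpha(u+v)=\nabla^\alpha u+\nabla^\alpha v$); definiteness follows since $\|u\|_{W^\alpha_p}=0$ forces $\|u\|_{L^p}=0$, hence $u=0$ a.e.

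For completeness, let $\{u^k\}$ be Cauchy in $W^\alpha_p(\Omega)$. Since $\|u^k-u^j\|_{L^p}\leq\|u^k-u^j\|_{W^\alpha_p}$, the sequence is Cauchy in $L^p(\Omega)$, which is complete, so $u^k\to u$ in $L^p(\Omega)$ for some $u\in L^p(\Omega)$. Likewise $\{\nabla^\alpha u^k\}$ is Cauchy in $L^p(\Omega,\mathbb{R}^d)$, hence converges in $L^p$ to some vector field $w$. The crucial point is to identify $w$ with $\nabla^\alpha u$ in the appropriate (weak/distributional) sense: using the $\alpha$-order integration-by-parts identity \eqref{fractional_dual}, for every test field $\phI\in\mathscr{C}_0^\ell(\Omega,\mathbb{R}^d)$ one has $\int_\Omega u^k(x)\,\diver^\alpha\phI(x)\,dx=(-1)^n\int_\Omega\phI(x)\cdot\nabla^\alpha u^k(x)\,dx$; passing to the limit on the left using $u^k\to u$ in $L^p$ (and boundedness of $\diver^\alpha\phI$) and on the right using $\nabla^\alpha u^k\to w$ in $L^p$ shows $\int_\Omega u\,\diver^\alpha\phI\,dx=(-1)^n\int_\Omega\phI\cdot w\,dx$, so $w$ is the $\alpha$-gradient of $u$. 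Then $u\in L^p$ with $\nabla^\alpha u=w\in L^p$, whence $u\in W^\alpha_p(\Omega)$, and $\|u^k-u\|_{W^\alpha_p}^p=\|u^k-u\|_{L^p}^p+\|\nabla^\alpha u^k-w\|_{L^p}^p\to 0$, giving convergence in $W^\alpha_p(\Omega)$.

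As an alternative route that parallels Lemma~\ref{lemma3-3} even more literally, one may invoke the lower semi-continuity of $u\mapsto\int_\Omega|\nabla^\alpha u|^p\,dx$ along $L^p$-convergent sequences noted in Remark~\ref{rem1} (and its obvious $L^p$-analogue): since $\{u^k\}$ is $W^\alpha_p$-Cauchy, $\|\nabla^\alpha u^k\|_{L^p}$ is bounded, so $\int_\Omega|\nabla^\alpha u|^p\,dx\leq\liminf_k\int_\Omega|\nabla^\alpha u^k|^p\,dx<\infty$ and $u\in W^\alpha_p(\Omega)$; applying the same lower semi-continuity to the differences $u^j-u^k$ (with $k\to\infty$) yields $\|\nabla^\alpha(u^j-u)\|_{L^p}\leq\liminf_k\|\nabla^\alpha(u^j-u^k)\|_{L^p}\leq\epsilon$ for $j$ large, hence $u^k\to u$ in $W^\alpha_p(\Omega)$. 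The main obstacle is the identification of the limit $w$ with $\nabla^\alpha u$: one must make precise in which sense $\nabla^\alpha u$ is defined for a generic $L^p$ function (via the duality \eqref{fractional_dual} against $\mathscr{C}_0^\ell$ test fields) and confirm that this weak notion is closed under $L^p$ limits — everything else (Minkowski, completeness of $L^p$, the $\ell^p$-combination of the two norm pieces) is routine.
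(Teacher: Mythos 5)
Your proposal is correct, and your primary route is organized differently from the paper's. The paper proves the lemma by first establishing the lower semi-continuity of $u\mapsto\left(\int_\Omega|\nabla^\alpha u|^p\,dx\right)^{1/p}$ via a duality/H\"older argument against test fields $\psi$ with $\|\psi\|_{L^q(\Omega)}\leq 1$ ($1/p+1/q=1$), and then says to repeat the template of Lemma~\ref{lemma3-3}: Cauchy in $W^\alpha_p$ implies Cauchy in $L^p$, extract the $L^p$-limit $u$, use lower semi-continuity to get $u\in W^\alpha_p(\Omega)$, and apply it again to the differences $u^j-u^k$ to get norm convergence. That is exactly your ``alternative route,'' so you have in fact reproduced the paper's argument as a fallback. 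Your main argument is the classical Sobolev-space completeness proof instead: observe that $\{\nabla^\alpha u^k\}$ is itself Cauchy in $L^p$, hence converges to some $w$, and identify $w=\nabla^\alpha u$ by passing to the limit in the integration-by-parts identity \eqref{fractional_dual}. This buys you something the paper's version glosses over: the paper's lower semi-continuity inequality begins by ``taking $u\in W^\alpha_p(\Omega)$,'' i.e.\ it already presumes the limit has a well-defined $\alpha$-gradient in order to write $\int_\Omega(-1)^n\nabla^\alpha u\,\psi\,dx=\int_\Omega u\,\diver^\alpha\psi\,dx$, whereas your duality identification constructs $\nabla^\alpha u$ for the limit function directly and yields strong (not just liminf) convergence of the gradients. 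The one caveat you correctly flag yourself --- that one must fix a weak, duality-based meaning of $\nabla^\alpha u$ for a general $L^p$ function and check it is closed under $L^p$ limits --- is a gap shared by the paper's own treatment, not one you have introduced.
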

\begin{proof} The $p=1$ case is clear. Now for $p\not=1$, let $q$ satisfy $1/p+1/q=1$.
To obtain the lower semi-continuity,
taking $u\in W_p^\alpha(\Omega)$ and $\psi(x)\in \big\{\phI\in \mathscr{C}_0^\ell(\Omega,\mathbf{R}^d)\;\Big|\;\|\phI(x)\|_{L^q(\Omega)}\leq 1 \;\text{for all}\;x\in\Omega\big\}$, the following inequality
\[\begin{split}
\int_\Omega(-1)^n\nabla^\alpha u\psi dx &=\   \int_\Omega u\diver^\alpha\psi dx
\   = \liminf\limits_{k\rightarrow +\infty}\int_\Omega u^k\diver^\alpha\psi dx\
=
\liminf\limits_{k\rightarrow +\infty}\int_\Omega(-1)^n\nabla^\alpha u^k\psi dx\\
&\leq \liminf\limits_{k\rightarrow +\infty}\left(\int_\Omega|\nabla^\alpha u^k|^pdx\right)^{1/p}\left(\int_\Omega|\psi|^qdx\right)^{1/q}
 \leq  \liminf\limits_{k\rightarrow +\infty}\left(\int_\Omega|\nabla^\alpha u^k|^pdx\right)^{1/p}
\end{split}\]
holds; further one has $\left(\int_\Omega|\nabla^\alpha u|^pdx\right)^{1/p}\leq \liminf\limits_{k\rightarrow +\infty}\left(\int_\Omega|\nabla^\alpha u^k|^pdx\right)^{1/p}$.
Then we can deduce the result, following the similar lines to proving Lemma \ref{lemma3-3}.
\end{proof}

\begin{lemma}
 The following embedding results hold:
\(  W^\alpha_{2}(\Omega)\subseteq W^\alpha_{1}(\Omega)\subseteq BV^\alpha(\Omega)\subseteq L^1(\Omega).\)
\end{lemma}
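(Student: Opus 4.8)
The plan is to verify the chain from right to left, since the rightmost inclusion is essentially definitional and the other two reduce to standard embedding arguments on the bounded domain $\Omega$.

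First, $\text{BV}^\alpha(\Omega)\subseteq L^1(\Omega)$ requires no work: by the very definition (\ref{BV_alpha}), every $u\in\text{BV}^\alpha(\Omega)$ is by construction an element of $L^1(\Omega)$ (with the extra property $\text{TV}^\alpha(u)<+\infty$).

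Second, for $W_1^\alpha(\Omega)\subseteq\text{BV}^\alpha(\Omega)$ I would take $u\in W_1^\alpha(\Omega)$, so that $u\in L^1(\Omega)$ and $\int_\Omega|\nabla^\alpha u|\,dx<+\infty$ by definition of the $W_1^\alpha$-norm. Proposition \ref{proposition_regularization_term} then identifies $\text{TV}^\alpha(u)=\int_\Omega|\nabla^\alpha u|\,dx$, which is finite; hence $u\in\text{BV}^\alpha(\Omega)$. In fact this gives $\|u\|_{\text{BV}^\alpha}=\|u\|_{W_1^\alpha(\Omega)}$, so the embedding is isometric on this subspace.

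Third, for $W_2^\alpha(\Omega)\subseteq W_1^\alpha(\Omega)$ I would exploit boundedness of $\Omega$ via H\"older's inequality (Cauchy--Schwarz): $\|f\|_{L^1(\Omega)}\leq|\Omega|^{1/2}\|f\|_{L^2(\Omega)}$ for $f\in L^2(\Omega)$. If $u\in W_2^\alpha(\Omega)$, then $\|u\|_{W_2^\alpha(\Omega)}^2=\int_\Omega|u|^2dx+\int_\Omega|\nabla^\alpha u|^2dx<+\infty$, so both $u$ and $|\nabla^\alpha u|$ lie in $L^2(\Omega)$; applying the inequality to each and summing yields $\|u\|_{W_1^\alpha(\Omega)}=\|u\|_{L^1(\Omega)}+\int_\Omega|\nabla^\alpha u|\,dx\leq |\Omega|^{1/2}\big(\|u\|_{L^2(\Omega)}+\||\nabla^\alpha u|\|_{L^2(\Omega)}\big)\leq C\,|\Omega|^{1/2}\,\|u\|_{W_2^\alpha(\Omega)}<+\infty$, so $u\in W_1^\alpha(\Omega)$.

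The one point deserving care -- and the nearest thing to an obstacle -- is the consistency of the symbol $\nabla^\alpha u$ across the scales $p=1,2$: one must observe that the fractional gradient entering both $\|\cdot\|_{W_1^\alpha}$ and $\|\cdot\|_{W_2^\alpha}$ is defined through the same duality pairing (\ref{fractional_dual}) against test functions in $\mathscr{C}_0^\ell(\Omega,\mathbb{R}^d)$, so that for $u\in W_2^\alpha(\Omega)$ the object $\nabla^\alpha u\in L^2(\Omega)\subseteq L^1(\Omega)$ is automatically the same distributional quantity used to evaluate $\|u\|_{W_1^\alpha(\Omega)}$; and that boundedness of $\Omega$ is genuinely used (the chain would fail on an unbounded domain such as $\mathbb{R}^d$). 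Everything else is routine.
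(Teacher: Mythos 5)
Your proposal is correct and follows essentially the same route as the paper: the rightmost inclusion is definitional, the middle one comes from the duality pairing (the paper bounds $\sup_{\phI\in K}\int_\Omega f\,\diver^\alpha\phI\,dx\leq\|\nabla^\alpha f\|_{L^1(\Omega)}$ directly, while you invoke Proposition~\ref{proposition_regularization_term}, which is the same computation), and the leftmost follows from $L^2(\Omega)\subseteq L^1(\Omega)$ on the bounded domain, which you simply make explicit via H\"older. Your added remarks on the isometry $\|u\|_{\text{BV}^\alpha}=\|u\|_{W_1^\alpha(\Omega)}$ and on the role of boundedness of $\Omega$ are accurate but not needed.
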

\begin{proof}
Firstly, from the definitions of $BV^\alpha(\Omega)$ and $W_p^\alpha(\Omega)$, we can see that  $BV^\alpha(\Omega)\subset L^1(\Omega)$ and $W_1^\alpha(\Omega)\subset L^1(\Omega)$.
Secondly, for any $f\in W_1^\alpha(\Omega)$ and $\phI\in K$, 
we have
\[\int_\Omega f\diver^\alpha\phI\;dx=(-1)^n\int_\Omega\phI(x)\cdot \nabla^\alpha f(x)dx\leq \|\nabla^\alpha f\|_{L^1(\Omega)}<+\infty,\]
i.e., $f\in BV^\alpha(\Omega)$ or $ W^\alpha_{1}(\Omega)\subseteq BV^\alpha(\Omega)$. Finally $W^\alpha_{2}(\Omega)\subseteq W^\alpha_{1}(\Omega)$ follows $L^2(\Omega)\subseteq L^1(\Omega)$.
\end{proof}

\begin{lemma}\label{lemma3.4}
The functional $\text{TV}^\alpha(u)$ is convex.
\end{lemma}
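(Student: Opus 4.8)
The plan is to exploit the variational (dual) definition of $\text{TV}^\alpha$ in Definition \ref{definition3.2}: it is a pointwise supremum of a family of functionals each of which is \emph{affine} (in fact linear) in $u$, and a supremum of convex functions is convex.

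First I would fix an arbitrary $\phI\in K$ and consider the functional
\[
L_{\phI}(u):=\int_\Omega\bigl(-u\,\diver^\alpha\phI\bigr)\,dx .
\]
Since $\diver^\alpha\phI=\sum_{i=1}^d D^\alpha_{[a,b]}\phi_i$ is a fixed function that does not depend on $u$, and integration is linear, the map $u\mapsto L_{\phI}(u)$ is linear on $L^1(\Omega)$. In particular, for any $u_1,u_2\in\text{BV}^\alpha(\Omega)$ and any $\theta\in[0,1]$,
\[
L_{\phI}\bigl(\theta u_1+(1-\theta)u_2\bigr)=\theta L_{\phI}(u_1)+(1-\theta)L_{\phI}(u_2).
\]
Because $L_{\phI}(u_k)\le \text{TV}^\alpha(u_k)$ for $k=1,2$ (the supremum over $K$ dominates each of its members), this identity yields
\[
L_{\phI}\bigl(\theta u_1+(1-\theta)u_2\bigr)\le \theta\,\text{TV}^\alpha(u_1)+(1-\theta)\,\text{TV}^\alpha(u_2).
\]
Since the right-hand side is independent of $\phI$, and the admissible set $K$ over which the supremum is taken is itself independent of the argument of $\text{TV}^\alpha$, I would take $\sup_{\phI\in K}$ on the left to obtain
\[
\text{TV}^\alpha\bigl(\theta u_1+(1-\theta)u_2\bigr)\le \theta\,\text{TV}^\alpha(u_1)+(1-\theta)\,\text{TV}^\alpha(u_2),
\]
read as an inequality in $[0,+\infty]$, so the case $\text{TV}^\alpha(u_i)=+\infty$ is automatically covered. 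This is exactly convexity.

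There is no serious obstacle here; the only points deserving a line of justification are (i) the linearity of $\diver^\alpha$, inherited from the \textbf{Linearity} property of the fractional derivative recalled in Section 2, which is what makes $L_{\phI}$ linear in $u$; and (ii) the fact that $K$ is a fixed admissible set, so that the same family of affine minorants is used for every argument of $\text{TV}^\alpha$. Once these are noted, the ``supremum of affine functionals is convex'' principle closes the proof. (Alternatively one could restrict to $W_1^\alpha(\Omega)$ and invoke Proposition \ref{proposition_regularization_term}, where $\text{TV}^\alpha(u)=\int_\Omega|\nabla^\alpha u|\,dx$ and convexity follows from linearity of $\nabla^\alpha$ together with the triangle inequality and positive homogeneity of $|\cdot|$; but the dual-definition argument is preferable since it handles all of $\text{BV}^\alpha(\Omega)$ uniformly.)
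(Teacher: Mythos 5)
Your proof is correct and rests on the same mechanism as the paper's: the paper's one-line proof invokes the linearity of the fractional derivative together with the positive homogeneity and sub-additivity of $\text{TV}^\alpha(u)$, and both of those properties come from exactly the supremum-of-linear-functionals structure that you use directly. Your write-up simply carries out in one step (and in full detail, including the $+\infty$ case) what the paper compresses into the two-step route via sub-additivity and homogeneity.
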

\begin{proof}
The proof follows the linearity of fractional order derivatives, and the positively homogeneous and sub-additive properties of $\text{TV}^\alpha(u)$.
\end{proof}

\label{section4}
{\bf Theory for a total $\alpha$-order variation model}.
We are now ready to analyze model (\ref{TV_alpha}) or
  the total $\alpha$-order
variation model in a more precise form
\begin{equation}\label{reg-problem1}
\min\limits_{u\in \text{BV}^\alpha(\Omega)} \Big\{E(u):=\text{TV}^\alpha(u)+\frac{\lambda}{2} F(u)\Big\},\qquad F(u)=\int_\Omega |u-z|^2 dx.
\end{equation}
To focus on the total $\alpha$-order variation model in $\Omega=(0,1)\times (0,1)\subset \mathbb{R}^2$,
  we assume $1< \alpha<2$;
the following theorem establishes convexity of the minimization problem (\ref{reg-problem1}).
\begin{theorem}[Convexity]\label{theorem_convexity}
The  functional $E(u)$ in $ \text{BV}^\alpha(\Omega)$ 
  is convex for $\lambda \geq 0$  and  strictly convex if $\lambda > 0$.
\end{theorem}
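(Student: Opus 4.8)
The plan is to split $E$ into its two summands and handle them separately, using the elementary fact that a nonnegative linear combination of convex functionals is convex, and is \emph{strictly} convex as soon as one summand carrying a strictly positive weight is strictly convex. The first summand $\text{TV}^\alpha(u)$ is already known to be convex by Lemma~\ref{lemma3.4} (it is positively homogeneous and subadditive, hence convex), and it is finite for every $u\in\text{BV}^\alpha(\Omega)$ by the very definition of the space. So the whole argument reduces to the elementary fact that the fidelity functional $F(u)=\int_\Omega|u-z|^2\,dx=\|u-z\|_{L^2(\Omega)}^2$ is convex on $\text{BV}^\alpha(\Omega)$, and strictly convex on its effective domain $\{u:\ u-z\in L^2(\Omega)\}$; outside this domain we adopt the convention $F\equiv+\infty$, and all inequalities below are read with the usual extended-real arithmetic.

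The key computation is the identity, valid for every $\theta\in[0,1]$ and every $u,v$ with $u-z,v-z\in L^2(\Omega)$,
\begin{equation*}
F\big(\theta u+(1-\theta)v\big)=\theta F(u)+(1-\theta)F(v)-\theta(1-\theta)\,\|u-v\|_{L^2(\Omega)}^2,
\end{equation*}
obtained by expanding the square after writing $\theta u+(1-\theta)v-z=\theta(u-z)+(1-\theta)(v-z)$. Since the last term is nonpositive, this gives convexity of $F$; since it is strictly negative whenever $\theta\in(0,1)$ and $u\neq v$ in $L^1(\Omega)$ (equivalently, on a set of positive Lebesgue measure), it gives strict convexity of $F$ on its effective domain. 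If either $u-z$ or $v-z$ fails to lie in $L^2(\Omega)$, the convexity inequality for $F$ holds trivially since its right-hand side is $+\infty$.

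Combining the two pieces: for $\lambda\ge 0$, $E=\text{TV}^\alpha+\tfrac{\lambda}{2}F$ is a sum of convex functionals on $\text{BV}^\alpha(\Omega)$, hence convex. For $\lambda>0$, take any $u\neq v$ in $\text{BV}^\alpha(\Omega)$ and any $\theta\in(0,1)$: if $E(u)=+\infty$ or $E(v)=+\infty$ the strict inequality is immediate, and otherwise $u-z,v-z\in L^2(\Omega)$ with $\|u-v\|_{L^2(\Omega)}>0$, so the displayed identity gives $F(\theta u+(1-\theta)v)<\theta F(u)+(1-\theta)F(v)$, while Lemma~\ref{lemma3.4} gives the (non-strict) convexity inequality for $\text{TV}^\alpha$; adding, $E(\theta u+(1-\theta)v)<\theta E(u)+(1-\theta)E(v)$. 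Hence $E$ is strictly convex whenever $\lambda>0$.

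I do not expect a genuine obstacle here: the only care needed is bookkeeping of the $+\infty$ values and the precise meaning of strict convexity for an extended-real functional — in particular, the fact that $\text{TV}^\alpha$ is only convex and not strictly convex does not spoil strict convexity of the sum, because the quadratic fidelity term already strictly separates any two distinct points of $\text{BV}^\alpha(\Omega)$. The one nontrivial ingredient, namely the convexity of $\text{TV}^\alpha$ (ultimately the linearity of the fractional-order derivatives and the sup-definition of $\text{TV}^\alpha$), has already been established in Lemma~\ref{lemma3.4}.
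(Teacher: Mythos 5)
Your proposal is correct and follows essentially the same route as the paper: the paper's proof likewise invokes the convexity of $\text{TV}^\alpha$ from Lemma~\ref{lemma3.4} together with the strict convexity of the quadratic fidelity term $F$, and you have merely written out the parallelogram-type identity and the bookkeeping of extended-real values that the paper leaves implicit.
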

\begin{proof}
Since $F(u)$ is a strictly convex functional, the proof follows from  Lemma \ref{lemma3.4}.
\end{proof}

If a Banach space $X$ is reflexive (separable), then every bounded sequence in $X$ (in $X^*$) has a weakly ($weak^*$) convergent subsequence  \cite[Prop. 38.2]{EZeidler1985}.
Although $\text{BV}^\alpha(\Omega)$ is not reflexive, however, it is the dual of
a separable space. Therefore we can give the following definition:
\begin{definition}[A $weak^*$ topology]\label{definition3.14}
In $\text{BV}^\alpha(\Omega)$, a weak $\text{BV}^\alpha-w^*$ topology is defined as
\[u_j\xlongrightarrow[\text{BV}^\alpha-w^*]{*}u\;\;\Longleftrightarrow u_j\xlongrightarrow[L^1(\Omega)]{}u\;\text{ and }\; \int_\Omega \phI\cdot \nabla^\alpha u_j\;dx\xlongrightarrow\;\int_\Omega \phI \cdot\nabla^\alpha u\;dx\]
for all $\phI$ in $\mathscr{C}^0_0(\Omega,\mathbb{R}^d)$.
\end{definition}

From the above definition \ref{definition3.14}, we may derive the weak compactness of $\text{BV}^\alpha(\Omega)$ on the $weak^*$ topology.
This, combined with the weak lower semi-continuity of $E(u)$ and boundedness of Banach space $\text{BV}^\alpha(\Omega)$ (i.e, $u$ is  bounded in Banach space $\text{BV}^\alpha(\Omega)$), yields the following result:
\begin{theorem}[Existence]\label{theorem_existence0}
The  functional $E(u): BV^\alpha(\Omega)\rightarrow \mathbb{R}$ has a minimum.
\end{theorem}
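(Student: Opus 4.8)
The plan is to use the direct method of the calculus of variations. First I would establish that the functional $E(u)$ is bounded below (it is nonnegative) and take a minimizing sequence $\{u_k\}\subset \text{BV}^\alpha(\Omega)$, so that $E(u_k)\to \inf_{u} E(u)=:m<+\infty$. From $E(u_k)\le m+1$ for large $k$ one immediately gets a uniform bound on $\text{TV}^\alpha(u_k)$ and on $F(u_k)=\|u_k-z\|_{L^2(\Omega)}^2$. The latter, together with $z\in L^2(\Omega)\subset L^1(\Omega)$ and the boundedness of $\Omega$, gives a uniform bound on $\|u_k\|_{L^1(\Omega)}$ via the triangle inequality and Cauchy--Schwarz. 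Hence $\{u_k\}$ is bounded in $\text{BV}^\alpha(\Omega)$.

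Next I would extract a subsequence converging in the $\text{BV}^\alpha\text{-}w^*$ topology of Definition \ref{definition3.14}, i.e. a subsequence (not relabelled) with $u_k\to u^*$ in $L^1(\Omega)$ and $\int_\Omega \phI\cdot\nabla^\alpha u_k\,dx \to \int_\Omega \phI\cdot\nabla^\alpha u^*\,dx$ for all $\phI\in \mathscr{C}_0^0(\Omega,\mathbb{R}^d)$. This is the weak$^*$ compactness statement flagged just before the theorem; it follows because $\text{BV}^\alpha(\Omega)$ is the dual of a separable space, so bounded sequences have weak$^*$ convergent subsequences, and because the $L^1$ bound combined with the $\text{TV}^\alpha$ bound lets one invoke the usual BV-type compact embedding $\text{BV}^\alpha(\Omega)\hookrightarrow\hookrightarrow L^1(\Omega)$ to upgrade to strong $L^1$ convergence of the subsequence. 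By Lemma \ref{lemma3-2}, $\text{TV}^\alpha$ is lower semicontinuous with respect to $L^1$ convergence, so $\text{TV}^\alpha(u^*)\le \liminf_{k}\text{TV}^\alpha(u_k)$; in particular $u^*\in\text{BV}^\alpha(\Omega)$. For the fidelity term, $u_k\to u^*$ in $L^1(\Omega)$ gives (along a further subsequence) pointwise a.e. convergence, whence Fatou's lemma yields $F(u^*)\le\liminf_k F(u_k)$; alternatively one uses weak lower semicontinuity of the $L^2$-norm under the weak convergence one already has. Adding the two inequalities gives $E(u^*)\le \liminf_k E(u_k)=m$, and since $m$ is the infimum we conclude $E(u^*)=m$, so the minimum is attained.

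I expect the only genuinely delicate point to be the compactness step: justifying that a sequence bounded in $\text{BV}^\alpha(\Omega)$ admits a subsequence strongly convergent in $L^1(\Omega)$ (and weak$^*$ convergent in the sense of Definition \ref{definition3.14}). The standard BV argument relies on Rellich--Kondrachov-type compactness; for the fractional-order space one would either cite the separability/duality structure already invoked in the paper (bounded sequences in a dual of a separable space have weak$^*$ convergent subsequences) and then separately obtain strong $L^1$ convergence from an embedding result, or adapt the classical proof using mollification and the uniform $\text{TV}^\alpha$ bound to get equi-integrability and tightness. Everything else — boundedness of the minimizing sequence, lower semicontinuity of $\text{TV}^\alpha$ (Lemma \ref{lemma3-2}), lower semicontinuity of $F$ (Fatou) — is routine, and convexity from Theorem \ref{theorem_convexity} is not even needed for existence (though, combined with strict convexity when $\lambda>0$, it would additionally give uniqueness of the minimizer).
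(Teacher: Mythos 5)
Your argument is the direct method of the calculus of variations (minimizing sequence, coercivity via the fidelity term, weak$^*$/$L^1$ compactness, lower semi-continuity of $\text{TV}^\alpha$ from Lemma \ref{lemma3-2} and of $F$ via Fatou), which is exactly the route the paper takes --- it simply compresses all of this into a citation of Zeidler's Proposition 38.12(d) after asserting weak compactness in the $\text{BV}^\alpha\text{-}w^*$ topology of Definition \ref{definition3.14}. The one step you rightly flag as delicate, namely that a $\text{BV}^\alpha$-bounded sequence admits a subsequence converging strongly in $L^1(\Omega)$, is precisely the step the paper also leaves unproven (it is only asserted via the dual-of-a-separable-space remark), so your write-up is if anything more explicit about where the real work lies.
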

\begin{proof} Follow the similar lines of \cite[Prop. 38.12(d)]{EZeidler1985}).
\qquad\hfill\end{proof}

\begin{theorem}[Uniqueness]\label{theorem3}
The functional $E(u)$ has a unique minimizer in $\text{BV}^\alpha(\Omega)$ when $\lambda > 0$.
\end{theorem}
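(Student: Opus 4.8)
The plan is to obtain uniqueness directly from the strict convexity established in Theorem \ref{theorem_convexity}, together with the existence of a minimizer guaranteed by Theorem \ref{theorem_existence0}. Write $m := \inf_{u\in\text{BV}^\alpha(\Omega)} E(u)$; since $z\in L^1(\Omega)\cap L^2(\Omega)$ and $\text{TV}^\alpha(0)=0$, we have $m\le E(0)<+\infty$, and by Theorem \ref{theorem_existence0} this infimum is attained. Suppose, for contradiction, that $u_1$ and $u_2$ are both minimizers that differ as elements of $\text{BV}^\alpha(\Omega)$, i.e. $\|u_1-u_2\|_{L^1(\Omega)}>0$ (equivalently $\|u_1-u_2\|_{L^2(\Omega)}>0$, since each $u_i-z\in L^2(\Omega)$ by finiteness of the energy).

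First I would note that the midpoint $\bar u:=(u_1+u_2)/2$ again belongs to $\text{BV}^\alpha(\Omega)$, because this space is a vector space (indeed a Banach space by Lemma \ref{lemma3-3}); hence $E(\bar u)\ge m$. Convexity of the regulariser (Lemma \ref{lemma3.4}) gives $\text{TV}^\alpha(\bar u)\le \tfrac12\text{TV}^\alpha(u_1)+\tfrac12\text{TV}^\alpha(u_2)$. The decisive ingredient is the strict convexity of the fidelity term $F(u)=\int_\Omega |u-z|^2\,dx$: applying the pointwise identity $\tfrac12|a|^2+\tfrac12|b|^2-\big|\tfrac{a+b}{2}\big|^2=\tfrac14|a-b|^2$ with $a=u_1-z$, $b=u_2-z$ and integrating yields
\[
\tfrac12 F(u_1)+\tfrac12 F(u_2)-F(\bar u)=\tfrac14\|u_1-u_2\|_{L^2(\Omega)}^2>0 .
\]
Combining the two estimates,
\[
E(\bar u)\le \tfrac12 E(u_1)+\tfrac12 E(u_2)-\tfrac{\lambda}{8}\|u_1-u_2\|_{L^2(\Omega)}^2 = m-\tfrac{\lambda}{8}\|u_1-u_2\|_{L^2(\Omega)}^2<m ,
\]
which contradicts $E(\bar u)\ge m$. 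Therefore $u_1=u_2$ a.e. on $\Omega$, i.e. the minimizer is unique in $\text{BV}^\alpha(\Omega)$.

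I do not expect a genuine obstacle here: the argument is the standard ``a strictly convex proper functional with nonempty domain has at most one minimizer'', and the only points needing a line of care are that $\bar u$ remains in $\text{BV}^\alpha(\Omega)$ (vector-space structure, Lemma \ref{lemma3-3}) and that the $L^2$-difference $\|u_1-u_2\|_{L^2(\Omega)}$ is finite and meaningful (guaranteed by $F(u_i)<+\infty$). Alternatively, one could simply invoke Theorem \ref{theorem_convexity}, which states $E$ is strictly convex for $\lambda>0$, and conclude abstractly that its set of minimizers, nonempty by Theorem \ref{theorem_existence0}, is a singleton.
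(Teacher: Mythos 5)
Your argument is correct and is essentially the paper's own proof: the paper simply invokes the strict convexity from Theorem \ref{theorem_convexity} and cites \cite[Theorem 47C]{EZeidler1985}, whereas you spell out the standard midpoint computation (with the quadratic identity for the $L^2$ fidelity term) that underlies that abstract result. No substantive difference in approach.
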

\begin{proof} The convexity result of Theorem \ref{theorem_convexity}  leads to uniqueness of solutions. Refer to \cite[Theorem 47C]{EZeidler1985}.
\qquad\hfill\end{proof}

We remark that
similar  existence
and uniqueness theories of the total variation problem can be found
in \cite{RAcar1994,AChambolle1997,GAubert2006}.

\section{Nonzero Dirichlet boundary conditions and regularization}
\label{sec_bndy}
The standard definitions for fractional derivatives require a function to have  zero Dirichlet boundary conditions due to end singularity, but for imaging applications such conditions are unrealistic and too restrictive.
To obtain the system for finding the unknown   intensities $u$ at inner nodes of a discretization grids, we have to use  boundary conditions, but the difficulties caused by them in fractional derivative computations would be hard to overemphasize; inaccurate boundary conditions can easily lead to the oscillations near boundaries, so   proper treatment of the boundary conditions for problems involving fractional derivatives is crucial.

  In this section, we shall reduce nonzero Dirichlet boundary conditions to zero ones so that standard  definitions and our introduced algorithms become applicable.
The basic idea of boundary regularization is to introduce an auxiliary unknown function which satisfies the zero boundary conditions. In this way, the non-zero boundary conditions move  to the right-hand side of the equation as a new  known quantity.

We recall that, in the 1D case, if the boundary conditions
are nonzero
\[u(0)=a,\;u(1)=b,\]
we can reduce them to zero boundary conditions by introducing an auxiliary function
$e(x)=a(1-x)+bx$.
Precisely  taking
$\bar{u}(x)=u(x)-e(x)$ \cite{IPodlubny2000},
then
\[\bar{u}(0)=0,\;\bar{u}(1)=0;\; \bar{u}'(0)=\bar{u}'(1)=0 \]
and a Neumann boundary condition is imposed by artificially extending the boundary values i.e. $e'(0)=e'(1)=0$ on $\partial\Omega$.

Below we generalize the above 1D idea to the 2D case, assuming that the four corners of the solution are given or accurately estimated:
  \[u(0,0)=a,\;u(0,1)=b,\;u(1,0)=c,\;u(1,1)=d.\]
With $a,b,c,d$ known, at any image point $(x,y)\in \Omega$, a bilinear auxiliary function satisfying the above $4$ conditions
$e_1(x,y)=a+(c-a)x+(b-a)y+(d+a-c-b)xy$
can be   constructed to lead to
\begin{equation}\label{boundary_regularization1}
\bar{u}(x,y)=u(x,y)-e_1(x,y)
\end{equation}
which takes zero values at all $4$ corners.

If boundary conditions
$u(0,y)=a_1(y),\;u(1,y)=a_2(y),\;u(x,0)=b_1(x),\;u(x,1)=b_2(x)$
at $\partial\Omega$
are known a priori, then we can easily verify that
\begin{equation*}
\begin{split}
\bar{a}_1(y):&=\bar{u}(0,y)=a_1(y)-e_1(0,y),\;\;\bar{a}_2(y):=\bar{u}(1,y)=a_2(y)-e_1(1,y)
;\\
\bar{b}_1(x):&=\bar{u}(x,0)=b_1(x)-e_1(x,0),\;\;
\bar{b}_2(x):=\bar{u}(x,1)=b_2(x)-e_1(x,1)
\end{split}
\end{equation*}
define the new Dirichlet conditions for  $\bar{u}(x,y)$.

We can achieve zero conditions at the edges using the auxiliary function
$e_2(x,y)=\Big((1-x)\bar{a}_1(y)+x\bar{a}_2(y)\Big)
+\Big((1-y)\bar{b}_1(x)+y\bar{b}_2(x)\Big)$.
It is clear to see that the new image
$\tilde{u}(x,y)=u(x,y)-e_1(x,y)-e_2(x,y)$
satisfies
\begin{equation}\tilde{u}(x,y)|_{\partial\Omega}=0.
\end{equation}

\begin{remark}\label{remark4}
It remains to address the question of how to obtain estimates  of $u(x,y)$ at
corners and edges:
   \begin{enumerate}
    \item The true intensities $a:=u(0,0),\;b:=u(1,0);\;c:=u(1,0),\;d:=u(1,1)$ in four corner points are unknown a priori, to build the auxiliary function $e_1(x,y)$, the solutions approximating to them should be solved from the observed image $z(x,y)$ by the local smoothing or other simple techniques.
    \item Similarly, the true edge intensities $a_1(y)$, $a_2(y)$, $b_1(x)$ and $b_2(x)$ are also not given, a reconstruction step on boundary $\partial\Omega$ must be proceeded in order to capture a robust solution.
        To do this, we can apply a 1D model.
   \end{enumerate}
\end{remark}

According to Remark \ref{remark4},   we can propose a complete procedure  for regularizing boundary conditions for 2D variational image inverse problems in edges and corners.
\begin{itemize}
\item
Firstly, we restore image intensities in 4 corner points from an observed image $z$. A natural technique would be local smoothing operator for the region of corner points, the oscillations could also be reduced by many variational methods to local regions.
\item
Secondly,  in order to reconstructed 4 edges from the restored intensities $z(0,y)$, $z(1,y)$, $z(x,0)$ and $z(x,1)$, the total $\alpha$-order variation regularization would be used to solve four 1D inverse problems i.e. solve an equation like (\ref{reg-problem1}):
\begin{equation}\label{eq_1D}
   \min_{u} \{ E^{1D}(u) = \int_a^b | \frac{d^{\alpha}u}{dx^\alpha}| dx +
               \frac{\lambda^{1D}}{2}\int_a^b(u-z)^2 dx \}.
\end{equation}
\end{itemize}
 Thus through $e_1(x,y), e_2(x,y)$, we see that equation (\ref{boundary_regularization1}) reduces to finding the new image $\bar{u}(x,y)$ with zero Dirichlet conditions
 and hence the standard definitions of fractional derivatives for $\bar{u}(x,y)$ apply.

\section{Discretization and Split-Bregman algorithm}\label{sec_dis}
Since solution uniqueness of our variational model (\ref{reg-problem1}) is resolved, we now consider how to seek a numerical solution of
the total $\alpha$-order variation model.
We first  reformulate it in preparation for employment of
an efficient solver and then  discuss some discretization details (by finite-differences) before presenting our Algorithm 1.

\subsection{A Split-Bregman formulation}
Inspired by Goldstein and Osher's Split-Bregman work \cite{TGoldstein2009}, we introduce a special 
and new variable $\dd(x)=(d_1(x),d_2(x))^T$ to the total $\alpha$-order variation based model (\ref{reg-problem1}) to derive the following constrained
optimization problem:
\begin{equation}\label{constrainted_model}
\begin{split}
\min\limits_{u,\dd}&\int_\Omega |\dd|dx+\frac{\lambda}{2} F(u),\quad
\text{s.t.} \  \dd=\nabla^\alpha u.
\end{split}
\end{equation}
To enforce the constraint condition, we transfer it into the Bregman formulation
 \begin{equation*}
 \begin{split}
 (u^{k+1},\dd^{k+1})=&\min\limits_{u,\dd} \int_\Omega|\dd|dx+\frac{\lambda}{2} F(u)-\int_\Omega<\pp^k_d,\dd-\dd^k>dx\\
 &-\int_\Omega<\pp^k_u,u-u^k>dx
 +\frac{\mu}{2}
 \int_\Omega|\dd-\nabla^\alpha u|^2dx,\\
 \pp^{k+1}_u=&\pp^{k}_u-\mu(\nabla^\alpha )^T(\nabla^\alpha u^{k+1}-\dd^{k+1}),\\
  \pp^{k+1}_d=&\pp^{k}_d-\mu(\dd^{k+1}-\nabla^\alpha u^{k+1}).
 \end{split}
 \end{equation*}
The above iterative scheme can be simplified to the two-step algorithm \cite{TGoldstein2009,SSetzer2009,SSetzer2011}:
 \begin{equation}\label{sb_ud}
 \min\limits_{u,\dd} \int_\Omega|\dd|dx+\frac{\mu}{2}
 \int_\Omega|\dd-\nabla^\alpha u+
 \frac{\pp}{\mu}|^2dx+ \int_\Omega|\pp|^2dx + \frac{\lambda}{2} F(u)
 \end{equation}
 with the multiplier updated by iteration
 \(\pp^{k+1}=\pp^{k}-\gamma(\dd-\nabla^\alpha u),\)
 where $\pp(x)=(p_1(x),p_2(x))^T$.
Here the two main subproblems of (\ref{sb_ud}) are
  \begin{equation}\begin{split}
 \mbox{Subproblem } \dd:&\qquad
   \min\limits_{\dd} \int_\Omega|\dd|dx+\frac{\mu}{2}
 \int_\Omega|\dd-\nabla^\alpha u+  \frac{\pp}{\mu}|^2dx;\\
 \mbox{Subproblem } u: &\qquad
 \min\limits_{u} J(u):=
 \frac{\mu}{2}\int_\Omega|\dd-\nabla^\alpha u+\frac{\pp}{\mu}|^2dx+\frac{\lambda}{2} F(u).
 \label{eqn_u2}
 \end{split}
\end{equation}
Further note that
   the   subproblem $\dd$ has a closed-form solution \cite{TGoldstein2009}, while the  subproblem $u$   is determined by the associated Euler-Lagrange equation as shown below.
\begin{theorem}\label{reg-problem02}
Let $u(x)$ be a minimizer of functional $J(u)$ from (\ref{eqn_u2}). Then $u(x)$ satisfies the following first order optimal condition
\begin{equation}\label{fractionalSBEL}
(-1)^n\mu{}^{C}\diver^\alpha\left(\nabla^\alpha u-\dd-\frac{\pp}{\mu}\right)+\lambda (u-z)=0
\end{equation}
with one of these sets of boundary conditions
\[ \begin{split}
\mbox{i)}\ & \mbox{\rm fixed :} \
    u(x)\big|_{\partial\Omega}=b_1(x), \ \text{and }\
    \frac{\partial u(x)}{\partial n}\Big|_{\partial\Omega}=b_2(x);\\
\mbox{ii)}\ & \mbox{\rm homogeneous :} \
 D^{\alpha-2}
\left( \nabla^\alpha u-\dd-\frac{\pp}{\mu}\right)\cdot \mathbf{n}\Big|_{\partial\Omega} = 0, \ \
 D^{\alpha-1}
\left( \nabla^\alpha u-\dd-\frac{\pp}{\mu}\right)\cdot \mathbf{n}\Big|_{\partial\Omega} = 0
\end{split}\]
where $\mathbf{n}$ denotes the unit outward normal and $^C\diver^\alpha$ denotes the divergence operator based on the C derivative.
\end{theorem}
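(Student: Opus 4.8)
The approach is the standard derivation of an Euler--Lagrange equation, transplanted to the fractional setting: form the first variation of $J$, shift the fractional gradient off the test function via the $\alpha$-order integration by parts formula (\ref{part_integral}), and then read the PDE from the interior contribution and the two admissible boundary conditions from the boundary contribution.

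Since $J$ in (\ref{eqn_u2}) is a sum of two quadratic functionals it is differentiable, and by the linearity of the fractional gradient $\nabla^\alpha$ its first variation in an admissible direction $\eta$ is
\begin{equation*}
\langle J'(u),\eta\rangle=\mu\int_\Omega\Big(\nabla^\alpha u-\dd-\tfrac{\pp}{\mu}\Big)\cdot\nabla^\alpha\eta\,dx+\lambda\int_\Omega(u-z)\,\eta\,dx .
\end{equation*}
A minimizer makes $\langle J'(u),\eta\rangle=0$ for every admissible $\eta$ (strict convexity of $J$, inherited from $F$, also makes this sufficient, but only necessity is used). Write $\boldsymbol{\psi}:=\nabla^\alpha u-\dd-\tfrac{\pp}{\mu}$ for the flux.

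Next, apply (\ref{part_integral}) in each coordinate direction $x_i$ (Fubini in the other variable) to transfer the R--L derivative acting on $\eta$ onto the component $\psi_i$. Since $1<\alpha<2$ we have $n=2$ and $\ell=1$, so the sum in (\ref{part_integral}) has exactly the two terms $j=0,1$, which produce the endpoint quantities $D^{\alpha-2}_{x_i}\psi_i\,\partial_{x_i}\eta$ and $-D^{\alpha-1}_{x_i}\psi_i\,\eta$ alongside the interior term $(-1)^n\int\eta\,{}^C\!D^\alpha_{x_i}\psi_i\,dx$; when $\eta\in\mathscr{C}_0^\ell$ this reduces to (\ref{fractional_dual}). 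Summing over $i$ and collecting the endpoint terms over the four sides of $\Omega$ into boundary integrals with outward normal $\mathbf{n}$ gives
\begin{equation*}
\langle J'(u),\eta\rangle=\int_\Omega\Big[(-1)^n\mu\,{}^C\diver^\alpha\boldsymbol{\psi}+\lambda(u-z)\Big]\eta\,dx+\mu\int_{\partial\Omega}\Big(D^{\alpha-2}\boldsymbol{\psi}\cdot\mathbf{n}\,\tfrac{\partial\eta}{\partial n}-D^{\alpha-1}\boldsymbol{\psi}\cdot\mathbf{n}\,\eta\Big)\,ds .
\end{equation*}
Testing first against $\eta\in\mathscr{C}_0^\ell(\Omega,\mathbb{R})$ kills the boundary integral, and the fundamental lemma of the calculus of variations forces the interior bracket to vanish a.e., which is (\ref{fractionalSBEL}). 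With the PDE in hand the identity reduces to the boundary integral, which must vanish for all admissible $\eta$: either $u$ is prescribed on $\partial\Omega$ together with its normal derivative, so $\eta|_{\partial\Omega}=\partial\eta/\partial n|_{\partial\Omega}=0$ and the boundary integral vanishes automatically (case i), or the two traces of $\eta$ and $\partial\eta/\partial n$ are free and independent on $\partial\Omega$, so their coefficients must vanish, yielding the homogeneous natural conditions $D^{\alpha-2}\boldsymbol{\psi}\cdot\mathbf{n}=0$ and $D^{\alpha-1}\boldsymbol{\psi}\cdot\mathbf{n}=0$ with $\boldsymbol{\psi}=\nabla^\alpha u-\dd-\pp/\mu$ (case ii).

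The delicate point is promoting the one-dimensional formula (\ref{part_integral}) to the stated two-dimensional result: one must verify that $\boldsymbol{\psi}$ is regular enough for the lower-order fractional derivatives $D^{\alpha-1}\boldsymbol{\psi}$ and $D^{\alpha-2}\boldsymbol{\psi}$ to possess traces on $\partial\Omega$ and that Fubini together with the coordinate-wise integration by parts is legitimate (this is where $u\in W_1^\alpha(\Omega)$ and the regularity of $\dd,\pp$ are used), and then correctly assemble the eight endpoint contributions --- two per side of the square $\Omega$ for each of the two derivative orders --- into the single clean pair of boundary integrals above. Once this bookkeeping is in place the remaining steps are routine.
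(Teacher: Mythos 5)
Your proposal is correct and follows essentially the same route as the paper's own proof in the Appendix: compute the Gateaux derivative of $J$, apply the fractional integration by parts formula (\ref{part_integral}) coordinate-wise to obtain the interior term $(-1)^n\,{}^C\diver^\alpha$ plus the $j=0,1$ endpoint contributions, and then read off the PDE and the two admissible sets of boundary conditions. The only cosmetic differences are that you first localize with compactly supported test functions before treating the boundary terms (the paper handles the two cases directly) and that you explicitly flag the trace/regularity issue, which the paper leaves implicit.
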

\begin{proof}
Refer to Appendix.
\end{proof}

\subsection{Discretization of the fractional derivative}
Before introducing the finite difference discretization of the fractional derivative, we define a spatial partition $(x_k,y_l)$ ( for all $k=0,1,\dots,N+1;l=0,1,\dots,M+1$) of image domain $\Omega$.
Assume $u$ has a zero Dirichlet boundary condition (practically we apply the
regularization method \S\ref{sec_bndy} first before discretization).
Here we mainly consider
the discretization of the $\alpha$-order fractional derivative
at the inner point $(x_k,y_l)$ (for all $k=,1,\dots,N;l=0,1,\dots,M$) on $\Omega$ along $x$-direction by using the approach
\begin{equation}\label{fractionaldiscretization}
\begin{split}
 D_{[a,b]}^\alpha f(x_k,y_l) &=\frac{\delta_0^\alpha f(x_k,y_l)}{h^\alpha}+O(h)\
=\frac{1}{2}\Big(\frac{\delta_-^\alpha f(x_k,y_l)}{h^\alpha}+\frac{\delta_+^\alpha f(x_k,y_l)}{h^\alpha}\Big)+O(h)\\
&=\frac{1}{2}\Big(h^{-\alpha}\sum_{j=0}^{k+1}
\omega_j^{\alpha}f^l_{k-j+1}+h^{-\alpha}\sum_{j=0}^{N-k+2}
\omega_j^{\alpha}f^l_{k+j-1}\Big)+O(h),
\end{split}
\end{equation}
which is applicable to both the R-L and C derivatives \cite{IPodlubny2009,HWang2014}, where $f^l_{s}=f_{s,l}$, $\omega_j^{(\alpha)}=(-1)^j\left(
                                   \begin{array}{c}
                                     \alpha \\
                                     j \\
                                   \end{array}
                                 \right)$, $j=0,1,\dots,N+1$ and
\[ \omega_0^{(\alpha)}=1; \omega_j^{(\alpha)}=(1-\frac{1+\alpha}{j})\omega_{j-1}^{(\alpha)}, \;
\mbox{for}\; j>0.\]
Alterative discretization for fractional derivatives in the Fourier space can be found in \cite{JBai2007,PGuidotti2009b}.

Observe from (\ref{fractionaldiscretization}) that the first order estimate of the $\alpha$-order fractional $D_{[a,b]}^\alpha f(x_k,y_l)$ along $x$-direction at the point $(x_k,y_l)$ with a fixed $y_l$ is a linear combination of $N+2$ values $\{f^l_{0},f^l_1,\dots,f^l_{N},f^l_{N+1}\}$.

After incorporating  zero boundary condition  in the matrix approximation of fractional derivative,
  all $N $ equations of fractional derivatives along $x$ direction in 
  (\ref{fractionaldiscretization}) can be written simultaneously in the matrix form
  (denote $w=\omega_0^{\alpha}+\omega_2^{\alpha}$):
\begin{equation}\label{eq5.1}
\begin{split}
 \left(
  \begin{array}{c}
    \delta_0^\alpha f(x_1,y_l) \\
    \delta_0^\alpha f(x_2,y_l) \\
   \vdots \\
   \vdots \\
    \delta_0^\alpha f(x_{N},y_l) \\
  \end{array}
\right)\ = \underbrace{\frac{1}{2h^\alpha}\left(
                            \begin{array}{ccccc}
                               2\omega_1^{\alpha} & w
                               & \omega_3^{\alpha} &  \cdots &  \omega_N^{\alpha} \\
                              w & 2\omega_1^{\alpha} & \ddots &  \ddots&  \vdots \\
                              \omega_3^{\alpha} &  \ddots &\ddots & \ddots & \omega_3^{\alpha} \\
                              \vdots & \ddots & \ddots  & 2\omega_1^{\alpha} & w\\
                              \omega_N^{\alpha} & \cdots & \omega_3^{\alpha} & w & 2\omega_1^{\alpha} \\
                            \end{array}
                          \right)}_{B^\alpha_N}\underbrace{\left(
                                   \begin{array}{c}
                                     f^l_1 \\
                                     f^l_2 \\
                                     \vdots \\
                                    \vdots \\
                                     f^l_{N} \\
                                   \end{array}
                                 \right)}_{f}.
                                 \end{split}
\end{equation}
From the definition of fractional order derivative (\ref{fractionaldiscretization}), for any $1< \alpha<2$, the coefficients $\omega_k^{(\alpha)}$ has the following properties \cite{IPodlubny1999,HWang2014}:
\begin{description}\itemindent=1cm
  \item[1).]  $\omega_0^{(\alpha)}=1,\; \omega_1^{(\alpha)}=-\alpha<0$,\;\qquad\textbf{2).}\;$1\geq \omega_2^{(\alpha)}\geq \omega_3^{(\alpha)}\geq\dots\geq 0$,
  \item[3).] $\sum_{k=0}^{\infty}\omega_k^{(\alpha)}=0$,\; \quad\quad\quad\qquad\quad \textbf{4).}\; $\sum_{k=0}^{m}\omega_k^{(\alpha)}\leq 0 \;(m\geq 1)$.
\end{description}
Hence by the Gerschgorin circle theorem, one can derive that
matrix $B^\alpha_N$ in (\ref{eq5.1}) is a symmetric and negative definite Toeplitz matrix (i.e. $-B^\alpha_N$ is a positive definite Toeplitz matrix).

We recall that the Kronecker product $A\otimes B$ of the $p\times q$ matrix $A=[a_{ij}]$ and the $n\times m$ matrix $B=[b_{rt}]$
is the $np\times mq$ matrix having the block structure $A\otimes B:=[a_{ij}B]$. Further vector $(A\otimes B)x$ can be computed by matrix scheme
$BXA^T$ (i.e., $[(A\otimes B)x]_s=[BXA^T]_{j,i}$ with $s=(i-1)m+j$),
where the $m\times q$ matrix $X$ is the reshape of the vector $x$ along its column.

Let $U\in {\mathbb R}^{N\times M}$ denote the solution matrix at all nodes $(kh_x; lh_y )$, $k=1,\dots,N;\;l=1,\dots,M$, corresponding to
 x-direction and y-direction spatial discretization nodes.
Denote by $\vec{u}\in {\mathbb R}^{N\!\!M\times 1}$ the ordered solution vector of $U$.
The direct and  discrete analogue of differentiation of arbitrary $\alpha$
order derivative is
\[u^{(\alpha)}_x=(I_M\otimes B^\alpha_N)\vec{u}=B_x^{(\alpha)}\vec{u},\]
where
\(\displaystyle {u}^{(\alpha)}_x=\left(u^{(\alpha)}_{11},\dots,u^{(\alpha)}_{N1},
u^{(\alpha)}_{12},\dots,u^{(\alpha)}_{NM}\right)^T,\;\vec{u}
=\left(u_{11},\dots,u_{N1},u_{12},\dots,u_{NM}\right)^T.\)
Similarly, the $\alpha$-th order y-direction derivative of $u(x; y)$
is approximated by:
\[u^{(\alpha)}_y=B_y^{(\alpha)}\vec{u}=(B^\alpha_M\otimes I_N)\vec{u},\;\;\text{where}\;{u}^{(\alpha)}_y=\left(u^{(\alpha)}_{11},\dots,u^{(\alpha)}_{1M},u^{(\alpha)}_{21},\dots,u^{(\alpha)}_{NM}\right)^T.\]

\subsection{The Split-Bregman algorithm}
In discrete form, we are ready to state the discretized equations in structured matrix form. The discrete scheme of 
(\ref{fractionalSBEL}) is  given by
\[\begin{split}
(-1)^n\mu\Bigg(\Big((B_x^{(\alpha)})^T(B_x^{(\alpha)}\vec{u})+&(B_y^{(\alpha)})^T(B_y^{(\alpha)}\vec{u})\Big)-\Big(B_x^{(\alpha)})^T \vec{d}_1+B_y^{(\alpha)})^T\vec{d}_2\Big)\\-&\frac{1}{\mu}\Big(B_x^{(\alpha)})^T \vec{p}_1+B_y^{(\alpha)})^T\vec{p}_2\Big)\Bigg)+\lambda(\vec{u}-\vec{z})=0
\end{split}\]
with discretizations $\vec{d}_i=\left(d^i_{11},\dots,d^i_{N1},d^i_{12},\dots,d^i_{NM}\right)^T$ and $\vec{p}_i=\left(p^i_{11},\dots,p^i_{N1},p^i_{12},\dots,p^i_{NM}\right)^T$ of vectors $\dd$ and $\pp$ ($i=1,2$). A matrix approximation equation
is given as
\begin{equation}\label{sbeq}
\begin{split}
\underbrace{\Big((B^\alpha_N)^T(B^\alpha_NU)+U(B^\alpha_M)^TB^\alpha_M\Big)+\bar{\lambda} U}_{WU}=\underbrace{\bar{\lambda} Z+\Big((B^\alpha_N)^T D_1+D_2B^\alpha_M\Big)+\frac{1}{\mu}\Big((B^\alpha_N)^T P_1+P_2B^\alpha_M\Big)}_{F},
\end{split}\end{equation}
where $D_i$ and $P_i$ are $N\times M$-size reshape matrices of vectors $\vec{d}_i$ and $\vec{p}_i$ for $i=1,2$, $\bar{\lambda}=(-1)^n\lambda/\mu$.
The following   justifies the use of a conjugate gradient method for $WU=F$.

\begin{theorem}\label{theorem5.5}\mbox{}
The weighted matrices inner product $\langle WU,U\rangle=\sum\limits_{ij}(\sum\limits_{k} W_{ik}U_{kj})U_{ij}$ is positive for any matrix $U\neq 0$, where $W$ is a known positive definite operator.
\end{theorem}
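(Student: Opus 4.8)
The plan is to first extract from (\ref{sbeq}) the precise action of the operator, namely $WU=(B^\alpha_N)^T(B^\alpha_N U)+U(B^\alpha_M)^T B^\alpha_M+\bar\lambda\,U$ on $U\in\mathbb{R}^{N\times M}$, with $\bar\lambda=(-1)^n\lambda/\mu$, and to read $\langle\cdot,\cdot\rangle$ as the Frobenius inner product $\langle A,B\rangle=\sum_{ij}A_{ij}B_{ij}=\mathrm{tr}(A^TB)$; the double sum written in the statement is exactly this expression, provided one remembers that the ``matrix'' $W$ also carries the right-multiplication block $U\mapsto U(B^\alpha_M)^T B^\alpha_M$ and is not literally $\sum_k W_{ik}U_{kj}$. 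Then I would expand $\langle WU,U\rangle$ by bilinearity into three pieces, coming from the $x$-derivative block, the $y$-derivative block and the fidelity block.

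For the first two pieces, cyclicity of the trace gives $\langle(B^\alpha_N)^T B^\alpha_N U,\,U\rangle=\mathrm{tr}\big((B^\alpha_N U)^T(B^\alpha_N U)\big)=\|B^\alpha_N U\|_F^2\ge 0$ and $\langle U(B^\alpha_M)^T B^\alpha_M,\,U\rangle=\mathrm{tr}\big((B^\alpha_M)^T B^\alpha_M\,U^TU\big)=\|U(B^\alpha_M)^T\|_F^2\ge 0$. For the third piece I would invoke the standing hypothesis $1<\alpha<2$, which forces $n=2$, so that $\bar\lambda=\lambda/\mu>0$ for $\lambda,\mu>0$; consequently $\langle WU,U\rangle=\|B^\alpha_N U\|_F^2+\|U(B^\alpha_M)^T\|_F^2+\bar\lambda\,\|U\|_F^2\ \ge\ \bar\lambda\,\|U\|_F^2\ >\ 0$ for every $U\neq 0$, which is the assertion.

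To make the ``positive definite operator'' wording fully precise one can vectorize: with $\vec u=\mathrm{vec}(U)$ and the Kronecker identities $B_x^{(\alpha)}=I_M\otimes B^\alpha_N$, $B_y^{(\alpha)}=B^\alpha_M\otimes I_N$ of the preceding subsection, one obtains $\langle WU,U\rangle=\vec u^{\,T}\mathcal{W}\vec u$ with $\mathcal{W}=I_M\otimes\big((B^\alpha_N)^T B^\alpha_N\big)+\big((B^\alpha_M)^T B^\alpha_M\big)\otimes I_N+\bar\lambda\,I_{NM}$; since $B^\alpha_N$ and $B^\alpha_M$ were shown (via the Gerschgorin argument after (\ref{eq5.1})) to be symmetric negative definite, hence nonsingular, each summand is symmetric positive definite, so $\mathcal{W}$, equivalently $W$, is symmetric positive definite — which is the claim and also records the self-adjointness needed to justify the conjugate gradient solver for $WU=F$. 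I do not expect a genuine obstacle here: the only points requiring care are interpreting $W$ as the full operator rather than the literal double sum, and pinning the sign $\bar\lambda>0$ to the fact that $n=2$ in the range $1<\alpha<2$ under study; the remainder is the routine trace manipulation above together with the previously established definiteness of $B^\alpha_N$ and $B^\alpha_M$.
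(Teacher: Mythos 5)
Your proof is correct and follows essentially the same route as the paper: expand $\langle WU,U\rangle$ by bilinearity into $\|B^\alpha_N U\|_F^2+\|U(B^\alpha_M)^T\|_F^2+\bar\lambda\|U\|_F^2$ and conclude positivity. The extra details you supply (the sign check $\bar\lambda=\lambda/\mu>0$ since $n=2$ for $1<\alpha<2$, and the Kronecker vectorization exhibiting $\mathcal{W}$ as symmetric positive definite) are left implicit in the paper but are consistent with, and slightly strengthen, its argument.
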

\begin{proof}
For any matrix $U\neq 0$, it is easy to show that
\[\begin{split}
\langle WU,U\rangle&=\Big\langle \big((B^\alpha_N)^T(B^\alpha_NU)+U(B^\alpha_M)^TB^\alpha_M\big)+\bar{\lambda} U,\  U\Big\rangle\\
&=\langle B^\alpha_NU, B^\alpha_NU\rangle+\langle U(B^\alpha_M)^T, U(B^\alpha_M)^T\rangle+\bar{\lambda} \langle U, U\rangle\\
&=\| B^\alpha_NU\|_F^2+\| U(B^\alpha_M)^T\|_F^2+\bar{\lambda}\|U\|_F^2>0,
\end{split}\]
which completes the proof.
\end{proof}

An implementation of this method may be summarized below:
\begin{alg}[Split-Bregman iterations (PDE-SB)]\label{algsb02}\mbox{}
\begin{enumerate}[step 1.]
\item Boundary regularization 
for an observed image $z$;\label{alg02-00}
\item Given initial matrices $P_1^{k=0}$, $P_2^{k=0}$ and $U^{k=0}$;\label{alg02-01}
\item Solve subproblem $\mathbf{d}$: Compute the auxiliary matrix $\Bigg(
                                           \begin{array}{c}
                                             D_1 \\
                                             D_2 \\
                                           \end{array}
                                         \Bigg)
$  from the closed form solution \[\Bigg(\begin{array}{c}D_1 \\D_2 \\\end{array}\Bigg)^{k+1}=shrink\left(\Bigg(
                                           \begin{array}{c}B^\alpha_NU^{k+1} \\U^{k+1}(B^\alpha_M)^T \\\end{array}
                                         \Bigg)+\Bigg(
                                           \begin{array}{c}
                                             P_1 \\
                                             P_2 \\
                                           \end{array}
                                         \Bigg)^k,{1\over\mu}\right) \]\label{alg02-02}
    by solving the Moreau-Yosida problem with the $l^1$ regularization;
\item Solve subproblem $u$: Find the solution $U^{k+1}$ of (\ref{sbeq}) with an effective parameter $\lambda$ $\mu$ by CG method;\label{alg02-03}
\item Update $\Bigg(\begin{array}{c} P_1 \\P_2 \\\end{array}\Bigg)^{k+1}=\Bigg(\begin{array}{c} P_1 \\P_2 \\ \end{array}\Bigg)^{k}+\gamma\left(\Bigg(\begin{array}{c}B^\alpha_NU^{k+1} \\U^{k+1}(B^\alpha_M)^T \\\end{array}
                                         \Bigg)-\Bigg(\begin{array}{c}D_1 \\D_2 \\\end{array}\Bigg)^{k+1}\right)$ with $\gamma\in (0,1]$;\label{alg02-04}
\item Check the stopping condition; \label{alg02-05}
\begin{itemize}
\item If $|U^k-U^{k+1}|<\epsilon$,\\ \text{ }
    stop and return $U^*:=U^{k+1}$;
\item else\\
\text{ \ }   $k:=k+1$, go back to Step \ref{alg02-02};\label{alg02-06}
\item  end
\end{itemize}
\item Accept the correct solution $U$ from boundary regularization.
\end{enumerate}
\end{alg}

\section{Optimization based numerical methods}
As many variational models are increasingly solved by the discretise-optimise approach,
we now present three related algorithms for model (\ref{reg-problem02})
 after applying a finite difference discretization.
In this section, we assume that we have the   zero Dirichlet boundary conditions for $u$
mainly to simplify the notation.

As   in \S \ref{sec_dis}, the $\alpha$-th order derivative $u_x^{(\alpha)}$ of $u(x; y)$ along  all
$x$-direction  nodes in $\Omega$ can be given by matrix $B^\alpha_NU$, and similarly $U(B^\alpha_M)^T$ for $y$-direction (as $U$ is the solution matrix).

Define $\langle U,V\rangle=\sum\limits_{ij}U_{ij}V_{ij}$ and let $V_1=\{p\ |\ 0 \leq p\leq1\},\  V_2=\{p\ |\ |p|\leq1\}$. Then
using the discrete setting introduced above, the discretised  problem of model (\ref{reg-problem1}) is
\begin{equation}\label{discrete_eq1}
\min\limits_{U\in V_1}\max\limits_{\Phi\in V_2} G(U,D^*\Phi)+\frac{\lambda}{2}H(U)
\end{equation}
where $H(U)=\sum\limits_{ij}(U_{ij}-Z_{ij})^2$ and $G(U,D^*\Phi)=\langle U,D^*\Phi\rangle=\sum\limits_{ij} U_{ij}\left(B^\alpha_N\Phi_1+\Phi_2(B^\alpha_M)^T\right)_{ij}$, due to $D^*\Phi=B^\alpha_N\Phi_1+\Phi_2(B^\alpha_M)^T$.
We also have the adjoint relationship $\langle U,D^*\Phi\rangle= \langle DU,\Phi\rangle$
with $DU = (B^\alpha_NU,U(B^\alpha_M
)^T )$ and $\Phi= (\Phi_1,\Phi_2)$.
In line with the literature, this model can be denoted by the convex optimization problem in a generic notation by
\begin{equation}\label{min-f-g}
\min\limits_{x} \left\{f_1(x)+f_2(x)\right\}
\quad \mbox{i.e.} \
\min\limits_{x, y} \left\{f_1(x)+f_2(y)\right\}\  \mbox{s.t.} \ x=y
\end{equation}
where one views $x=U$, $f_1(x)=\max_{\Phi\in V_2}G(U,D^*\Phi), \ f_2(x)=H(U)$.
We also need the notation
\[\text{prox}^\lambda_{ f_1}(x^k):=\argmin_{x\in V_1}\left\{f_1(x) + \frac{1}{2\lambda}\|x-x^k\|^2\right\}\]
where $f_1$ can be any other convex function and $\lambda>0$.

To solve (\ref{min-f-g}) by the methods to be presented, computation of the proximal point $\text{prox}^\lambda_{f_1}(x)$ is a major and nontrivial step.
We consider how to compute it when  $D=\nabla^\alpha$, borrowing ideas from solving
a similar problem of TV regularization.
In a dual setting, Chambolle \cite{AChambolle2004,AChambolle2011} firstly proposed a discrete dual method by optimizing a cost function consisting of two variants
 \cite{AChambolle2004,DLChen2013a}. Recently, one variant of this scheme is employed in \cite{DLChen2013a} to effectively solve a fractional image model by a dual transform. The other variant
is used in  \cite{DLChen2013c}.

Define two projections as
\[\text{Proj}_{V_1}(p)=\left\{\begin{array}{ll}
                  0 & p<0  \\
                  p & 0\leq p\leq 1\\
                  1 &  1\leq p,
                \end{array}\right.\qquad
        \text{Proj}_{V_2}(p)  =\frac{p}{\max(1, \|p\|)}.
                \]
Noting $\frac{\partial G(x,D^*\Phi)}{\partial x}=D^*\Phi$ and that the optimal solution
  is
\begin{equation}\label{discrete_solution}
 x=\text{prox}^\gamma_{f_1}(x^k)=\text{Proj}_{V_1}(\bar{x}),
\end{equation}
where $\bar{x}=x^k-\gamma D^*\Phi$ and $\Phi$ is unknown. Based on methods of Chambolle \cite{AChambolle2004} and
      Beck-Teboulle \cite{ABeck2009b}, we see that
      (\ref{discrete_solution})
            can be used to
      reduce the min-max problem
      \[\min_{x\in V_1}\left\{\max_{\Phi\in V_2}\langle x,D^*\Phi\rangle +
\frac{1}{2\gamma}\|x-x^k\|^2\right\}\] to the dual problem $\max_{\Phi\in
V_2}\langle \text{Proj}_{V_1}(\bar{x}),D^*\Phi\rangle +
\frac{1}{2\gamma}\|\text{Proj}_{V_1}(\bar{x})-x^k\|^2$
and further  to
\[
\begin{split}
&\langle \text{Proj}_{V_1}(\bar{x}),D^*\Phi\rangle +
\frac{1}{2\gamma}\|\text{Proj}_{V_1}(\bar{x})-x^k\|^2\quad
=\langle \text{Proj}_{V_1}(\bar{x}),D^*\Phi\rangle\\
 & +
\frac{1}{2\gamma}\|\text{Proj}_{V_1}(\bar{x})-x^k+\gamma D^*\Phi\|^2-\frac{1}{2\gamma}\|\gamma D^*\Phi\|^2-\frac{1}{2\gamma}2\langle \text{Proj}_{V_1}(\bar{x})-x^k,\gamma D^*\Phi\rangle\\
=&
\frac{1}{2\gamma}\|\text{Proj}_{V_1}(\bar{x})-(x^k-\gamma D^*\Phi)\|^2-\frac{1}{2\gamma}\|\gamma D^*\Phi\|^2+\frac{1}{2\gamma}2\langle x^k,\gamma D^*\Phi\rangle
\\
=&
\frac{1}{2\gamma}\|\text{Proj}_{V_1}(\bar{x})-(x^k-\gamma D^*\Phi)\|^2-\frac{1}{2\gamma}\left(\|\gamma D^*\Phi\|^2-2\langle x^k,\gamma D^*\Phi\rangle+\|x^k\|^2\right)+\frac{1}{2\gamma}\|x^k\|^2
\end{split}
\]
\begin{equation}\label{eq3}
\begin{split}
=&
\frac{1}{2\gamma}\|\text{Proj}_{V_1}(\bar{x})-(x^k-\gamma D^*\Phi)\|^2-\frac{1}{2\gamma}\|x^k-\gamma D^*\Phi\|^2+\frac{1}{2\gamma}\|x^k\|^2\\
=&\frac{1}{2\gamma}\left(\|\bar{x}-\text{Proj}_{V_1}(\bar{x})\|^2-\|\bar{x}\|^2+\|x^k\|^2\right),
\end{split}
\end{equation}
i.e. $\max_{\Phi\in V_2}\langle \text{Proj}_{V_1}(\bar{x}),D^*\Phi\rangle +
\frac{1}{2\gamma}\|\text{Proj}_{V_1}(\bar{x})-x^k\|^2=-\frac{1}{2\gamma}\min\limits_{\Phi\in
V_2} h(\Phi)$ where $h(\Phi)=\|x^k-\gamma D^*\Phi\|^2-\|(x^k-\gamma
D^*\Phi)-\text{Proj}_{V_1} (x^k-\gamma
D^*\Phi)\|^2-\|x^k\|^2=\|\bar{x}\|^2-\|\bar{x}-\text{Proj}_{V_1}(\bar{x})\|^2-\|x^k\|^2$.

Below we consider the operator
$S(\bar{x})=\|\bar{x}-\text{Proj}_{V_1}(\bar{x})\|^2=\inf\limits_y
\left\{\delta_{V_1}(y)+\frac{1}{2\gamma}\|y-\bar{x}\|^2\right\}$. Since its
gradient is $\nabla_{\bar{x}} S(\bar{x})=2(\bar{x}-Proj_{V_1}(\bar{x}))$,
  we get
\[\nabla_\Phi h(\Phi)=-2\gamma D(\text{Proj}_{V_1} (x^k-\gamma D^*\Phi)).\]
The minimization problem
$\min\limits_{\Phi\in V_2} h(\Phi)$
can be solved to obtain the $\Phi$-update as follows
\begin{enumerate}[1)]
  \item \(\bar{\Phi}=\Phi^n-L(h)\nabla_\Phi h(\Phi^n); \)
  \item
      \(\Phi^{n+1}=\text{Proj}_{V_2}(\bar{\Phi})=\text{Proj}_{V_2}\left(\Phi^n+2L(h)\gamma
      D(\text{Proj}_{V_1} (x^k-\gamma D^*\Phi^n))\right)\)
\end{enumerate}
using
the gradient projection scheme of $h(\Phi)$    \cite{ABeck2009b}.
Here $L(h)\leq 16\gamma^2$ is the Lipschitz constant. Finally   the proximal point $\text{prox}_{f_1}^{\gamma}(x^k)$ is given by (\ref{discrete_solution}) once $\Phi$ is obtained; see also \cite{ABeck2009b}.

\subsection{Forward-backward algorithm}
Various applications in sparse optimizations stimulated the search for simple and efficient
first-order methods.
The forward backward scheme for  (\ref{min-f-g}) is
based (as the name suggests) on  recursive application
of an explicit forward step with respect to $f_2$, i.e,
\[\min\limits_{x}\Big\{
\underbrace{f_2(x^k)+\langle \nabla f_2(x^k), x-x^k\rangle}_{l(x)}+\frac{1}{2\gamma}\|x-x^k\|^2\Big\},\]
and followed by
an implicit backward step with respect to $f_1$, i.e.,
\begin{equation}\label{subprob-g}
\min\limits_{x}\Big\{f_1(x)+\frac{1}{2\gamma}\|x-x^k\|^2\Big\}.
\end{equation}
The scheme decouples the contributions of the functions $f_1$ and $f_2$ in a gradient
descent step 
\cite{bauschke2011fixed}.
The scheme is also known under the name of proximal gradient methods \cite{ZWShen2011,combettes2005signal,SSetzer2009,bauschke2011fixed},
since the implicit step relies on the computation of the so-called proximity operator.

 The forward backward
algorithm is summarised as follows.
\begin{alg}[Forward-backward algorithm (FB)\cite{bauschke2011fixed}]\label{algfb}\mbox{}
\begin{itemize}
\item Fix initial $x_0$, set $\epsilon\in [0, \min\{1,1/\beta\}]$, $\beta$ (a Lipschitz parameter);
\item For $k\geq 0$
\begin{enumerate}[ Step 1.]\itemindent=1cm
\item  $\gamma_k\in[\epsilon,2/\beta-\epsilon]$, $\lambda_k\in [\epsilon,1] $;
\item $y_k=\text{prox}^{\gamma_k}_{\l}(x_k)$ 
 \item $ x_{k+1}=\text{prox}^{\gamma_k}_{f_1}(y_k)$;
 \item  $x_{k+1}=x_k+\lambda_k(x_{k+1}-x_k)$;
 \item Stop when $\|x_{k+1}-x_k\|$ is small enough otherwise continue.
\end{enumerate}

\end{itemize}
\end{alg}

\subsection{Nesterov's method} 
 As a gradient based
method, though simple, the above method can exhibit a slow speed of convergence. For this reason,
Nesterov \cite{YNesterov1983} proposed an improved
gradient method
aiming to accelerate and modify   the classical forward-backward splitting
algorithm, while achieving  an almost optimal convergence rate.
As a consequence of this
breakthrough, a few recent works have followed up the idea and improved techniques for some
  specific problems in signal or image processing \cite{ABeck2009b,JFAujol2009}.

Recently Nesterov \cite{YNesterov2013} presented an accelerated multistep version, which converges as $O(\frac{1}{r^2})$ ($r$ is the iteration number). For a problem of type (\ref{min-f-g}), this new method introduced a composite gradient mapping.
We now show the algorithm as follows.
\begin{alg}[Nesterov accelerated method (Nesterov \cite{YNesterov2013})]\label{alg_nesterov}\mbox{}
\begin{itemize}
\item Fix initial $x_0$, $b_0$, set $y_0=x_0$ and $\beta$ (a Lipschitz parameter);
\item For $k\geq 0$
\begin{enumerate}[ Step 1.]\itemindent=1cm
\item  Find $a=a_k$ from the quadratic equation
           $\frac{a^2}{2(b_k +a)} =\frac{1+b_k }{\beta}$;
\item $v_ =\text{prox}^{b_k}_{f_1}(x_k-y_k)$;
\item $z_{k+1}=\frac{b_kx_k+a_kv_k}{b_k+a_k}$;
\item $x_{k+1}=\text{prox}^{\beta^{-1}}_{f_1}(z_{k+1}-\beta^{-1}\nabla f_2(z_{k+1}))$;
 \item $y_{k+1}=y_{k}+a_{k}\nabla f_2(x_{k+1})$;
 \item  $b_{k+1}=b_k+a_k$;
 \item Stop when $\|x_{k+1}-x_k\|$ is small enough otherwise continue.
\end{enumerate}
\end{itemize}
\end{alg}

\subsection{FISTA method}
Beck and Teboulle \cite{ABeck2009b,ABeck2009a} proposed a fast iterative shrinkage thresholding algorithm (FISTA)   to solve the image denoising and deblurring model, The method applies the idea  of Nesterov to the forward-backward splitting framework,
resulting in the same optimal
convergence rate as Nesterov’s method but wider applicability.
It can be applied to a variety of practical
problems arising from sparse signal recovery, image processing
and machine learning and hence has become a standard
algorithm.

Applying it to (\ref{min-f-g}), we obtain Algorithm \ref{algfgp} below.
\begin{alg}[FISTA (Beck-Teboulle \cite{bauschke2011fixed,ABeck2009b,ABeck2009a})]\label{algfgp}\mbox{}
\begin{itemize}
\item Fix initial $x_0$, set $z_0=x_0$ and $t_0=1$, $\beta$ (a Lipschitz parameter);
\item For $k\geq 0$
\begin{enumerate}[ Step 1.]\itemindent=1cm
\item  $y_k=z_k-\beta^{-1}\nabla f_2(z_k)$
\item $x_{k+1}=\text{prox}^{\beta^{-1}}_{f_1}(y_k)$
 \item $t_{k+1}=\frac{1+\sqrt{4t_k^2+1}}{2}$
 \item  $z_{k+1}=x_k+(1+\frac{t_k-1}{t_k})(x_{k+1}-x_k)$;
 \item Stop when $\|x_{k+1}-x_k\|$ is small enough otherwise continue.
\end{enumerate}
\end{itemize}
\end{alg}

\section{Numerical results}\label{sec_num}
 Finally, we present some numerical results from using the four presented algorithms denoted by
 \begin{description}\itemindent=-1mm
   \item[PDE-SB:] \qquad  \quad PDE-based Split-Bregman (Algorithm \ref{algsb02});
   \item[Opti-FB:] \qquad  \quad Optimization based Forward-backward (Algorithm \ref{algfb});
   \item[Opti-Nesterov:] Optimization based Nesterov Accelerated method (Algorithm \ref{alg_nesterov});
   \item[Opti-FISTA:] \ \quad Optimization based FISTA (Algorithm \ref{algfgp}),
 \end{description}
 and their comparisons with related methods.
 In all tests, an initial solution is
the noisy image $z(x,y)$, the algorithms solving the
diffusion equation or optimization problem are stopped after achieving a
relative residual of $10^{-4}$ or a relative error of $10^{-8}$ within 1000
outer and 15 inner iterations.
Here we mainly compare the solution's visual quality, the \emph{snr} (the signal-to-noise ratio) and \emph{psnr} (the peak signal-to-noise ratio) values which are given
 \[\emph{snr}(u,u^*)=10\log_{10}\frac{\|u^*-mean(u^*)\|_F^2}{\|u-u^*\|_F^2};\;\emph{psnr}(u,u^*)=10\log_{10}\frac{n_xn_y\Big(\max\limits_{i,j} u_{i,j}^*\Big)^2}{\|u-u^*\|_F^2},\]
 where $mean(u^*)$ is an average value of the true image $u^*$, $n_x$ and $n_y$ denote the size of the test image $z$. It should be noted however that these valuations do not always correlate with human perception. In real life situations, the two measures are also not possible because the true image is not known.

 In general, an optimization problem may be solved many times to select
  a suitable regularization parameter  $\lambda$ or to optimize the solution for the underlying inverse problem; a solution is accepted when some stopping criterion is satisfied. It remains to carry out a systematic study
 on our new model as in  \cite{JPZhang2012} for the TV model.
 However we shall use the best (numerical) $\lambda$  for all models in the following tests.

For denoising,   $F(u)=(u-z)^2$  is the $L^2$ measure between the solution $u$ and the observed image $z$. To intuitively describe the denoising ability, four sets of data will be used in this part (\emph{also see} Fig.\ref{figure-example}):
\begin{description}\itemindent=1mm
  \item[P1:] \emph{Problem 1} - Parabolic surfaces;
  \hspace*{2cm}${\bf P2:}$  \emph{Problem 2} - Saddle surface;
  \item[P3:] \emph{Problem 3} - Pepper;
  \hspace*{3.84cm}${\bf P4:}$ \emph{Problem 4} - Penguin.
\end{description}
\begin{figure}[!h]
\begin{center}
\includegraphics[width=5.6in,height=0.9in]{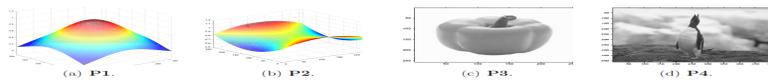}\label{figexample}
\end{center}
\caption{Test datasets.}\label{figure-example}
\end{figure}
 Though our framework is readily applicable to image deblurring and image registration, here we only
present denoising results.
\subsection{Performance comparisons of boundary regularization} 
We first test the idea from \S\ref{sec_bndy}.
One the hand, the variational framework seeks the boundary conditions of a nonzero
    Dirichlet or a Neumann type on $\partial\Omega$ and also real images do have nonzero boundary conditions.
On the other hand, fractional order derivatives require homogeneous boundary conditions (as used in works of many authors) due to end singularity. In order to aid accurate computation of the discretized fractional order derivative,   in our work, a boundary processing technique \S\ref{sec_bndy} has been proposed to transform nonzero boundary conditions of observed data $z$ into zero boundary conditions; hence a consequent matrix approximation to the fractional derivative operator $D^\alpha_{[0,\;1]}$ can use a zero Dirichlet boundary condition.

Here we test the performance and effectiveness of our boundary regularization against no regularization. The experiment is carried out on \textbf{P1} - Parabolic surfaces as shown in Fig. \ref{figure-1}, i.e., a synthetic image of size $256\times 256$ and range [0, 1], in Fig. \ref{fig1}(a), which is added zero mean value Gaussian random noise with a mean variance $\delta=\frac{15}{256}$ to get
 the noisy image   displayed in
Fig. \ref{fig1}(d). For the boundary regularization case, the approximation $u|_{\partial\Omega}$ from the observed data $z|_{\partial\Omega}$ is from applying one dimensional fractional order variation model as described in \S \ref{sec_bndy}. The treated case is
named as `Treated' whose results are depicted on Fig. \ref{fig1}(b) and Fig. \ref{fig1-e}), where \emph{psnr}$=47.53$ and \emph{snr=}$35.43$. The solution obtained from assuming zero boundary
conditions for $u$ is named as `Non-treated' with its results depicted in Fig. \ref{fig1}(c) and Fig. \ref{fig1-f}, where $\emph{psnr}=23.69$ and $\emph{snr}=10.38$.
Clearly our boundary regularization treatment is effective.

\begin{figure}[!h]
\begin{center}
\includegraphics[width=5.6in,height=3.1in]{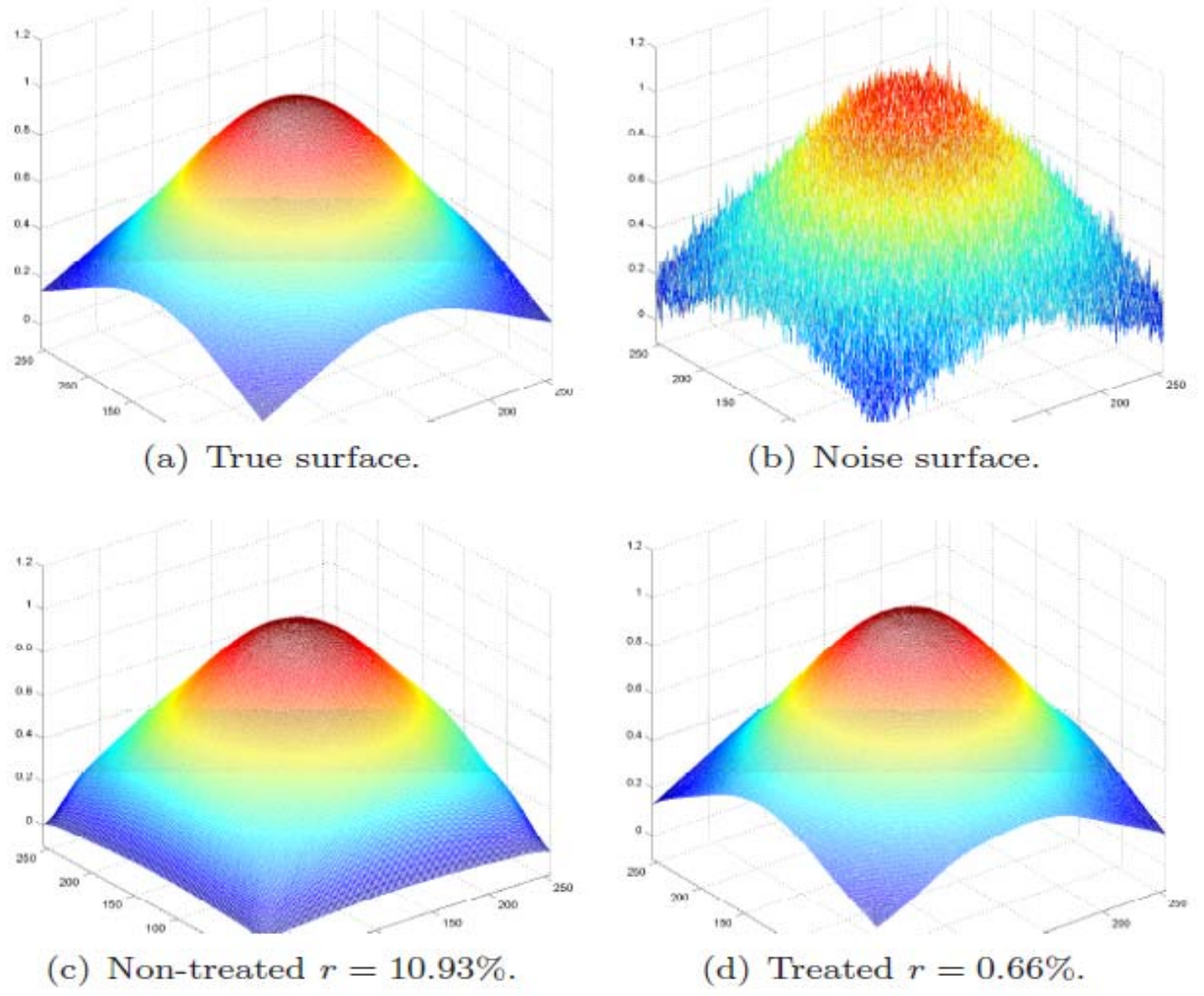}\label{fig1}\\
\subfigure[Slice for non-treated - Bad.]{\label{fig1-f}
\includegraphics[width=2.0in,height=1.80in]{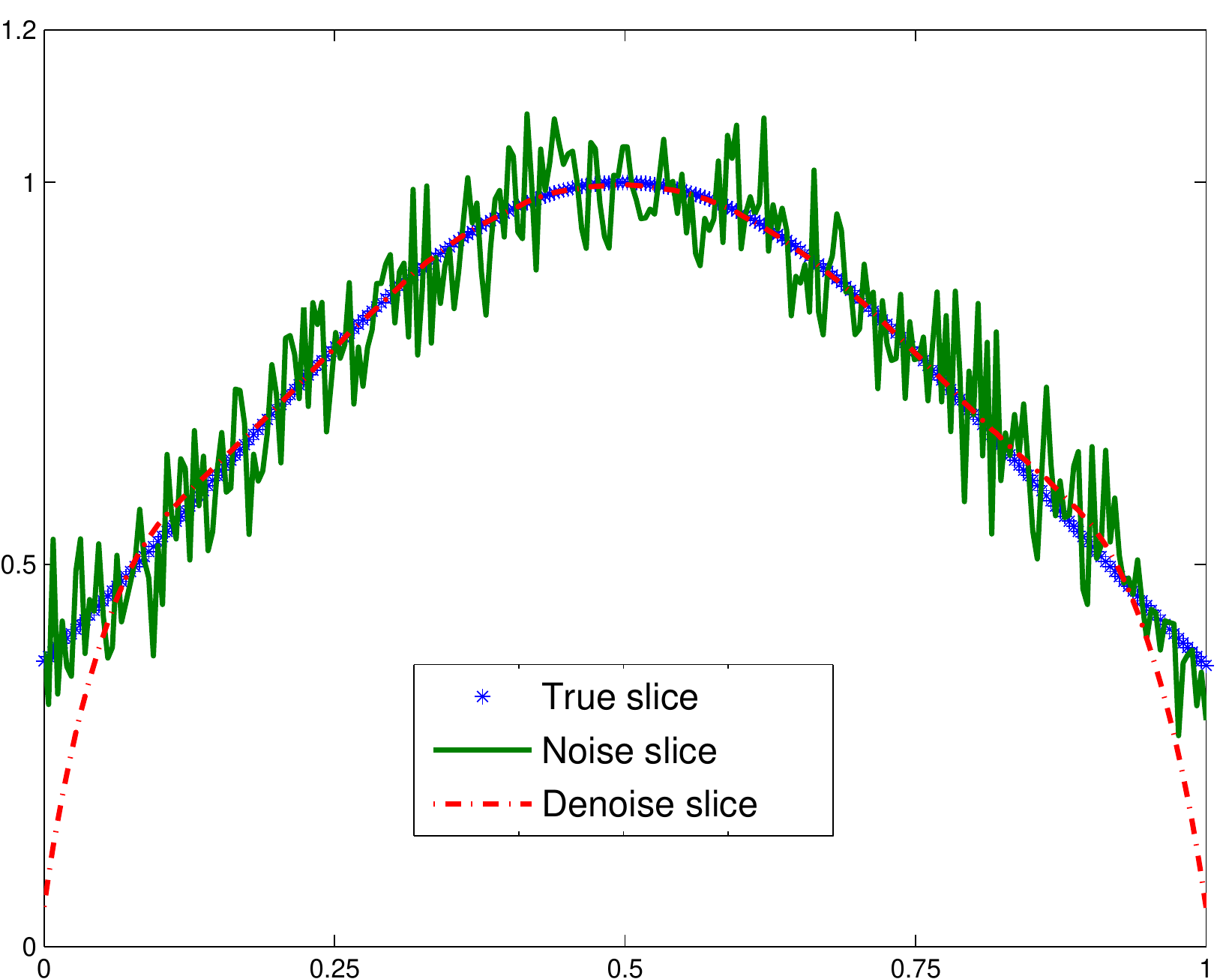}}
\subfigure[Slice for treated - Good.]{\label{fig1-e}
\includegraphics[width=2.0in,height=1.80in]{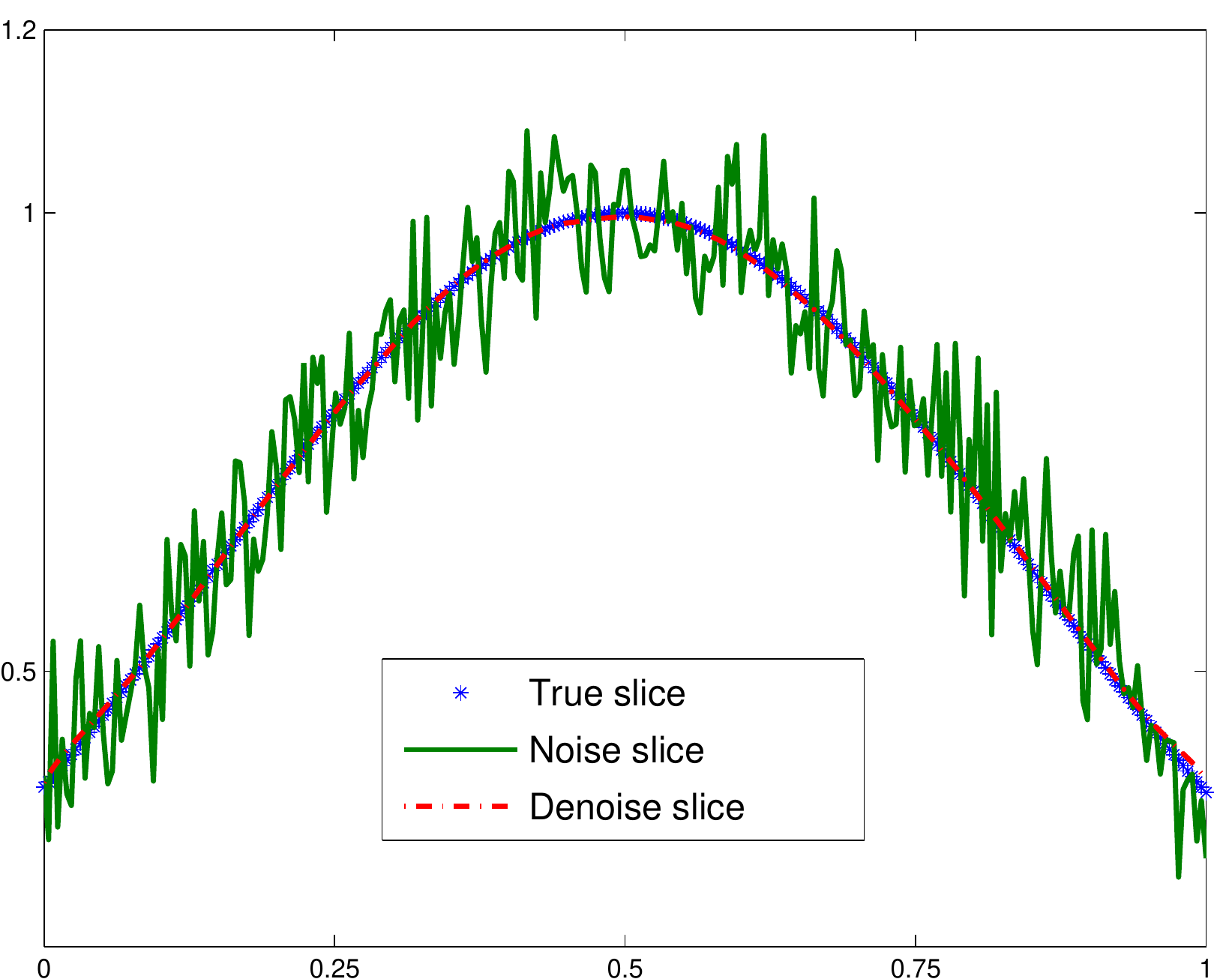}}
\end{center}\caption{Test for \textbf{P1}---Comparisons between treated and Non-treated cases for non-zero boundary conditions ($\delta=\frac{15}{256}$) using PDE-SB. The  treated case has \emph{psnr}=$47.53$ and \emph{snr}=$35.43$, while   the non-treated case has  \emph{psnr}=$23.69$ and \emph{snr}=$10.38$.
Clearly our boundary regularization \S\ref{sec_bndy}
 is effective while direct application of a fractional model leads to incorrect boundary restoration. Here the error $r=\|u-u^*\|_F/\|u^*\|_F$.
}\label{figure-1}
\end{figure}

\subsection{Comparisons of Algorithms 1--4}
In Table \ref{tab0}, we compare the restoration quality (via \emph{psnr} and \emph{snr})
of 4  Algorithms. There, all four test datasets are used. In the cases of synthetic images \textbf{P1} and \textbf{P2} with noise variation $\delta=\frac{10}{255}$, $\lambda$ is taken as $12000$ and  $3800$ respectively and $\alpha=1.6$.
 In the cases of natural images \textbf{P3} and \textbf{P4} with noise variation $\delta=\frac{5}{255}$, $\lambda$ is taken as $18000$ and $20000$ respectively and $\alpha=1.4$.
One can see that, from Table \ref{tab0}, Opti-Nesterov and PDE-SB perform similarly in terms of the best restoration quality (via \emph{psnr} and \emph{snr}).  However in efficiency (computation times \emph{cpu(s)}),
Opti-FISTA and PDE-SB are the best while
Opti-Nesterov takes more computational times than other three algorithms.
Evidently, overall, PDE-SB (Algorithm 1) shows the most consistence in good performance in
tested cases considered.

\begin{table}[tph]\centering
\newsavebox{\tablebox}
\begin{lrbox}{\tablebox}
\begin{tabular}{cccccccccccccccc}
\hline\\
\multicolumn{1}{c}{} &\multicolumn{3}{c}{Opti-FB}&\multicolumn{1}{c}{} &\multicolumn{3}{c}{Opti-Nesterov}&\multicolumn{1}{c}{}
&\multicolumn{3}{c}{Opti-FISTA }&\multicolumn{1}{c}{}&\multicolumn{3}{c}{PDE-SB }\\
   \\
   \cline{2-4}\cline{6-8} \cline{10-12}\cline{14-16}
\\
&\emph{snr}&\emph{psnr}&\emph{cpu(s)}&& \emph{snr}&\emph{psnr}&\emph{cpu(s)}&& \emph{snr}&\emph{psnr}&\emph{cpu(s)}&&\emph{snr}&\emph{psnr}&\emph{cpu(s)}\\
\hline\\
 \textbf{P1} &36.78& 50.09&16.83&&36.91&50.22&27.23&&36.94&50.24&16.71&&36.96&50.27&14.53  \\
 \\
 \textbf{P2} &31.04&53.08&17.28& &31.61&53.67&28.43 & &31.46&53.50&18.14& &31.63&53.69&15.09  \\
  \\
 \textbf{P3}  &29.21&43.29&15.96& &29.40&43.49&16.09& &29.40&43.49&9.75&&29.48&43.56&8.16 \\
 \\
 \textbf{P4}  &25.19&38.01&14.68& &25.35&38.15&16.27& &25.34&38.15&8.62&&25.34&38.14&8.45 \\
 \\
 \hline
\end{tabular}
\end{lrbox}
\caption{Comparisons of optimizing Algorithms, where $\delta=\frac{10}{255}$ for saddle and parabolic surfaces, $\delta=\frac{5}{255}$ for pepper and penguin images.}
\label{tab0}
\scalebox{0.66}{\usebox{\tablebox}}
\end{table}

\subsection{Sensitivity tests for   $\alpha$ and   $\lambda$} 
Since our model (\ref{reg-problem1}) contains two main parameters:
$\alpha$ for the order of differentiation and
$\lambda$ as the coupling parameter for a regularized inverse problem,
it is of interest to test their sensitivity on the restoration quality.
Here we shall test all Algorithms's sensitivity using the image \textbf{P2} - Saddle surface
of size $256\times 256$,
  after adding  zero mean value Gaussian random noise image of range [0, 1] and $\delta=\frac{10}{256}$.

\newBlue{
 Varying $\lambda$ in a large range from $400$ to $60000$,
all four algorithms are tested
on this synthetic image with the results shown in Figs. \ref{fig-Opti_meth_lambda-a} and \ref{fig-Opti_meth_lambda-c} for different stopping criterions (\textbf{GSC:} the general stopping criterions with the relative residual $10^{-4}$, relative error $10^{-8}$, inner iterations 10,\textbf{SSC:} the strong stopping criterions with the relative residual $10^{-7}$, relative error $10^{-10}$, inner iterations 25 ). Different from the TV denoising case where the regularization parameter $\lambda$ is crucial for restoration quality \cite{JPZhang2012}, however, Figs. \ref{fig-Opti_meth_lambda-a} and \ref{fig-Opti_meth_lambda-c} show that
our total $\alpha$-order variation regularization model still obtains a satisfactory solution for a large range of $\lambda$; this is a pleasing
observation. Of course there exists an issue of an optimal choice.

Next varying $\alpha\in (1,2)$ from $1.1$ to $1.9$, Figs. \ref{fig-Opti_meth_alpha-b} and \ref{fig-Opti_meth_alpha-d} show four algorithms's restored results responding to two stopping conditions \textbf{GSC} and \textbf{SSC}. As represented, the smaller $\alpha$  leads to the blocky (staircase) effects in $u$ and the larger $\alpha$ will make solution $u$ too smooth along $x_1$- and $x_2$-directions respectively.
For denoising, our test suggests that $\alpha=1.6$ is suitable for smooth
problems because the diffusion coefficients are almost isotropic in all regions,  leading to   smooth deformation fields, and $\alpha=1.4$ is appropriate
for nonsmooth problems because the diffusion coefficients are close to zero in
regions representing large gradients of the fields, allowing discontinuities at
those regions.

We should emphasize that the stopping criterions have impacted on the actual numerical implementation. In other words, if we drop the limit on the maximal number of inner iterations and relative residuals (and relative errors), some methods take too long but obtain the more satisfactory results.
}

\begin{figure}[!htb]
\begin{center}
\subfigure[\textbf{GSC}:\emph{psnr} vs. $\lambda$]{\label{fig-Opti_meth_lambda-a}
\includegraphics[width=1.3in,height=1.2in]{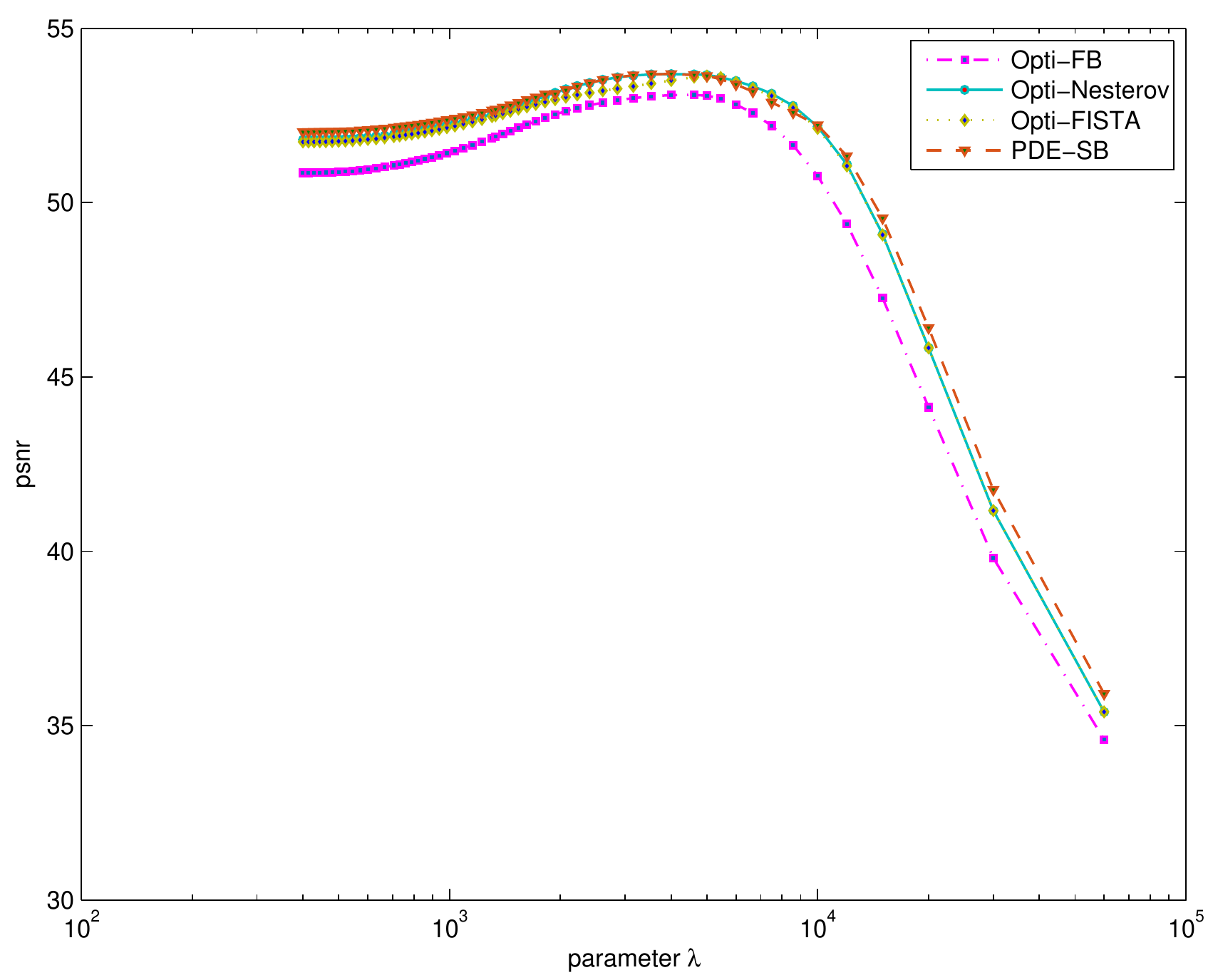}}
\subfigure[\textbf{GSC}:\emph{psnr} vs. $\alpha$]{\label{fig-Opti_meth_alpha-b}
\includegraphics[width=1.3in,height=1.2in]{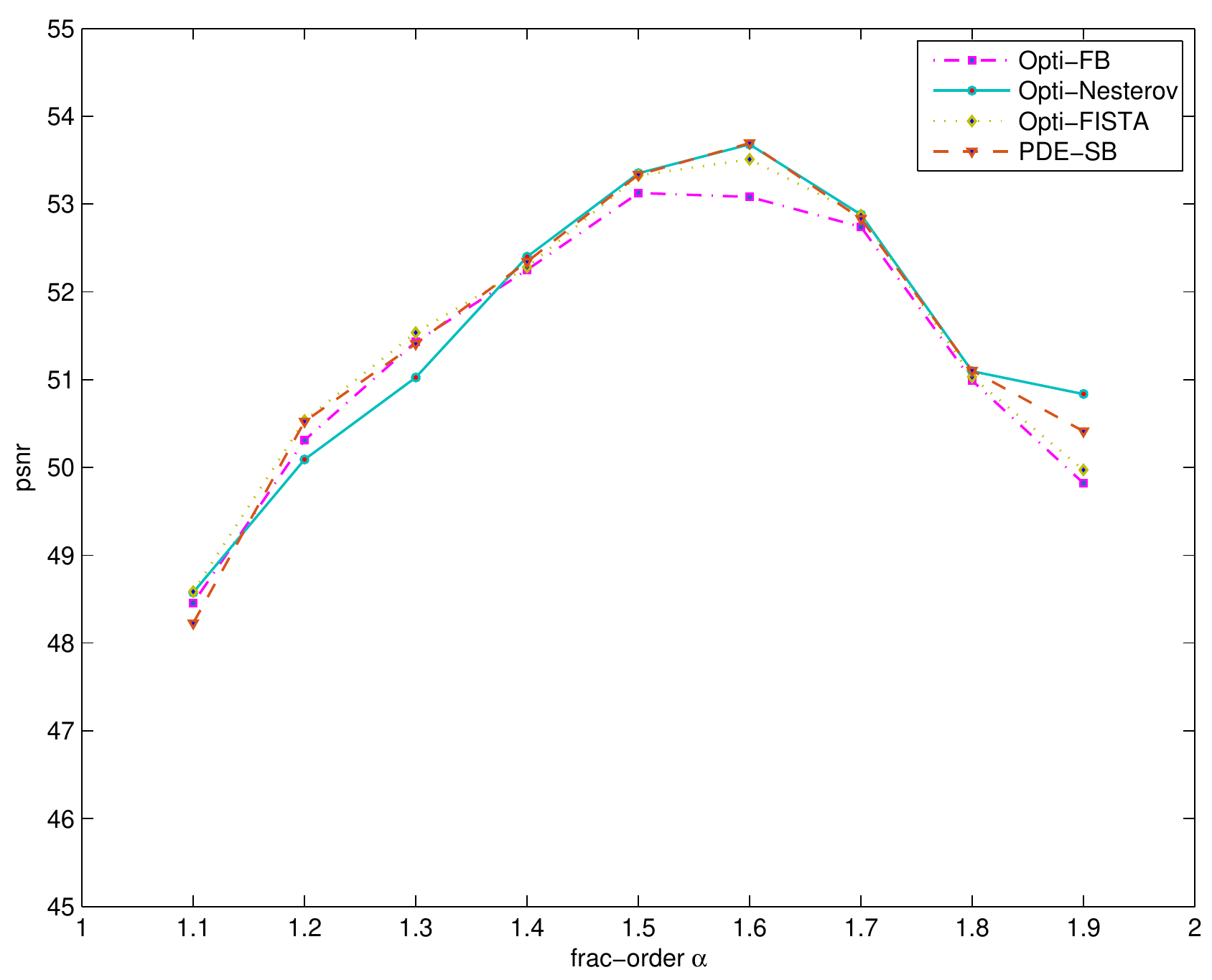}}
\subfigure[\textbf{SSC}:\emph{psnr} vs. $\lambda$]{\label{fig-Opti_meth_lambda-c}
\includegraphics[width=1.28in,height=1.2in]{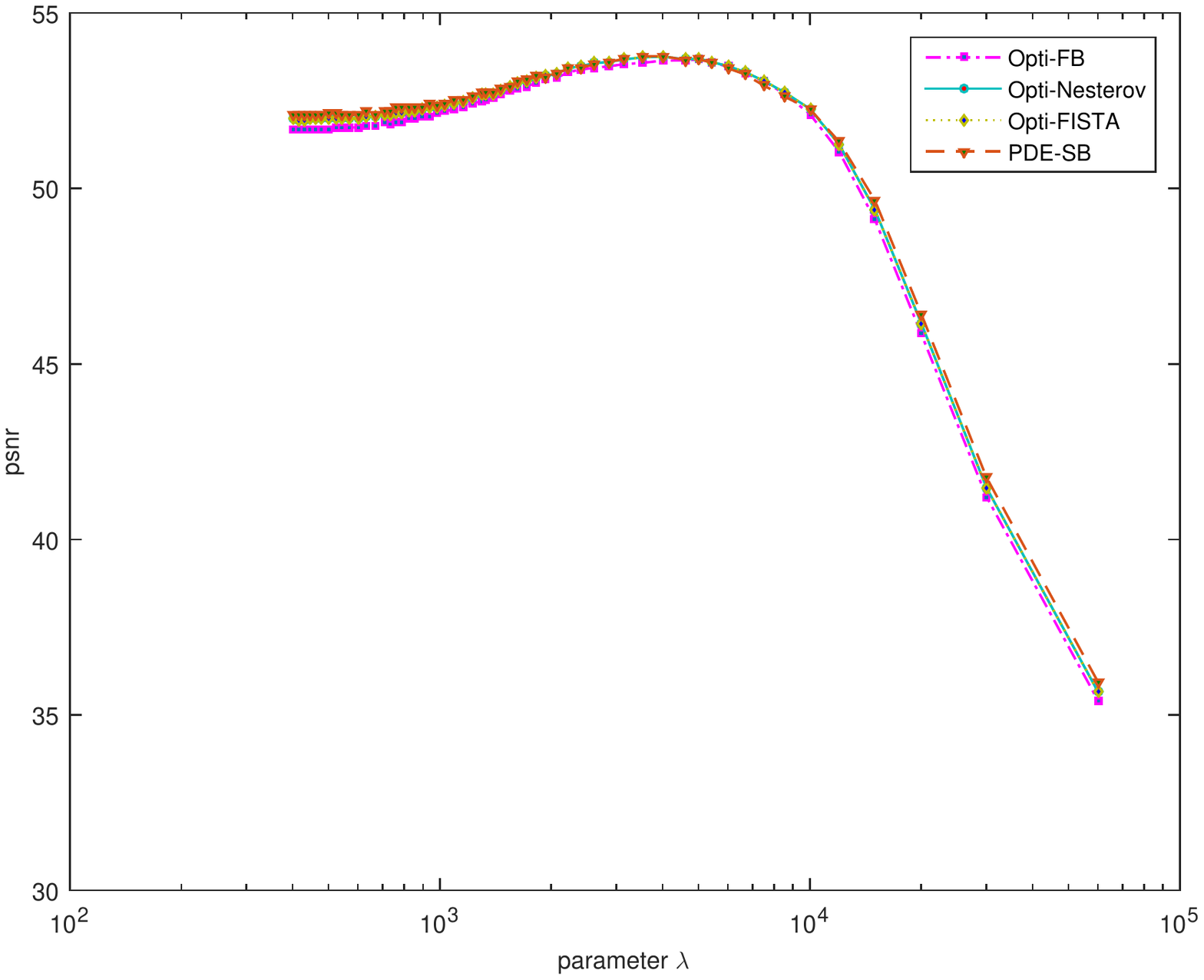}}
\subfigure[\textbf{SSC}:\emph{psnr} vs. $\alpha$]{\label{fig-Opti_meth_alpha-d}
\includegraphics[width=1.28in,height=1.2in]{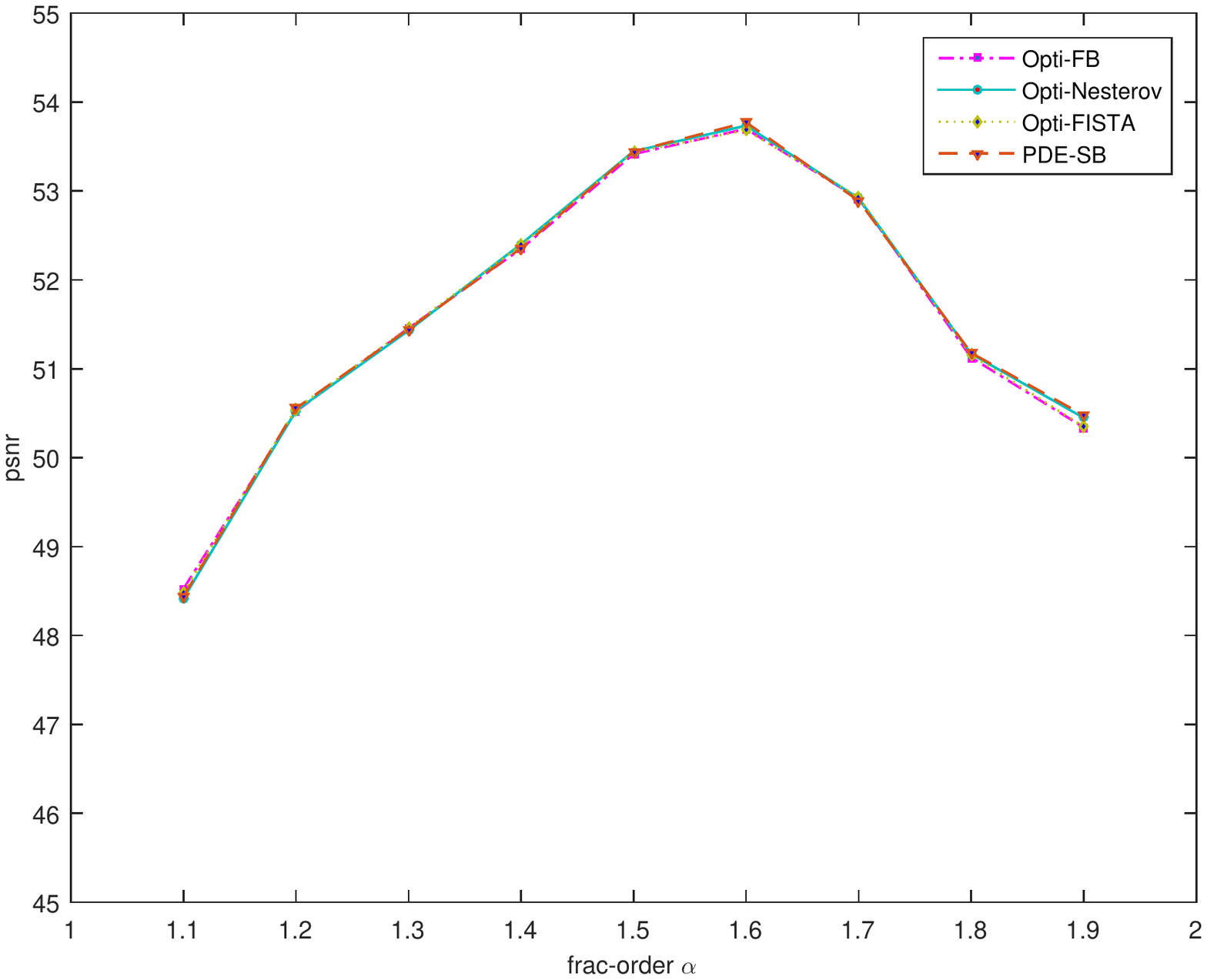}}
\end{center}\caption{Sensitivity test of Algorithms 1-4 to parameters $\lambda$ (with the fixed $\alpha=1.6$) and   $\alpha$ (with the fixed $\lambda=3800$) in the cases of the \textbf{GSC} and \textbf{SSC} stopping conditions.}\label{fig-Opti_meth_lambda}
\end{figure}

\subsection{Comparisons with other non-fractional variational models}
In this test, we compare our total $\alpha$-variation model(PDE-SB) with three popular methods for variational image denoising.
The first compared approach is naturally the TV model proposed by Rudin et al. \cite{LIRudin1992} because the total $\alpha$-order variation model in this work is inspired by it. The second compared work is the mean curvature model \cite{LSun2013} which also addresses the problem of restoring a good result for a smooth image; their approach is different from ours since it is focused on higher order regularization and a multigrid method.  See also
\cite{MLysaker2004,Brito10,WZhu2012}. The third compared approach is the TGV model \cite{KBredies2010} involving a combination of first order and higher-order derivatives to reduce the staircasing effect of the bounded variation functional.

   In Table \ref{tab:3}, we first compare the restoration quality (via \emph{psnr}, \emph{snr})
and efficiency (computation times \emph{cpu(s)}) of four approaches by testing the artificial images (\textbf{P1} - Parabolic surface, \textbf{P2} - saddle surface) and the natural images (\textbf{P3} - Pepper, \textbf{P4} - Penguin); in each approach relevant parameters are shown in
 Table \ref{tab:3}.  \newBlue{We see that, with the emperically optimal parameters $\lambda^*$, the differences of four models are very small, though our new and convex model is slightly better. In other tests  where such optimal parameters are not used, our new model performs more robustly and better.

 In order to present more visual differences, some stronger regularization parameters ($\lambda^*/2$) and higher noise variations (with the noise level $\delta=\frac{30}{255}$) are tested,
 the solution's visual representations restoring the natural image
 \textbf{P3} - Pepper  in Fig. \ref{fig4-b}  are shown in Fig. \ref{figure-4}. While ROF denoising leads to   blocky results, the mean curvature model
performs better in
the smooth regions but exhibits more smooth near discontinuities,
the total generalized variation model
leads to further improvements over the aforementioned models.
 The
total fractional-order variation model leads to significantly better results.
The reason is that the new model tries
to approximate the image based on affine functions or non-local high order smooth functions, which is clearly better in this
case, in other words, our approach is more effective in eliminating the noise for smooth images and is competitive to high order
methods; in efficiency the new approach (PDE-SB) is much faster than the TGV and the
mean curvature. We also plot four error results between the restored and true images along a diagonal (magenta) line in Fig. \ref{fig4-a} for comparison in Fig. \ref{figure-4-2}; we see that PDE-SB
  produces the best restored surface, which show a major advantage (or better performance) of using our total $\alpha$-order variation model (\ref{reg-problem1}) when the test image is smooth, and even when the contrast between meaningful objects and the background is low.}

\begin{table}[!h]\centering\scriptsize
\begin{lrbox}{\tablebox}
\begin{tabular}{cccccccccccccc}
\hline \\
\multicolumn{1}{c}{} &\multicolumn{1}{c}{} &\multicolumn{3}{c}{Mean curvature \cite{LSun2013}} &
\multicolumn{3}{c}{TV \cite{LIRudin1992}}&\multicolumn{3}{c}{TGV \cite{KBredies2010}}&
\multicolumn{3}{c}{Total $\alpha$-order model (\ref{reg-problem1})}\\
\\
\cline{3-5}\cline{7-8}\cline{10-11}\cline{13-14} \multicolumn{1}{c}{}
\\
&$\hat{\delta}$&\emph{}&\emph{snr}&\emph{psnr}&\emph{}&\emph{snr}&\emph{psnr}&\emph{}&\emph{snr}&\emph{psnr}&\emph{}&\emph{snr}&\emph{psnr}\
   \\
   \hline
 \\
               &$10$& &33.44&46.74 & &32.17& 45.52      & &36.41&   49.72&                          & 37.55&50.86\\
     \textbf{P1}       &$20$ & &30.19&43.50 & &29.55& 42.83       & &33.03&   46.33&                     & 33.52&46.83\\
 \multicolumn{1}{c}{} &\multicolumn{1}{c}{} &&\multicolumn{2}{c}{$\lambda_1=1/0.4\times 256^2$} & &\multicolumn{2}{c}{$\lambda_1=1026$} &&
\multicolumn{2}{c}{$\lambda_1=1/1.2\times 256^2$}&&
\multicolumn{2}{c}{$\lambda^{1D}_1=0.1$, $\lambda_1=21900$}\\
\multicolumn{1}{c}{} &\multicolumn{1}{c}{Para}& &\multicolumn{2}{c}{$\lambda_2=1/0.03\times 256^2$} &&  \multicolumn{2}{c}{$\lambda_2=535$} &&
\multicolumn{2}{c}{$\lambda_2=1/0.6\times 256^2$}&&
\multicolumn{2}{c}{$\lambda^{1D}_2=0.1$, $\lambda_2=14400$}\\
\\
           &$10$& &27.27&49.31         & &23.09&    45.13&      &30.75& 52.68  &                  & 32.02&54.18\\
 \textbf{P2}&$20$ & &22.88&44.92           & &19.45&   41.49&      &25.62& 47.51&               & 26.48&48.54\\
 \multicolumn{1}{c}{} &\multicolumn{1}{c}{} &&\multicolumn{2}{c}{$\lambda_1=1/0.9\times 256^2$} &&
\multicolumn{2}{c}{$\lambda_1=883$}& & \multicolumn{2}{c}{$\lambda_1=1/0.9\times 256^2$} & &
\multicolumn{2}{c}{$\lambda^{1D}_1=1$, $\lambda_1=1800$}\\
\multicolumn{1}{c}{} &\multicolumn{1}{c}{Para} &&\multicolumn{2}{c}{$\lambda_2=1/0.01\times 256^2$} &&
\multicolumn{2}{c}{$\lambda_2=488$}&  &  \multicolumn{2}{c}{$\lambda_2=1/0.5\times 256^2$} &&
\multicolumn{2}{c}{$\lambda^{1D}_2=0.2$, $\lambda_2=1800$}\\
 \\

            &$10$& &20.43&38.80          & &20.08&38.35&   &20.40&38.78&       &20.48&38.86\\
             &$15$ & &18.76&37.11        & &18.01&36.69&    &18.68&37.12 &     &18.84&37.20\\
 \textbf{P3} &$20$&
               &17.48&35.82                  & &17.17&35.33&     &17.55&35.87&            &17.57&35.90\\
 \multicolumn{1}{c}{} &\multicolumn{1}{c}{} &&\multicolumn{2}{c}{$\lambda_1=1/16\times 256^2$} &&
\multicolumn{2}{c}{$\lambda_1=2216$}& &\multicolumn{2}{c}{$\lambda_1=1/55\times 256^2$} &
&\multicolumn{2}{c}{$\lambda^{1D}_1=1$, $\lambda_1=16500$}\\
\multicolumn{1}{c}{} &\multicolumn{1}{c}{Para} &&\multicolumn{2}{c}{$\lambda_2=1/14\times 256^2$} &
&\multicolumn{2}{c}{$\lambda_2=1373$}& &\multicolumn{2}{c}{$\lambda_2=1/26\times 256^2$} &
&\multicolumn{2}{c}{$\lambda^{1D}_2=0.1$, $\lambda_2=9300$}\\
\multicolumn{1}{c}{} &\multicolumn{1}{c}{} &&\multicolumn{2}{c}{$\lambda_3=1/6\times 256^2$} &
&\multicolumn{2}{c}{$\lambda_3=893$}&  &\multicolumn{2}{c}{$\lambda_3=1/12\times 256^2$} &
&\multicolumn{2}{c}{$\lambda^{1D}_3=0.01$, $\lambda_3=6200$}\\
 \\
                                    &$5$& &25.16&37.95  & &24.85&37.58     & &25.39&38.20     & &25.34&38.14\\
                            &$10$ & &21.72&34.60 & &21.33& 34.07   & &21.82&34.71&     &21.75&34.62\\
 \textbf{P4}&$15$& &19.26&32.05&  &18.66&31.29&        &19.44&32.21& &19.42&32.20\\
   \multicolumn{1}{c}{} &\multicolumn{1}{c}{} &&\multicolumn{2}{c}{$\lambda_1=1/9\times 256^2$} &  &\multicolumn{2}{c}{$\lambda_1=3341$} &
&\multicolumn{2}{c}{$\lambda_1=1/49\times 256^2$}&
&\multicolumn{2}{c}{$\lambda^{1D}_1=0.1$, $\lambda_1=24000$}\\
\multicolumn{1}{c}{} &\multicolumn{1}{c}{Para} &&\multicolumn{2}{c}{$\lambda_2=1/5\times 256^2$} &  &\multicolumn{2}{c}{$\lambda_2=1856$} &
&\multicolumn{2}{c}{$\lambda_2=1/20\times 256^2$}&
&\multicolumn{2}{c}{$\lambda^{1D}_2=0.1$, $\lambda_2=8000$}\\
\multicolumn{1}{c}{} &\multicolumn{1}{c}{} &&\multicolumn{2}{c}{$\lambda_3=1/6\times 256^2$} &  &\multicolumn{2}{c}{$\lambda_3=1095$} &
&\multicolumn{2}{c}{$\lambda_3=1/11\times 256^2$}&
&\multicolumn{2}{c}{$\lambda^{1D}_3=0.1$, $\lambda_3=18500$}\\
 \hline
\end{tabular}
\end{lrbox}
\caption{Comparisons of four models in restoration quality: the total $\alpha$ variation model (\ref{reg-problem1}), Mean Curvature \cite{MLysaker2004,Brito10,LSun2013}, TV \cite{LIRudin1992, RPFedkiw2003,TFChan2005b} and TGV \cite{KBredies2010} models for synthetic images (\textbf{P1}   and \textbf{P2} in Fig. \ref{figexample}) and natural images (\textbf{P3} in Fig. \ref{fig4-a})
with different noise variances $\delta_j=\frac{\hat{\delta}_j}{255}$ (correspondingly we use $\lambda_j$). We first fix $\mu=1.1$, $\gamma=1$, $\alpha=1.6$ for \textbf{P1-P2}, $\alpha=1.15$ and $\alpha=1.1$ for \textbf{P3-P4} respectively in the total $\alpha$-order variation model, and $\gamma=19$, $\beta=10^{-5}$ in the mean curvature model, and two weight parameters of the first and second order term in TGV model ($\nu_0=1$, $\nu_1=2$ for \textbf{P1-P2}, $\nu_0=1$, $\nu_1=0.5$ for \textbf{P3-P4}), other parameters are as shown on
the \emph{``para"} rows. $\lambda^{1D}$ from (\ref{eq_1D}) is required by the new model only.
}
\scalebox{0.85}{\usebox{\tablebox}}\label{tab:3}
\end{table}

\begin{figure}[!h]
\begin{center}
\subfigure[True $z$.]{\label{fig4-a}
\includegraphics[width=1.2in,height=1.2in]{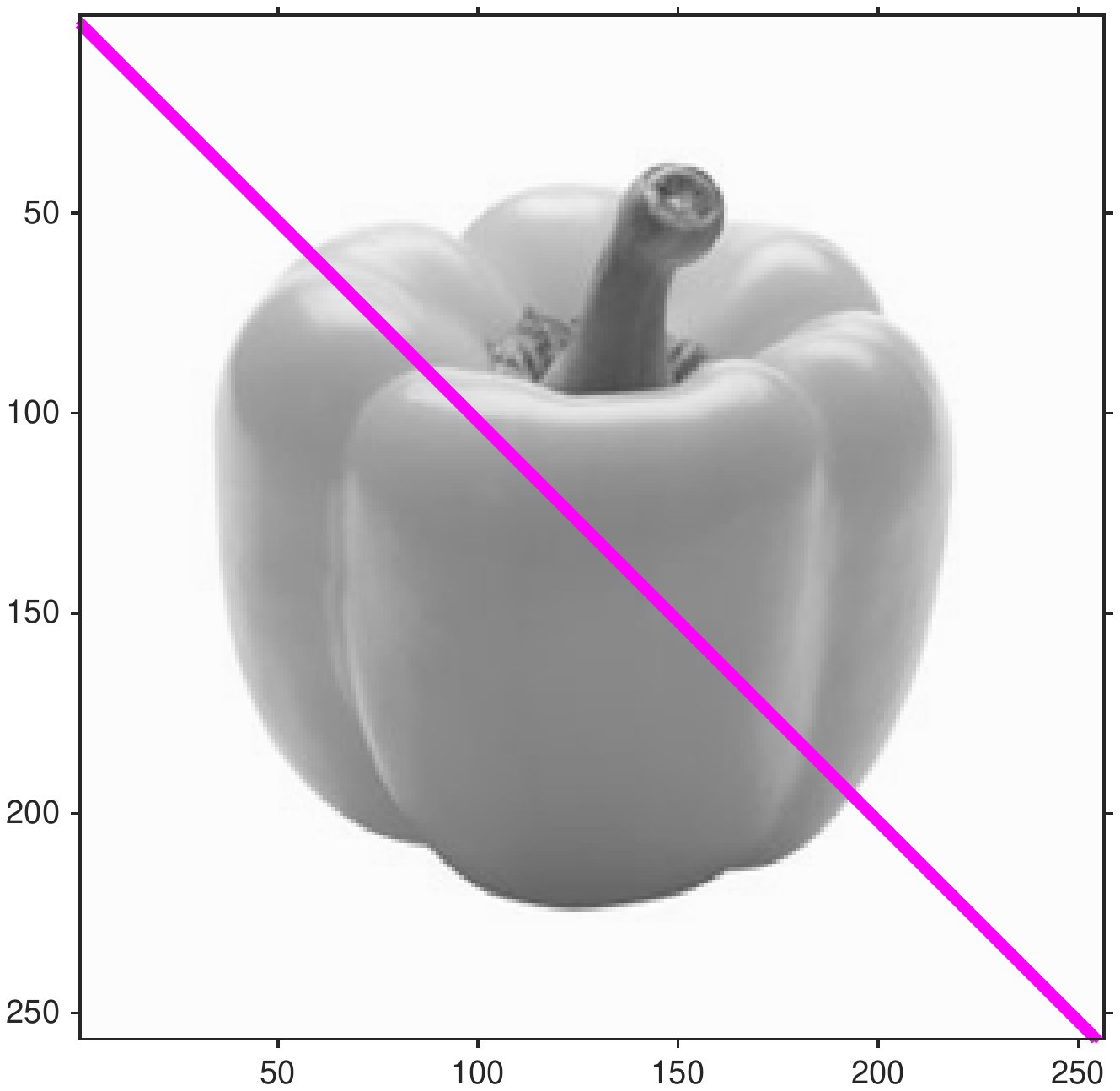}}\;\;\;\;
\subfigure[Noise $f$.]{\label{fig4-b}
\includegraphics[width=1.2in,height=1.2in]{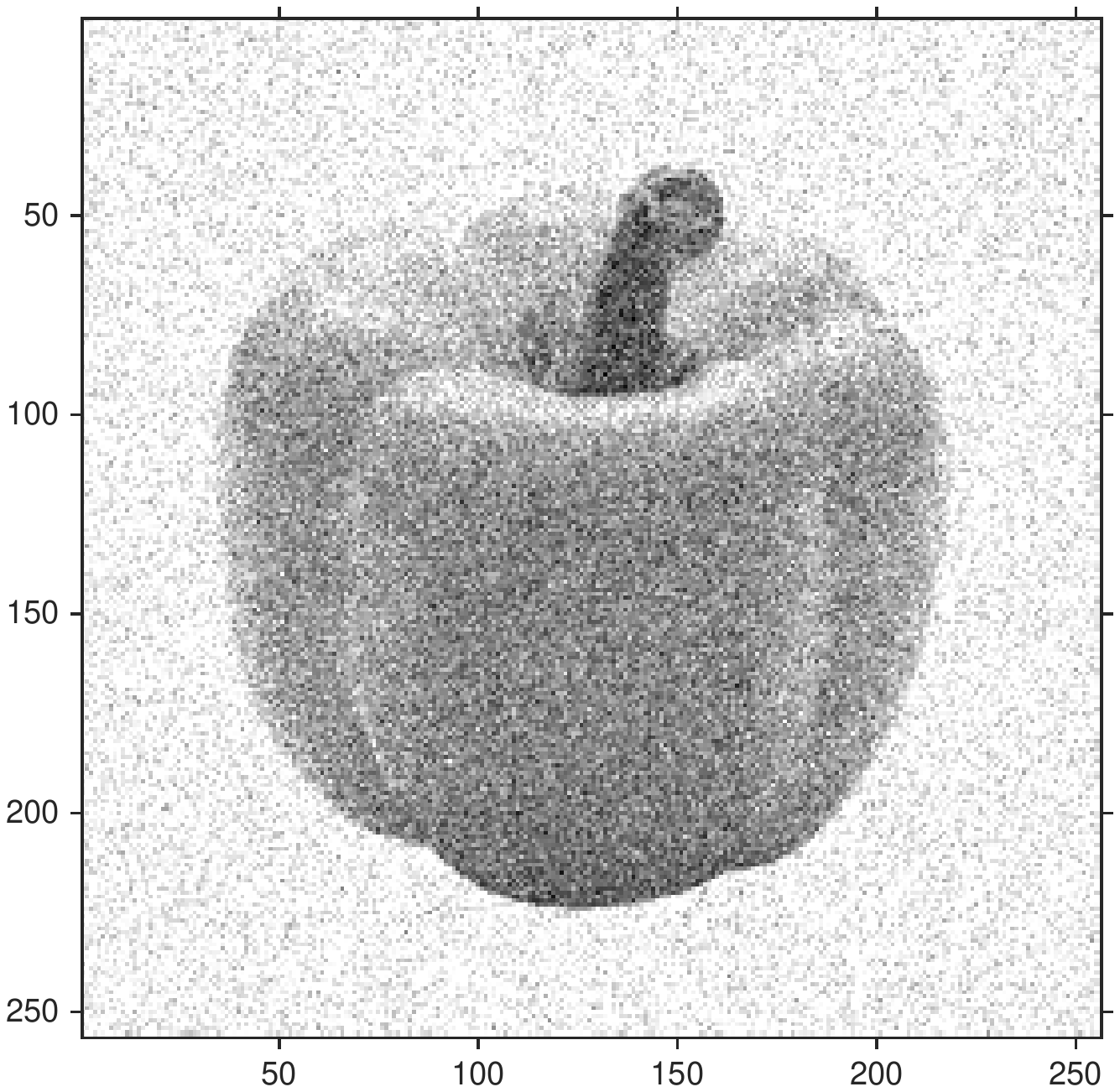}}\;\;\;\;
\subfigure[TV.]{\label{fig4-d}
\includegraphics[width=1.2in,height=1.2in]{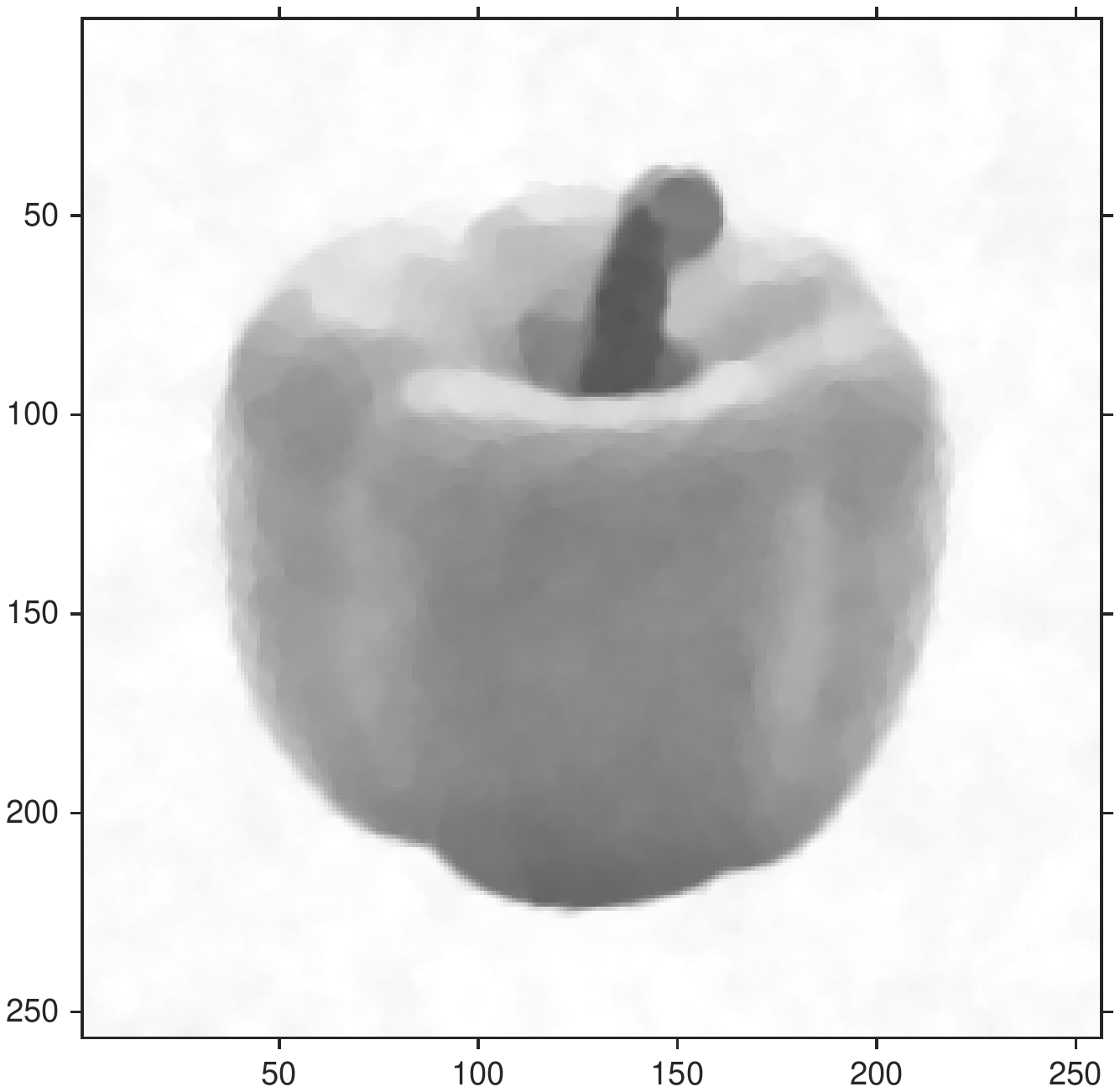}}\\
\subfigure[Mean Curvature.]{\label{fig4-c}
\includegraphics[width=1.2in,height=1.2in]{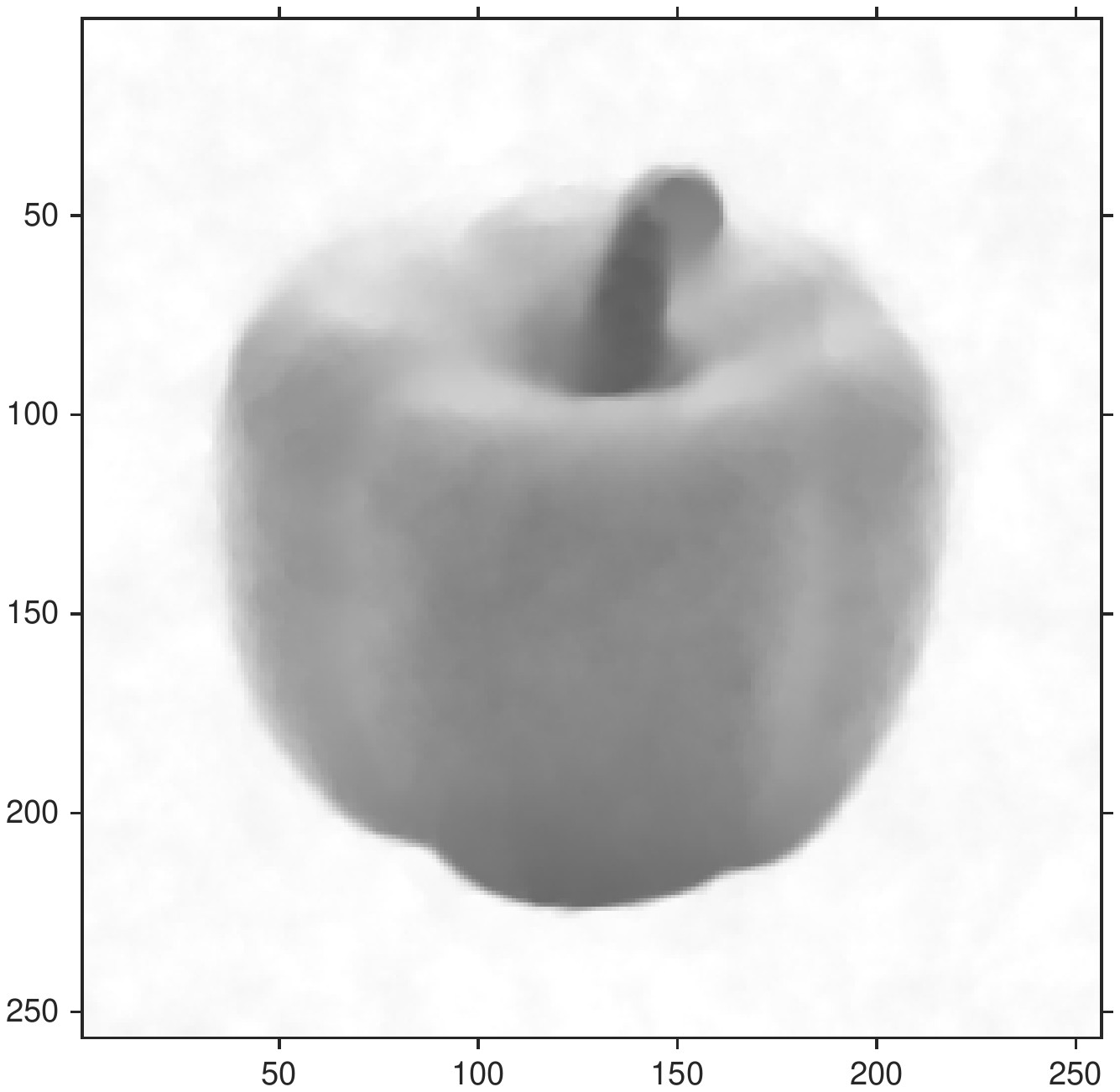}}\;\;\;\;
\subfigure[TGV.]{\label{fig4-f}
\includegraphics[width=1.2in,height=1.2in]{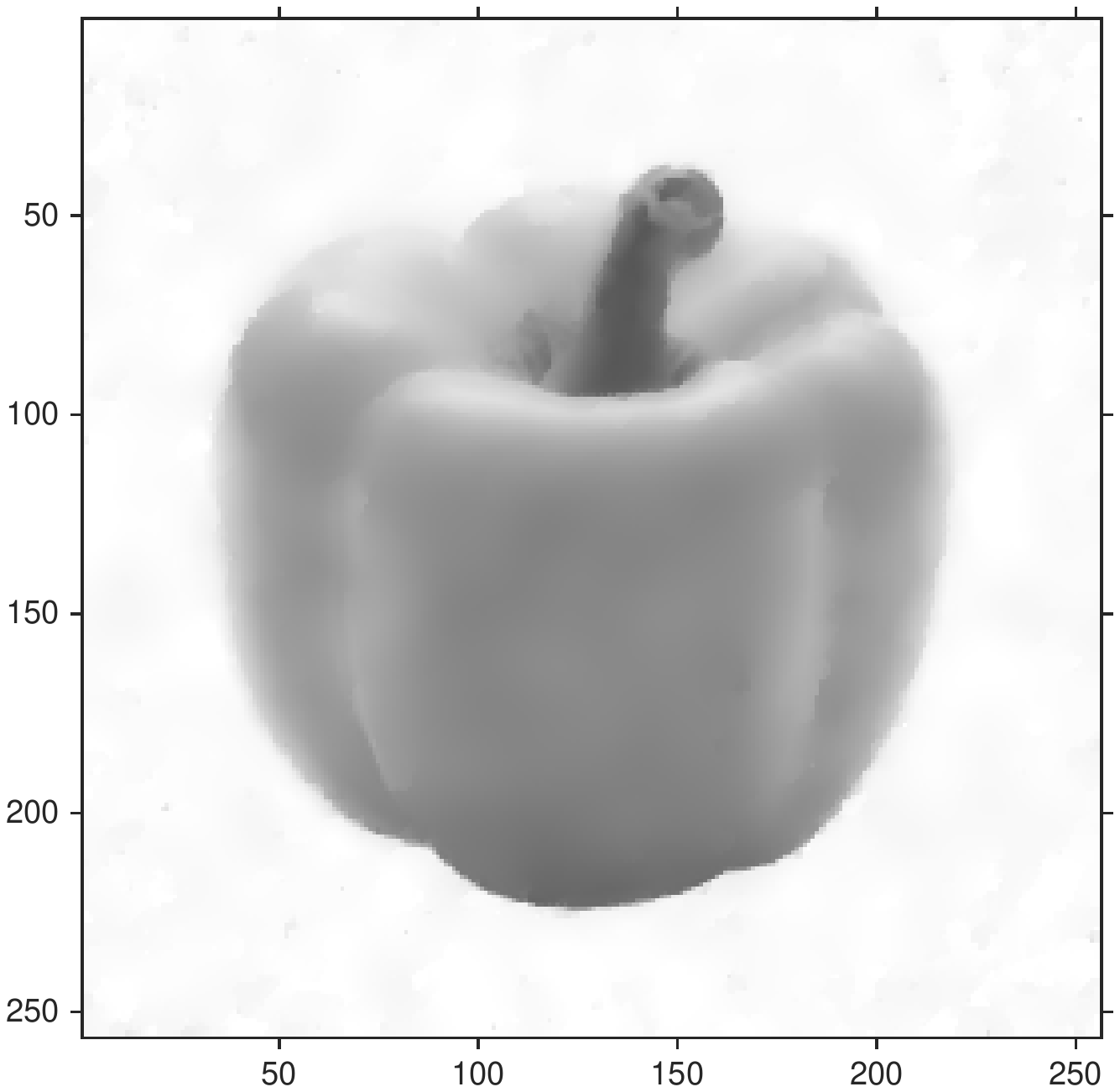}}\;\;\;\;
\subfigure[Our Approach.]{\label{fig4-e}
\includegraphics[width=1.2in,height=1.2in]{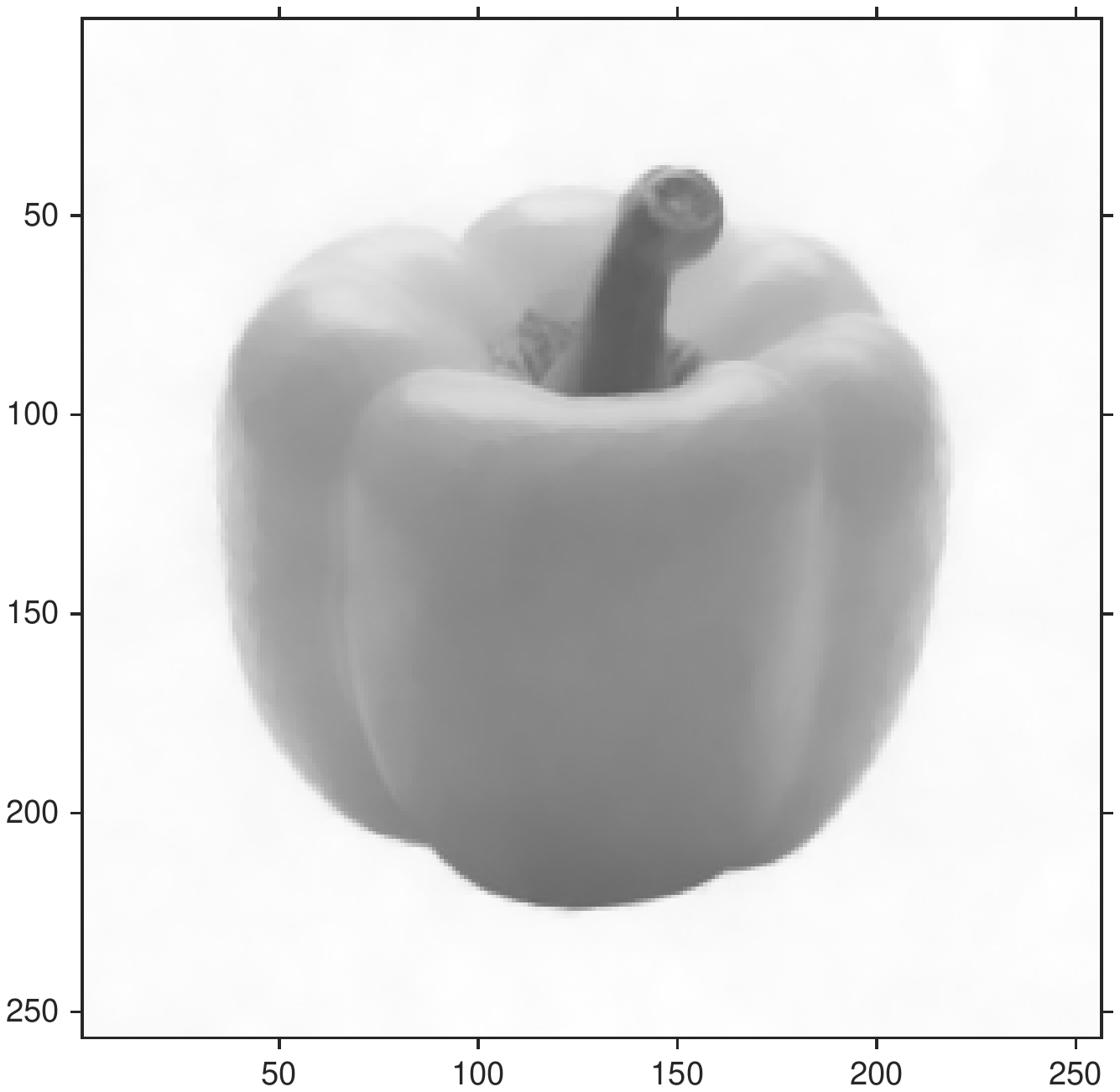}}
\end{center}
\caption{Comparison I ---Comparisons of our PDE-SB with TV, mean curvature and TGV models.}\label{figure-4}
\end{figure}

\begin{figure}[!h]
\begin{center}
\subfigure[$u$]{\label{fig4-2a}
\includegraphics[width=4.5in,height=2.5in]
{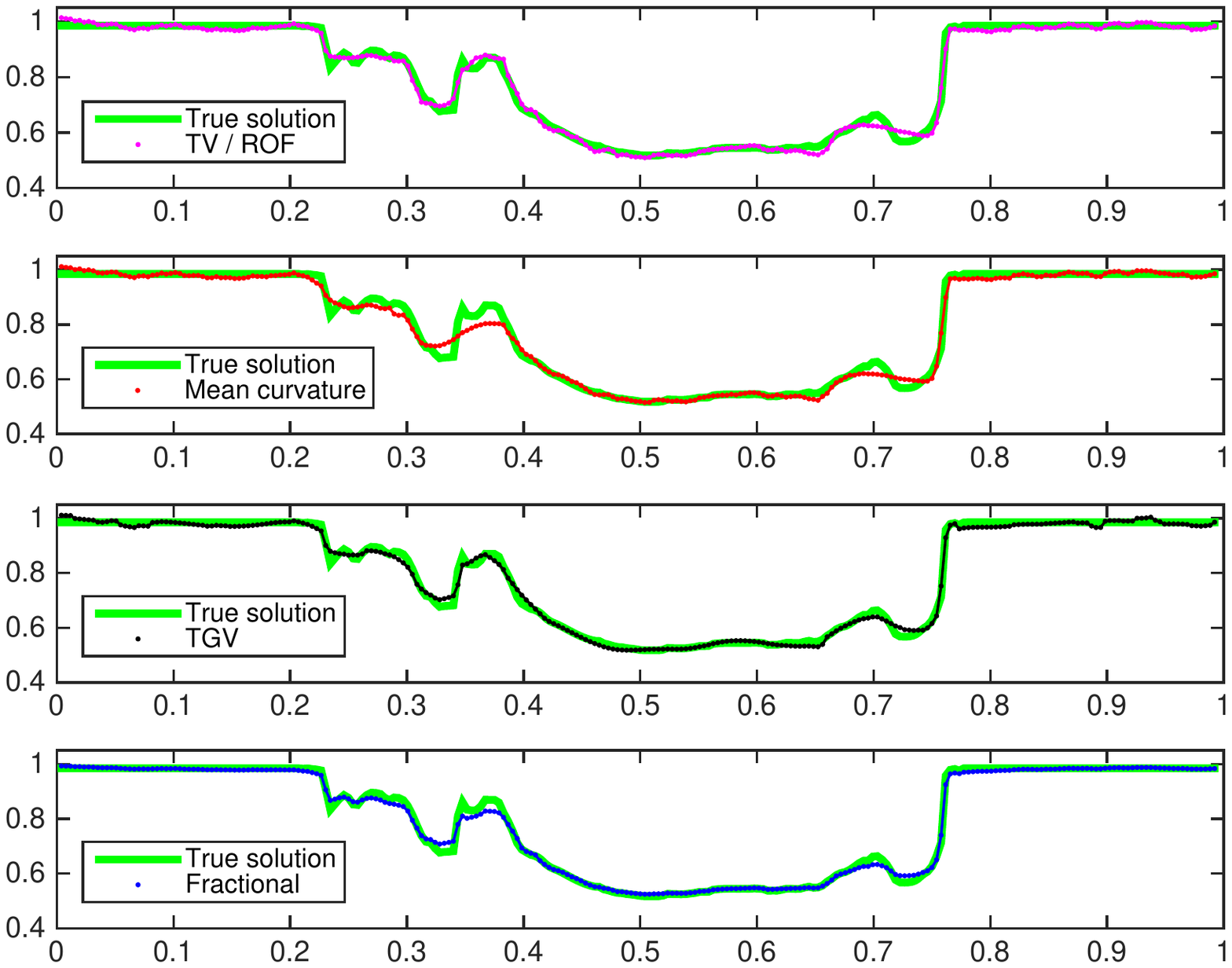}}\\
\subfigure[error $u-u^*$]{\label{fig4-2b}
\includegraphics[width=4.5in,height=2.5in]
{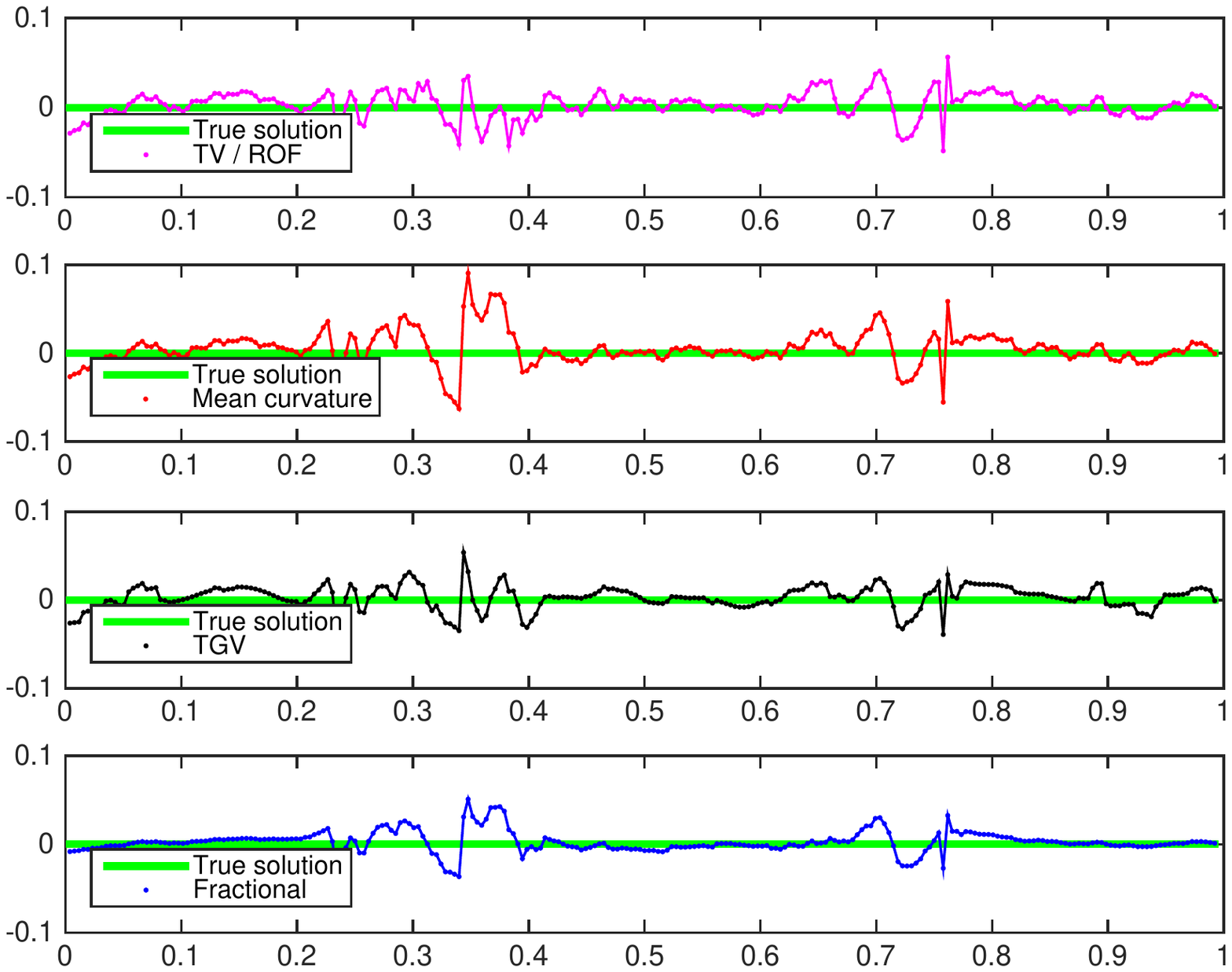}}
\end{center}
\caption{Comparison I --- The slice presentations of four restorations along a diagonal line in Fig. \ref{fig4-a}.}\label{figure-4-2}
\end{figure}

\section{Conclusions}
The total  $\alpha$-order variation regularization with fractional order derivative is potentially useful in modeling all imaging problems.
In this paper we analyzed rigorously a simple variational model using
total $\alpha$-order variation for image denoising.
One Split-Bregman based algorithm  and three   optimization-based algorithms were developed to solve the resulting image inverse problem. Instead of using the usual fixed and zero boundary conditions, we proposed a boundary regularization method to treat the fractional order derivatives.
Numerical results show that the PDE-based Split-Bregman algorithm (PDE-SB) performs
  similarly to (though more stably than) optimization-based approaches
while
 our boundary  regularization method is essential for getting
 good results for imaging denoising. Moreover,
  PDE-SB outperforms currently competitive variational models in terms of restoration quality.
 There are still outstanding issues with
our proposed model and algorithms; among others optimal selection of $\lambda$ is to be addressed.
 Future work will also consider generalization of this work to other image inverse problems.

\subsection*{Appendix: Proof of Theorem \ref{reg-problem02}}
To shorten the proof, let $\omega$ be a function in $W_1^\alpha(\Omega)$ to be specified shortly.
For $u\in W_1^\alpha(\Omega)\subset\text{BV}^\alpha(\Omega)$, we compute
the first-order G-derivative (Gateaux) of the functional
$J(u)$  in the direction $\omega$ by
\begin{equation}\label{Euler-Lagrange1}
\begin{split}
J'(u)\omega&=\lim\limits_{t\rightarrow 0}\frac{J(u+t\omega)-J(u)}{t}\
 =\lim\limits_{t\rightarrow 0}\frac{Q(u+t\omega)-Q(u)}{t}+
\frac{\lambda}{2}\frac{F(u+t\omega)-F(u)}{t}
\end{split}
\end{equation}
where $Q(u)=\frac{\mu}{2}\int_\Omega|\dd-\nabla^\alpha u+\frac{\pp}{\mu}|^2dx$ -- see (\ref{eqn_u2}).
Using the Taylor series w.r.t $t$ yields
\begin{equation}\label{Euler-Lagrange2}
J'(u)\omega= \int_\Omega \boldsymbol{W}\cdot\nabla^\alpha\omega dx+\lambda\int_\Omega (u-z)\;\omega dx
\end{equation}
with $\boldsymbol{W}=-\mu(\dd-\nabla^\alpha u+\frac{\pp}{\mu})$.
Recall that
\begin{equation}\label{Euler-Lagrange4}
\begin{split}
\int_\Omega \boldsymbol{W}\cdot\nabla^\alpha\omega dx
=
(-1)^n\int_\Omega \omega {}^{C}\diver^\alpha\boldsymbol{W} dx
- &\sum_{j=0}^{n-1}(-1)^j
\int_0^1  D^{\alpha-n+j}_{[a,b]}W_1\frac{\partial^{n-j-1}\omega(x)}{\partial x_1^{n-j-1}}\Big|_{x_1=0}^{x_1=1} dx_2\\
- &\sum_{j=0}^{n-1}(-1)^j
\int_0^1  D^{\alpha-n+j}_{[c,d]}W_2\frac{\partial^{n-j-1}\omega(x)}{\partial x_2^{n-j-1}}\Big|_{x_2=0}^{x_2=1} dx_1.
\end{split}
\end{equation}
where we note $n=2$ for $1<\alpha<2$. Next consider 2 case  studies.

\noindent
i). Given $u(x)\big|_{\partial\Omega}=b_1(x), \ \text{and }\
    \frac{\partial u(x)}{\partial n}\Big|_{\partial\Omega}=b_2(x)$,
    since $\big(u(x)+t\omega(x)\big)\big|_{\partial\Omega}
    =\big(u(x)\big)\big|_{\partial\Omega}=b_1(x)$ and $\frac{\partial \big(u(x)+t\omega(x)\big)}{\partial n}\Big|_{\partial\Omega}=\frac{\partial u(x)}{\partial n}\big|_{\partial\Omega}=b_2(x)$, it suffices to take $\omega\in\mathscr{C}_0^1(\Omega,\mathbb{R})$. Such a choice ensures
    $\frac{\partial^i \omega(x)}{\partial n^i}\Big|_{\partial\Omega}=0,
    i=0,1 \ \
    \Rightarrow \ \
    \frac{\partial^{n-j-1}\omega(x)}{\partial x_1^{n-j-1}}
    \Big|_{x_1=0  \ \mbox{or} \ 1} = \frac{\partial^{n-j-1}\omega(x)}{\partial x_1^{n-j-1}}
    \Big|_{x_2=0  \ \mbox{or} \ 1} = 0,\ n-j-1=0,1$. Hence
    equation (\ref{Euler-Lagrange1}) with (\ref{Euler-Lagrange2}) reduces to (\ref{fractionalSBEL}).

\noindent
ii). Keep $\omega\in W_1^\alpha(\Omega)$. Since $\frac{\partial^{n-j-1}\omega(x)}{\partial x_1^{n-j-1}}
    \Big|_{x_1=0  \ \mbox{or} \ 1} \not=0,\
     \frac{\partial^{n-j-1}\omega(x)}{\partial x_1^{n-j-1}}
    \Big|_{x_2=0  \ \mbox{or} \ 1} \not=0$, the boundary terms in equation (\ref{Euler-Lagrange4})
can only diminish if
    $$  D^{\alpha-n+j}_{[a,b]}W_1 \Big|_{x_1=0  \ \mbox{or} \ 1}
    = 0 \ \ \mbox{and} \ \
     D^{\alpha-n+j}_{[c,d]}W_2 \Big|_{x_2=0  \ \mbox{or} \ 1}
    = 0 \ \
    \Rightarrow \ \  D^{\alpha-n+j} \boldsymbol{W}\cdot n = 0, j=0,1.$$
The proof is complete. 

\begin{remark}
In imaging applications, the above first set i) of boundary conditions seems not reasonable, because one hardly knows a priori what $b_1, b_2$ should be.
The second set ii) of boundary conditions appears complicated which might be simplified as follows.\

From \cite[Section 2.3.6 pp.75]{IPodlubny1999}, if $W_1(x)$ has a sufficient number of continuous derivatives, then\\
$  D^{\alpha-n+j}_{[0,\;1]}W_1 \Big|_{x_1=0  \ \mbox{or} \ 1}
    = 0$ for any $\alpha\in (1,2)$ is equivalent to $\frac{\partial^j W_1}{\partial x_1^j} \Big|_{x_1=0  \ \mbox{or} \ 1}=0\;( j=0,1)$, i.e.,
    $$W_1\Big|_{x_1=0  \ \mbox{or} \ 1}=0\;\; \mbox{and}\;\; \frac{\partial W_1}{\partial x_1} \Big|_{x_1=0  \ \mbox{or} \ 1}=0.$$
    Indeed, if the $n$-th derivative of $u(x)$ is integrable in $[0,1]$, then $W_1\Big|_{x_1=0  \ \mbox{or} \ 1}=0$ is equivalent  to 
    $$u(x)\Big|_{x_1=0  \ \mbox{or} \ 1}=0\;\; \mbox{and}\;\;\frac{\partial u(x)}{\partial x_1} \Big|_{x_1=0  \ \mbox{or} \ 1}=0;$$
    on the other hand, $\frac{\partial^k u(x)}{\partial x_1^k} \Big|_{x_1=0  \ \mbox{or} \ 1}=0$ (for all $k=0,1,2$) are equivalent to $\frac{\partial^\alpha u(x)}{\partial x_1^\alpha} \Big|_{x_1=0  \ \mbox{or} \ 1}=0$ and $\frac{\partial^{1+\alpha} u(x)}{\partial x_1^{1+\alpha}} \Big|_{x_1=0  \ \mbox{or} \ 1}=0$, hence one has $\frac{\partial W_1}{\partial x_1} \Big|_{x_1=0  \ \mbox{or} \ 1}=0$. The derivations of $W_2$ are similar to those of $W_1$.
\end{remark}

{\bf For related papers or Matlab codes, see Authors'   page: \ \ \url{http:\\www.liv.ac.uk/~cmchenke}}

\end{document}